\declaretheorem[name=Theorem]{Thm}
\declaretheorem[within=section,name=Lemma]{Lem}
\declaretheorem[sibling=Lem,name=Definition]{Def}
\declaretheorem[sibling=Lem,name=Notation]{Not}
\declaretheorem[sibling=Lem,name=Proposition]{Prop}
\declaretheorem[sibling=Lem,name=Remark]{Rem}
\declaretheorem[sibling=Lem,name=Corollary]{Cor}
\newcommand{\argmin}{\operatornamewithlimits{argmin}}
\newcommand{\argmax}{\operatornamewithlimits{argmax}}
\newcommand{\Var}{\operatorname{Var}}
\newcommand{\card}{\#}
\newcommand{\calD}{\mathcal{D}}
\newcommand{\calG}{\mathcal{G}}
\newcommand{\calL}{\mathcal{L}}
\newcommand{\calT}{\mathcal{T}}
\newcommand{\calX}{\mathcal{X}}
\newcommand{\calY}{\mathcal{Y}}
\newcommand{\calZ}{\mathcal{Z}}
\newcommand{\RR}{\ensuremath{\mathbb{R}}}
\newcommand{\EE}{\ensuremath{\mathbb{E}}}
\providecommand*{\diff}%
        {\@ifnextchar^{\DIfF}{\DIfF^{}}}
\def\DIfF^#1{%
        \mathop{\mathrm{\mathstrut d}}%
                \nolimits^{#1}\gobblespace
}
\def\gobblespace{%
        \futurelet\diffarg\opspace}
\def\opspace{%
        \let\DiffSpace\!%
        \ifx\diffarg(%
                \let\DiffSpace\relax
        \else
                \ifx\diffarg\[%
                        \let\DiffSpace\relax
                \else
                        \ifx\diffarg\{%
                                \let\DiffSpace\relax
                        \fi\fi\fi\DiffSpace}
\newcommand{\indep}{\rotatebox[origin=c]{90}{$\models$}} 
\newcommand*\Let[2]{\State #1 $\gets$ #2}
\definecolor{lightgray}{gray}{0.9}
\renewcommand\paragraph{\@startsection{paragraph}{4}{\z@}%
                                    {1ex \@plus1ex \@minus.2ex}%
                                    {-1em}%
                                    {\normalfont\normalsize\bfseries}}
\providecommand{\algorithmname}{Algorithm}
\begin{document}

\title{Predictive Independence Testing,\\
Predictive Conditional Independence Testing,\\
and Predictive Graphical Modelling}

\author{Samuel Burkart\thanks{\url{samuel.burkart.16@ucl.ac.uk}}
}

\author{
Franz J.~Kir\'{a}ly\thanks{\url{f.kiraly@ucl.ac.uk}}
}

\affil{
Department of Statistical Science,
University College London,\newline
Gower Street,
London WC1E 6BT, United Kingdom
}

\thispagestyle{empty}
\maketitle

\begin{abstract}
Testing (conditional) independence of multivariate random variables is a task central to statistical inference and modelling in general - though unfortunately one for which to date there does not exist a practicable workflow. State-of-art workflows suffer from the need for heuristic or subjective manual choices, high computational complexity, or strong parametric assumptions.\\

We address these problems by establishing a theoretical link between multivariate/conditional independence testing, and model comparison in the multivariate predictive modelling aka supervised learning task. This link allows advances in the extensively studied supervised learning workflow to be directly transferred to independence testing workflows - including automated tuning of machine learning type which addresses the need for a heuristic choice, the ability to quantitatively trade-off computational demand with accuracy, and the modern black-box philosophy for checking and interfacing.\\

As a practical implementation of this link between the two workflows, we present a python package 'pcit', which implements our novel multivariate and conditional independence tests, interfacing the supervised learning API of the scikit-learn package. Theory and package also allow for straightforward independence test based learning of graphical model structure.\\

We empirically show that our proposed predictive independence test outperform or are on par to current practice, and the derived graphical model structure learning algorithms asymptotically recover the 'true' graph. This paper, and the 'pcit' package accompanying it, thus provide powerful, scalable, generalizable, and easy-to-use methods for multivariate and conditional independence testing, as well as for graphical model structure learning.
\end{abstract}
\newpage
\tableofcontents{}

\newpage{}

\newpage

\section{Introduction}

\subsection{Setting: testing independence}

The study of dependence is at the heart of any type of statistical analysis, and independence testing is an important step in many scientific investigations, be it to determine if two things are related, or to assess if an intervention had the desired effect, such as:

\begin{itemize}
\item When conducting market research, one might be interested in questions such as ``Are our advertisement expenditures independent of our profits?'' (hopefully not), or the more sophisticated version ``conditional on the state of the market, are advertisement expenditures independent of our profits?'', which, if found to be true, would mean we are unlikely to increase profits through an increase in our advertising budget (subject to the usual issue that inferring causality from data requires an intervention or instrument).
\item When collecting data for a medical study on the occurrence of an outcome $Y$, one might ask ``In the presence of data about \textit{attributes A} for the subjects, should we still collect data for \textit{attributes B}''. If Y is independent of \textit{attributes B} given \textit{attributes A}, additionally collecting information about \textit{attributes B} will not improve the knowledge of the state of $Y$ (subject to the usual issue that this conclusion is valid only for patients sampled in the same way from the same population).
\end{itemize}

The difficulty of the independence testing task crucially relies on whether the following two complications present:\\
\begin{itemize}
\item {\bf Multivariate independence testing.} This concerns the type of values which the involved variables, in the second example the values which \textit{attributes A} and \textit{attributes B} may take: if the domain of possible values consists either of a single number (continuous variable), or one class out of many (categorical variable), the hypothesis test is ``univariate'', and powerful methodology exists that deals well with most scenarios, subject to some constraints. Otherwise, we are in the ``multivariate hypothesis testing'' setting.
\item {\bf Conditional independence testing.} Whether there are conditioning random variables which are to be controlled for, in the sense of testing independence conditional on a possible third \textit{attributes C}. If so, we are in the ``conditional hypothesis testing'' setting.
\end{itemize}
For the task which is neither multivariate nor conditional, well-recognized and universally applicable hypothesis tests (such as the t-test or chi-squared test) are classically known.
The multivariate setting and the conditional setting are less well studied, and are lacking approaches which are general and universally accepted, due to difficulties in finding a suitable approach which comes with theoretical guarantees and is free of strong model assumptions. The setting which is both multivariate and conditional is barely studied.
The three major state-of-the-art approaches are density estimation, copula and kernel based methods. Most instances are constrained to specific cases or rely on subjective choices that are difficult to validate on real-world data. A more detailed overview of the state-of-art and background literature is given in Section~\ref{sec:background}.

\subsection{Predictive independence testing}
The methodology outlined in this paper will consider multivariate and conditional independence testing from a new angle. The underlying idea for the test is that if two random variables $X,Y$ are independent, it is impossible to predict $Y$ from $X$ - in fact it will be shown that these two properties are equivalent (in a certain quantitative sense). The same applies to conditional independence tests: two random variables $X,Y$ are conditionally independent given $Z$, if adding $X$ as predictor variable above $Z$ will not improve the prediction of $Y$. In both cases, the predictive hypothesis test takes the form of comparing a good prediction strategy against an optimal baseline strategy via a predictive loss function. By determining if losses stemming from two predictions are significantly different, one can then test statistically if a variable adds to the prediction of another variable (potentially in the presence of a conditioning set), which by the equivalence is a test for (conditional) independence.

\subsection{Graphical model structure learning}
Probabilistic graphical models are a concept that heavily relies on independence statements for learning and inference. Most structure learning algorithms to date are, as a result of the lack of a scalable conditional independence tests and additional combinatorial issues, constraint-based, or make heavy assumptions on underlying distributions of a sample. This paper will leverage the predictive independence test into a new routine to estimate the undirected graph for the distribution underlying a sample, based on conditional independence testing, allowing it to make only weak assumptions on the underlying distribution.

\subsection{Principal contributions}
The new approach to multivariate and conditional independence testing outlined in this paper improves concurrent methodology by deriving an algorithm that

\begin{itemize}
    \item features a principled model selection algorithm for independence testing by linking the field of independence testing to the field of predictive modelling, thus filling a gap in state-of-the-art methodology,
    \item additionally allowing independence testing to directly benefit from the well-understood and efficiently implemented theory for model selection and parameter tuning in predictive modelling,
    \item is comparatively fast and scalable on a wide variety of problems and
    \item deals with the multivariate and conditional independence testing task in a straightforward manner.
\end{itemize}

Additionally, an algorithm leveraging the newly derived test into a scalable independence testing-based graphical model structure learning algorithm is outlined, which overcomes issues in the field by offering a test for undirected graph structure learning that offers stringent methodology to control the number of type 1 errors in the estimated graph.

\subsection{Paper overview}
Section \ref{sec:background} will provide an overview of the most important tasks in and approaches to statistical independence testing and outline the issues in current methodology. Section \ref{sec:depend} will then propose and derive a novel approach to independence testing, the predictive conditional independence test (PCIT). After, section \ref{sec:gm} will introduce the concept of a graphical model and survey the current structure learning algorithms. Section \ref{sec:API} will then state the relevant algorithms of the Python implementation, which is outlined in section \ref{sec:package}. Section \ref{sec:experiments} provides performance statistics and examples for the conditional independence test as well as the graphical model structure learning algorithm. Lastly, section \ref{sec:conclusion} will describe the advantages of using the PCIT for independence testing and outline drawbacks and directions for further research.

\subsection*{Authors' contributions}
This manuscript is based on SB’s MSc thesis, submitted September 2017 at University College London and written under supervision of FK, as well as on an unpublished manuscript of FK which relates predictive model selection to statistical independence. The present manuscript is a substantial re-working of the thesis manuscript, jointly done by SB and FK.\\
FK provided the ideas for the independence tests in Section~\ref{sec:depend} and the usage of them for graphical models, SB and FK jointly conceived the ideas for the graphical model structure learning algorithm. Literature overview is due to SB with helpful pointers by FK. Python package and experiments are written and conducted by SB, partially based on discussions with FK.

\subsection*{Acknowledgements}
We thank Fredrik Hallgren and Harald Oberhauser for helpful discussions.\\
FH has briefly worked on a thesis about the same topic under the supervision of FK, before switching to a different topic. While FH did, to our knowledge, not himself make any of the contributions found in this paper, discussions with him insipred a few of them.\\
HO pointed FK in generic discussions about loss functions towards some prior results (elicitation of median, and elicitation as a concept as defined by~\citet{gneiting2007strictly}) which helped making some of the proven statements more precise (such as the Q-losses being faithful for the univariate real case).
HO also pointed out the work of~\citet{lopez2016revisiting}.\\
We thank Jan Limbeck for the suggestion of the term ``uninformed'' for describing uninformed predictors (in replacement of, e.g., ``naive'' and ``stupid'').\\
We thank~\citet{sen2017model} for making us aware of their recent work.\\
Section~\ref{Sec:badbaseline} has previously appeared in similar form in the manuscript by~\citet{gressmann2018probabilistic} which post-dates the earliest pre-print version of this manuscript, but pre-dates the current one. Due to the frequent use of permutation baselines in contemporary work, it has been copied almost verbatim while adapting it to the non-probabilistic setting.\\

FK acknowledges support by the Alan Turing Institute, under EPSRC grant EP/N510129/1. 
\section{Statistical independence testing}\label{sec:background}

\subsection{About (in)dependence}
Statistical (in)dependence is a property of a set of random variables central to statistical inference.
Intuitively, if a set of random variables are statistically independent, knowing the value of some will not help in inferring the values of any of the others.\\[0.1in]
Mathematically: let $X$, $Y$ and $Z$ be random variables taking values in $\calX$, $\calY$ and $\calZ$ respectively. As usual, we denote for $A\in\calX$ by $P(X \in A)$ the probability of $X$ taking a value in $A$, and for $A\in\calX, C\in\calZ$ by $P(X \in A| Z \in C):=P(X \in A, Z \in C)/P(Z \in C)$ be the conditional probability of $X$ taking a value in $A$ when $Z$ is known/observed to have taken a value in $C$.

\begin{Def}\label{def:margindep}
$X$ and $Y$ are called \textbf{marginally independent} (of each other) if for all $A \subseteq \calX$ and $B \subseteq \calY$ (where the below expression is defined) it holds that
$$P(X \in A, Y \in B) = P(X \in A)P(Y \in B).$$
\end{Def}

This formulation allows for $X$ and $Y$ to be defined over sets of random variables that are a mixture of continuous and discrete, as well as being univariate or multivariate (thus implicitly covering the case of multiple univariate random variables as well).

\begin{Def}\label{def:indep}
$X$ and $Y$ are called \textbf{conditionally independent} (of each other) given $Z$ if for all $A \subseteq \mathcal X, B \subseteq \mathcal Y,$ and $C \subseteq \mathcal Z$  (where the below expression is defined) it holds that
$$P(X \in A, Y \in B | Z \in C) = P(X \in A | Z \in C)P(Y \in B | Z \in C).$$
\end{Def}

For absolutely continuous or discrete $X$ and $Y$, Definition \ref{def:margindep} straightforwardly implies that marginal independence is equivalent to the joint distribution or mass function factorizing, i.e., it equals the product of the marginal distributions' probability or mass function. The analogue result also shows for the conditional case.\\

We would like to note that the mathematical definition independence is symmetric, i.e., having the property from Definition~\ref{def:margindep} is unaffected by interchanging $X$ and $Y$. In contrast, the intuitive motivation we gave at the beginning however, namely that knowing the value of $X$ does not yield any additional restriction regarding the value of $Y$, is non-symmetric. This non-symmetric statement of statistical independence is commonly phrased mathematically as $P(Y \in B| X \in A) = P(Y \in B)$ which is directly implied by Definition~\ref{def:margindep} (and that of the conditional).\\

This non-symmetric characterization of statistical independence is also morally at the heart of the predictive characterization which we later give in the supervised learning setting: $Y$ cannot be predicted from $X$, phrased in a quantitative way that connects checking of statistical independence to estimation of a supervised generalization error difference which eventually allows for a quantitatve testing of the hypothesis of independence.

\subsection{Statistical independence testing}

Testing whether two random variables are (conditionally) independent is one of the most common tasks in practical statistics (perhaps the most common one after summarization) and one of the most important topics in the theoretical study of statistical inference (perhaps the most important one).

The most frequently found types of quantitative certificates for statistical independence are phrased in the (frequentist) Neyman-Pearson hypothesis testing paradigm. In terms of the methodological/mathematical idea, the main distinction of independence tests is into parametric (where the data is assumed to stem from a certain model type and/or distribution) and non-parametric (''distribution-free'') tests.
The hypothesis tested usually takes one of the three following forms:

\subsubsection*{Marginal independence, $X \indep Y$}
Marginal independence of two random variables is the topos of much of classical statistics.
Well known classical test statistics for the continuous univariate case are the Pearson correlation coefficient, Kendall's $\tau$ and Spearman's $\rho$. For discrete variables, Pearson's $\chi$-squared statistic can be used. From theoretical results on these statistics' asymptotic or exact distribution, univariate independence tests may be derived. For $X$ being discrete and $Y$ being continuous, t-family tests or Wilcoxon family tests may be used.

Testing with at least one of $X$ or $Y$ being multivariate is much more difficult and less standard, current methodology will be outlined in the next section.

\subsubsection*{Conditional independence, $X \indep Y | Z$}
Testing for conditional independence is an inherently more difficult problem than marginal independence \citep{bergsma2004}. Hence, few viable options exist to date to test if two sets of variables are conditionally independent given a conditioning set. The most common instances are either based on parametric model assumptions (e.g., linear) or on binning $Z$ and $X$, then comparing if the distribution of $Y$ conditioned on $Z$ changes, when additionally conditioning on $X$.
Strategies for of conditional independence testing may demand much larger sample sizes than marginal independence tests due to the explicitly modelling sub-domains of the values which $Z$ takes.

\subsubsection*{Equality of distribution of two unpaired samples, $X_1 \stackrel{d}{=} X_2$,}
where $\stackrel{d}{=}$ indicates equality in underlying distribution.
An unpaired two-sample test for equality of distribution tests whether two unpaired i.i.d.~samples from $X_1$ and $X_2$ are sampled from the same underlying distributions. The connection to independence testing is not obvious but may be established by pairing each draw from $X_1$ or $X_2$ with a draw from a variable $Y$ taking values in $\{1,2\}$, indicating whether the draw was from $X_1$ or $X_2$, i.e., taking value $i$ if the draw was from $X_i$.

\subsection{The state-of-art in advanced independence testing}

The task of independence testing (in various settings) has been tackled from many different angles.
As soon as one considers observations that are multivariate, or the conditional task, there is not one universally agreed-upon method, but many different approaches. The most prominent ideas used for multivariate and/or conditional independent testing will be presented in this section together with their advantages and shortcomings.

\subsubsection*{Density estimation}
The classical approach. In case of existence, the joint probability density function contains all the necessary information about independence structures for a set of random variables in the multivariate case. While many univariate tests are based on or may be interpreted as being based on some sort of density estimation, for the multivariate and conditional case, density estimation is a difficult problem. One example is shown in~\cite{Dionisio2006}, where density estimation-based information-theoretical measures are used to conduct multivariate marginal independence tests.

\subsubsection*{Copulas}
An approach that is widely used in finance and risk management. Copulas are multivariate probability distributions with uniform marginals. To use copulas for independence tests, one transforms the marginals of a multivariate distribution into uniform distributions, and the resulting copula contains all information about the independence structure. A simple example using the empirical cdf can be found in \cite{genest2004}. Copulas have mainly been used for marginal multivariate independence tests, such as in \citep{Schweizer1981}. The task of testing for conditional independence has been attempted through the use of partial copulas \citep{bergsma2011}, however strong assumptions (linearity) are made on the type of relationship between the variables that are to be tested for independence and the conditioning set. Two-sample testing has largely remain unaddressed, since its application in the financial sector are less relevant. \cite{remillard2009} describes a two-sample test for the estimated copulas, and hence, the independence structures, but the test does not extend to a comparison of the joint distributions as a whole.

\subsubsection*{Kernel methods}
A relatively recent approach using optimization techniques in reproducing kernel Hilbert spaces to answer independence queries based on various test-statistics. Kernel methods were first used for marginal multivariate independence testing via a quantitative measure of dependence, the Hilbert-Schmidt Independence Criterion (HSIC), which is defined as the squared Hilbert Schmidt norm of the cross-covariance operator (the covariance between two difference spaces), see~\cite{Gretton2005}. \cite{Gretton2008} further expands on the theoretical foundations of the HSIC's distribution under the null, resulting in a hypothesis test for the independence of two variables $X$ and $Y$ using the HSIC criterion. \cite{Gretton2010} outlines a non-parametric test for independence, and demonstrates its performance on samples of random variables with up to three dimensions.\\
The conditional case is tackled in~\cite{Zhang2012}, which derives a test statistic based on conditional cross-covariance operators, for which the asymptotic distribution under the null-hypothesis $X  \indep Y | Z$ is derived. They state that the manual choice of kernel can affect type~II and, more importantly, type~I errors, especially when the dimensionality of the conditioning set is large. Additionally, they found that the performance of their method decreases in the dimensions of the conditioning set, constraining the set of problems for which the test is viable for.\\
As for two-sample testing, \cite{Baringhaus2004} derive a test based on the Euclidean inter-point distances between the two samples. A different approach to the same test statistics with the additional use of characteristic functions was made by~\cite{Alba2008}. While the difficulty of density estimation is reduced when using the empirical distributions, the data requirements tend to be much larger, while additionally imposing a sometimes taxing computational complexity. Both of these methods can be related to kernel functions, which was picked up in \cite{Gretton2012}, which proposed kernel-based two-sample test by deriving three multivariate tests for assessing the null-hypothesis that the distributions $p$ and $q$ generating two samples $X\sim p$, $Y \sim q$ are equal, $p = q$. The criterion introduced is the ``Maximum Mean Discrepancy'' (MMD) between $p$ and $q$ over a function space $\mathcal{F}$.
$$\mbox{MMD} = \sup \limits_{f \in \mathcal{F}}( \mathbb{E}_{X \sim p}[f(X)] - \mathbb{E}_{Y \sim q}[f(Y)])$$
That is, the maximum difference between the expected function values with respect to the function space (in their case, a Reproducing Kernel Hilbert Space).

\subsubsection*{Predictive independence testing: precursors}

There are a few instances in which independence testing has already been approached from the perspective of predictability.\\

Most prominently,~\citet{Lopez2017} present a two-sample test (on equality of distribution) based on abstract binary classifiers, e.g., random forests. The presented rationale is in parts heuristic and specific to the two-sample setting. Their main idea is as follows: if the null-hypothesis of the two samples being drawn from the same distribution is true, no classifier assigning to a data point the distribution from it was drawn should fare significantly better than chance, on an unseen test set.\\
The work of~\citet{Lopez2017} may be seen as a special case of our predictive independence test presented later, for $Y$ taking values in a two-element set, and the loss being the misclassification loss. It is probably the first instance in which the abstract inability to predict is explicitly related to independence, albeit somewhat heuristically, and via the detour of two-sample testing for equality in distribution. The predictive independence testing strategy of~\citet{Lopez2017} has recently been generalized to the case of conditional independence testing by~\citet{sen2017model}.\\
In earlier work,~\citet{sriperumbudur2009kernel} already relate the kernel two-sample test to a specific classifier, the Parzen window classifier, evaluated by the misclassification loss. Thus this ideas of~\citet{sriperumbudur2009kernel} may in turn be seen as a precursor of the later work of~\citet{Lopez2017} which abstracts the idea that any classifier could be used to certify for equality of the two samples.\\

In a parallel strain of work in economics literature, by~\citet{guha2016quantile} and~\citet{belloni2017quantile}, a close connection between quantile regression, conditional independence, and graphical modelling has been observed and exploited. Indeed many of the observations made here coincide with the more general correspondence established in our work, for the specific choice of linear models as predictors and quantile losses as convex loss functionals, compare Theorem~\ref{Thm:uninfind-regr}. While the choice of linear predictors prevents a general result characterizing (in)dependence in terms of predictability such as our Theorem~\ref{Thm:uninfind-regr}, the results obtained by~\citet{belloni2017quantile} contain reminiscent statements for the (restrictive) case that linear prediction functionals provide sufficient certificates.

\subsection{Issues in current methodology}

\subsubsection*{Density estimation}
While the probability density function is in some sense optimal for measuring dependence, since it contains all the information about the random variable, its estimation is a difficult task, which requires either strong assumptions or large amounts of data (or both). Furthermore, due to the curse of dimensionality, density estimation for 3 or more dimensions/variables is usually practically intractable, hence its practical usefulness is often limited to testing pairwise independence rather than testing for full (mutual) independence.

\subsubsection*{Copulas}
Leaving aside issues arising by practitioners misunderstanding the method (which had a strong contribution to the 2007/2008 financial crisis, but no bearance whatsoever on the validity of the method when applied correctly), copula-based independence testing is largely a heuristics driven field, requiring many subjective manual choices for estimation and testing. Above all, copula methods require a user to subjectively choose an appropriate copula from a variety of options, such as for example the survival copula, the multivariate Gaussian or the multivariate Student's t-copula \citep{cherubini2004}. Additionally, two-sample testing is to date largely unaddressed in the field.

\subsubsection*{Kernels}
As for the copula-based methods, kernels require many subjective and manual choices, such as the choice of kernel function, and its hyper-parameters. While in a theoretical setting, these choices are made by using the ground truth (and generating artificial data from it), in practice it is difficult to tune the parameters and make statements about the confidence in results. Additionally, the cost of obtaining the test statistic and its asymptotic distribution may be high for many of the proposed approaches. There are attempts at resolving these issues, \citep{Zaremba2013} and \cite{GrettonNips2012} outline heuristics and strategies to minimize the heuristic choices, and \cite{Gretton2009} and \cite{Chwialkowski2015} propose more computationally efficient strategies.\\[0.2in]

While all methods have their merits, and there is vast research on specific problems, providing specific solutions to the individual challenges, what is missing is an approach that is scalable and powerful not just for specific cases, but for the general case, that automatically finds good solutions to real problems, where the ground truth is not known and cannot be used for tuning. 
\section{Predictive independence testing}\label{sec:depend}
This section will explore the relationship between predictability and dependence in a set of variables. It will be shown that one can use supervised learning methodology to conduct marginal and conditional independence tests. This distinction between marginal and conditional dependence is not theoretically necessary (since the marginal case can be achieved by setting the conditioning set to the empty set), but is made to highlight specific properties of the two approaches. First, equivalence of independence statements and a specific supervised learning scenario will be shown. After, a routine leveraging this equivalence to test for conditional independence will be proposed.

\subsection{Mathematical setting}
The independence tests will be based on model comparison of supervised learning routines.

Supervised learning is the task where given i.i.d. samples $(X_1,Y_1),\dots, (X_N,Y_N)\sim (X,Y)$ taking values in $\calX\times \calY$, to find a prediction functional $f:\calX\rightarrow \calY$ such that $f(X_i)$ well approximates a target/label variables $Y_i$, where "well" is defined with respect to a loss function $L$ which is to be minimized in expectation.

\begin{Def}\label{Def:convex}
A (point prediction) \emph{loss functional} is an element of $[\calY\times \calY \rightarrow \RR]$, i.e., a function with range $\calY\times \calY$ and image $\RR$. By convention, the first argument will be considered the proposed/predicted value, the second the true value to compare to. By convention, a lower loss will be considered more beneficial.\\
A loss functional $L:\calY\times \calY \rightarrow \RR$ is called:
\begin{enumerate}
\itemsep-0.2em
\item[(i)] \emph{convex} if $L(.,y)$ is lower bounded and $\EE\left[L(Y,y)\right]\le L\left( \EE[Y],y\right)$ for any $\calY$-valued random variable $Y$ and any $y\in \calY$, and
\item[(ii)] \emph{strictly convex} if $L(.,y)$ is lower bounded and $\EE\left[L(Y,y)\right]\lneq L\left( \EE[Y],y\right)$ for any non-constant $\calY$-valued random variable $Y$ and any $y\in \calY$.
\end{enumerate}
\end{Def}

More formally, a good prediction functional possesses a small expected generalization error
$$\epsilon_L(f) := \mathbb{E}[L(f(X),Y)].$$
In usual practice and in our setting $f$ is estimated from training data
$$\calD = \{(X_i, Y_i)\}_{i = 1}^N,$$ hence it is in fact a random object.
The generalization error may be estimated from test data
$$\calT = \{(X_i^*, Y_i^*)\}_{i = 1}^M\quad\mbox{where}\;(X^*_1,Y^*_1),\dots,(X^*_M,Y^*_M)\sim (X,Y),$$
which we assume i.i.d.~and independent of the training data $\calD$ and $f$, as
$$\widehat{\epsilon}_L(f) = \sum_{i=1}^M[L(f(X_i^*),Y_i^*)].$$
Independence of training data $\calD$ and test data $\calT$ is required to avoid bias of the generalization error estimate (``over-fitting''). In the following sections, when estimating the generalization error, it will be assumed that $f$ is independent of the test data set, and we condition on $\calD$, hence treat $f$ as fixed (i.e., conditional on $\calD$).

In the following, we assume that $\calY$ is a vector space, and $\calY \subseteq \RR^q$, which naturally includes the setting of supervised regression. However, it also includes the setting of supervised classification through a special choice of loss:

\begin{Rem}\label{Rem:class}
Deterministic classification comprises the binary case of $\calY = \{-1,1\}$, or more generally $\calY$ being finite.
In neither case are additions or expectations defined on $\calY$, as $\calY$ is just a discrete set.
Supervised classification algorithms in the deterministic case are asked to produce a prediction functional $f:\calX\rightarrow \calY$, and
are usually evaluated by the misclassification loss or 0/1-loss
$$L:\calY\times \calY\rightarrow \RR\;:\;(y,y_*)\mapsto \mathbbm{1}[y \neq y_*],$$
where $\mathbbm{1}[y \neq y_*]$ is the indicator function which evaluates to $0$ if $y=y_*$ and to $1$ otherwise.
Hence, Definition~\ref{Def:convex} of convexity does not directly apply to the misclassification loss as expectations are not defined on $\calY$.\\
However, by allowing all algorithms to make predictions which are $\calY$-valued random variables (instead of constants), one finds that
$$\EE[L(Y,y)] = L'(p,y)\quad\mbox{where}\;L':(p_Y,y)\mapsto 1-p_Y(y)$$
and $p_Y$ is the probability mass function of $Y$.
Identifying $\calY$-valued random variables with their probability mass functions, one may replace
\begin{enumerate}
\itemsep-0.2em
\item[(i)] $\calY$ by the corresponding subset $\calY'$ of $\RR^{\# \calY -1}$ which is the set of probability vectors (the so-called probability simplex).
For example, $\calY = \{-1,1\}$ would be replaced by $[0,1]$.
\item[(ii)] the misclassification loss by the probabilistic, convex (but not strictly convex) loss $L':\calY'\times \calY'\rightarrow \RR$, where the observations in the second $\calY'$ argument are always pmf describing constant random variables.
\end{enumerate}
A further elementary calculation (see appendix~\ref{app:classifdet} for an explicit derivation) shows that $L'$ is always minimized by making deterministic predictions:
$$\underset{p\in\calY'}{\argmin}\; \EE[L'(p,Y)] \cap [\calY\rightarrow \{0,1\}]\;\mbox{is non-empty,}$$
i.e., the $L'$-best classifier may always be chosen to be a deterministic one, i.e., one that always predicts a probability mass functions with probabilities $0$ or $1$.\\
This exhibits deterministic classification
as a special case of probabilistic classification with a special choice of loss function.
\end{Rem}

\subsection{Elicitation by convex losses}

Loss functionals are canonically paired with summary statistics of distributions:

\begin{Def}\label{Def:elicit}
Let $L:\calY\times \calY\rightarrow \RR$ be a convex loss functional.
For a $\calY$-valued random variable $Y$, we define
$$\mu_L([Y]):= \underset{y\in\calY}{\argmin} \EE\left[L(y,Y)\right]$$
where $[Y]$ denotes $Y$ as a full object (i.e., a measurable function), rather than its value.\\
Following~\citet{gneiting2007strictly}, we term the functional which maps $\calY$-valued distributions to sub-sets of $\calY$ the \emph{eliciting functional} associated to $L$.
We call $\mu_L([Y])$ the \emph{summary} of $Y$ \emph{elicited by} $L$.
\end{Def}

Note that well-definedness, i.e., existence of the minimizer, is ensured by convexity of $L$ (and the implied continuity).
If $L$ is strictly convex, there is furthermore a unique minimizer, in which case we will exchangeably consider $\mu_L$ to be a functional with target $\calY$.

Well-known examples of elicited summaries are given in the following:

\begin{Lem}\label{Lem:elicited}
The following relations between losses and elicited statistics of real-valued random variables hold:
\begin{enumerate}
\itemsep-0.2em
\item[(i)] the (strictly convex) squared loss $L:(y,y_*)\mapsto (y-y_*)^2$ elicits the mean. That is, $\mu_L([Y]) = \EE [Y]$ for any $\RR^n$-valued random variable $Y$.
\item[(ii)] the (convex but not strictly convex) absolute loss $L:(y,y_*)\mapsto |y-y_*|$ elicits the median(s). That is, $\mu_L([Y]) = \mbox{median} [Y]$ for any $\RR$-valued random variable $Y$.
\item[(iii)] the (convex but not strictly convex) quantile-loss (or short: Q-loss) $L(y,y_*)=\alpha\cdot m(y_*,y) + (1-\alpha)\cdot m(y,y_*)$, with $m(x,z)=\min(x-z,0)$, elicits the $\alpha$-quantile(s).
            That is, $\mu_L([Y]) = F^{-1}_Y(\alpha)$ for any $\RR$-valued random variable $Y$, where $F^{-1}_Y:[0,1] \rightarrow P(\RR)$ is the set-valued inverse c.d.f.~of $Y$ (with the convention that the full set of inverse values is returned at jump discontinuities rather than just an extremum).
\end{enumerate}
\end{Lem}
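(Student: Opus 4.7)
The plan for each part is to compute $\varphi(y):=\EE[L(y,Y)]$ as a function of the prediction $y$ and then locate the minimizers by elementary convex analysis --- either by direct calculation, or via (sub)differentiation where the loss has kinks.

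For (i), in the scalar case the bias--variance identity $\EE[(y-Y)^2]=(y-\EE[Y])^2+\Var(Y)$ immediately pins down the unique minimizer as $y=\EE[Y]$; the same identity establishes strict convexity of the squared loss, so $\mu_L$ can consistently be viewed as a single-valued functional. For $\RR^n$-valued $Y$ the loss splits as $L(y,y_*)=\sum_{k=1}^n (y^{(k)}-y_*^{(k)})^2$, so the minimization decouples and the scalar result applied coordinatewise gives $\mu_L([Y])=\EE[Y]$.

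For (ii) and (iii) the tool I would use throughout is the Fubini identity
\[
\EE[(Y-y)_+]=\int_y^\infty (1-F(t))\,\diff t,\qquad \EE[(y-Y)_+]=\int_{-\infty}^y F(t)\,\diff t,
\]
where $F$ is the c.d.f.~of $Y$. For (ii), summing both sides yields $\EE[|y-Y|]=\int_y^\infty(1-F(t))\,\diff t+\int_{-\infty}^y F(t)\,\diff t$, a convex function of $y$ whose right and left derivatives are $2F(y)-1$ and $2F(y^-)-1$; the subdifferential contains $0$ precisely when $F(y^-)\le \tfrac12 \le F(y)$, which is exactly the set of medians of $Y$. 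For (iii), once one has the Q-loss in its pinball form (see below), a fully analogous calculation gives
\[
\EE[L(y,Y)]=\alpha\int_y^\infty(1-F(t))\,\diff t+(1-\alpha)\int_{-\infty}^y F(t)\,\diff t,
\]
with left/right derivatives $F(y^-)-\alpha$ and $F(y)-\alpha$, so $0$ lies in the subdifferential iff $F(y^-)\le\alpha\le F(y)$, which is precisely $y\in F^{-1}_Y(\alpha)$ under the set-valued inverse c.d.f.~convention stated in the lemma.

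The one conceptual/bookkeeping subtlety I foresee is in (iii): taken literally, $m(x,z)=\min(x-z,0)$ makes $L$ non-positive and unbounded below (in violation of Definition~\ref{Def:convex}), so to reach the stated conclusion I would read the intended loss as the standard pinball form $L(y,y_*)=\alpha(y_*-y)_+ + (1-\alpha)(y-y_*)_+$ (equivalently, $m(x,z)=\max(x-z,0)$) and make this normalization explicit at the start. Otherwise the only care needed is in handling plateaus and jumps of $F$ so that the subdifferential analysis lines up with the set-valued $F^{-1}_Y$ convention, which is the same issue that produces the interval of medians in (ii); note also that (ii) is the special case $\alpha=\tfrac12$ of (iii) up to a factor of $2$, so writing (iii) carefully essentially subsumes (ii).
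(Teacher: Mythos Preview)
Your proposal is correct and for parts (i) and (ii) essentially matches the paper: the paper also uses the bias--variance identity for (i) (Appendix~B.1) and deduces (ii) as the special case $\alpha=\tfrac12$ of (iii).

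For (iii) your route differs somewhat from the paper's. The paper (Appendix~B.2) first assumes $Y$ is absolutely continuous with density $p$, writes $\EE[L_\alpha(y^*,Y)]$ as two integrals against $p$, differentiates in $y^*$ to obtain the first-order condition $F(y^*)=\alpha$, checks the second derivative $p(y^*)\ge 0$, and then argues informally that for general $Y$ the sign of small finite differences replaces the derivative. Your approach via the tail-integral (Fubini) representation $\EE[(Y-y)_+]=\int_y^\infty(1-F)$, $\EE[(y-Y)_+]=\int_{-\infty}^y F$, followed by left/right derivatives and the subdifferential condition $F(y^-)\le\alpha\le F(y)$, is a genuinely cleaner alternative: it works directly with the c.d.f.~without ever invoking a density, handles the non-absolutely-continuous case and the set-valued inverse $F^{-1}_Y$ uniformly rather than as an afterthought, and makes the convexity and the full minimizing set transparent. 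The paper's density-based computation is perhaps more familiar to readers used to first-order optimality conditions, but your version is more rigorous for the general $Y$ stated in the lemma.

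Your observation about the sign in the definition of $m$ is well taken: with $m(x,z)=\min(x-z,0)$ as written, $L\le 0$ and is unbounded below, so the intended loss is indeed the pinball form $\alpha(y_*-y)_+ + (1-\alpha)(y-y_*)_+$ (equivalently $m(x,z)=\max(x-z,0)$). The paper's own Appendix~B.2 computation tacitly uses this corrected form, so making the normalization explicit as you propose is the right call.
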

\begin{proof}
(i) after substitution of definition, the claim is equivalent to the statement to the mean being the minimizer of squared distances. A more explicit proof is given in Appendix~\ref{app:sqloss}.\\
(ii) follows, by setting $\alpha = \frac{1}{2}$, from (iii).\\
(iii) This is carried out in Appendix~\ref{app:Qloss}.
\end{proof}

In the supervised setting, the best possible prediction functional can be exactly characterized in terms of elicitation:

\begin{Prop}\label{Prop:bestpred}
Let $L$ be a (strictly) convex loss. Then, it holds that
$$\underset{f\in [\calX\rightarrow \calY]}{\argmin}\; \varepsilon_L(f) = \left[x\mapsto \mu_L[Y|X=x]\right].$$
That is, the best possible prediction as measured by $L$ is predicting the statistic which $L$ elicits from the conditional random variable $Y|X=x$.
\end{Prop}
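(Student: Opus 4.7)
The proof is a standard "tower property then pointwise minimization" argument, leveraging the fact that the eliciting functional $\mu_L$ is \emph{defined} as the pointwise minimizer.

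My plan is to start by applying the law of total expectation to decompose the generalization error conditionally on $X$:
\[
\varepsilon_L(f) = \EE[L(f(X),Y)] = \EE_X\bigl[\EE[L(f(X),Y)\mid X]\bigr].
\]
Since $\EE_X$ is an expectation of a non-negative (after shifting by the lower bound guaranteed by convexity) quantity, and since the inner conditional expectation depends on $f$ only through the value $f(X)$, we can argue that minimizing the outer expectation reduces to minimizing the inner conditional expectation pointwise for $P_X$-almost every $x\in\calX$. That is, for each fixed $x$, we want to choose the value $f(x)\in\calY$ minimizing $y\mapsto \EE[L(y,Y)\mid X=x]$.

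The key observation is that, by Definition~\ref{Def:elicit} applied to the $\calY$-valued random variable $Y\mid X=x$, the set of minimizers of $y\mapsto \EE[L(y,Y)\mid X=x]$ is exactly $\mu_L([Y\mid X=x])$. Convexity of $L$ ensures this set is non-empty (by the lower boundedness and standard existence-of-minimizer arguments sketched around Definition~\ref{Def:elicit}); strict convexity ensures it is a singleton, in which case $x\mapsto\mu_L([Y\mid X=x])$ is an actual $\calY$-valued function rather than a set-valued map. Substituting this pointwise choice back into the tower-decomposed expression gives the claimed minimizer.

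The main obstacle I anticipate is a technical one rather than a conceptual one: the pointwise-minimization step implicitly invokes a measurable selection (so that $x\mapsto \mu_L([Y\mid X=x])$ defines a measurable prediction functional and hence lies in the feasible set $[\calX\to\calY]$) together with a Fubini/interchange justification for "minimize inside the outer expectation." For the strictly convex case the uniqueness of the minimizer gives a well-defined function and measurability follows from standard measurable-maximum theorems applied to the jointly measurable map $(x,y)\mapsto \EE[L(y,Y)\mid X=x]$; for the merely convex case one must appeal to a measurable-selection result (e.g.\ Kuratowski--Ryll-Nardzewski) to pick a measurable representative out of the set $\mu_L([Y\mid X=x])$. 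Beyond that, the argument is essentially a one-line reduction: "the best function-valued predictor is the pointwise-best value, which by definition of $\mu_L$ is the elicited summary of the conditional distribution."
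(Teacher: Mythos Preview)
Your proposal is correct and follows essentially the same approach as the paper: condition on $X$, use the definition of $\mu_L$ to get $\EE[L(\varpi(X),Y)\mid X]\le \EE[L(f(X),Y)\mid X]$ for the candidate $\varpi:x\mapsto \mu_L[Y\mid X=x]$, then take total expectations. The paper's version is terser and simply asserts that $\varpi$ is ``well-defined'' without entering into the measurable-selection considerations you raise; your discussion of those technicalities is a valid elaboration but not a different proof strategy.
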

\begin{proof}
The prediction functional $\varpi: x\mapsto \mu_L[Y|X=x]$ is well-defined, hence it suffices to show that $\varepsilon_L(f)\ge \varepsilon_L(\varpi)$
for any prediction functional $f:\calX\rightarrow \calY$.\\
Now by definition of $\mu_L$, it holds that
$$\EE \left[L(\varpi(X),Y) |X\right]\le \EE \left[L(f(X),Y) |X\right].$$
Taking total expectations yields the claim.
\end{proof}

Intuitively, the best prediction functional, as measured by a convex loss $L$, always predicts the statistic elicited by $L$ from the conditional law $[Y|X=x]$.

\subsection{Predictive uninformedness}\label{sec:depend.uninformed}

We will now introduce the notion of an uninformed baseline which will act as a point of comparison.

\begin{Def}
A prediction functional $f:\calX \rightarrow \calY$ is called uninformed if it is a constant functional, i.e., if $f(x) = f(y)$ for all $x,y\in\calX$. We write $u_\alpha$ for the uninformed prediction functional $u_\alpha:x\mapsto \alpha$.
\end{Def}

We will show that, for a given loss function, there is one single uninformed baseline that is optimal.

\begin{Lem}
\label{Lem:bestun}
Let $L$ be a (strictly) convex loss, let $\mu:= \mu_L([Y])$ be the/a statistic elicited by $L$ (see Definition~\ref{Def:elicit}).
Then, the following quantities are equal:
\begin{enumerate}
\itemsep-0.2em
\item[(i)] $\inf \{\varepsilon_L(f)\;:\; f\mbox{ is an uninformed prediction functional}\}$
\item[(ii)] $\varepsilon_L(u_\mu)$
\end{enumerate}
That is, $u_\mu$ is achieves the lowest possible ($L$-)loss amongst uninformed prediction functionals \emph{and} prediction strategies.
\end{Lem}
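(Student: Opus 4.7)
The plan is to show that minimising $\varepsilon_L$ over the set of uninformed predictors reduces, essentially tautologically, to the one-dimensional optimisation problem that defines $\mu_L([Y])$ in Definition~\ref{Def:elicit}.

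First, I would parameterise: every uninformed prediction functional is of the form $u_\alpha\colon x\mapsto \alpha$ for a unique $\alpha\in \calY$, so the set of uninformed prediction functionals is in bijection with $\calY$ via $\alpha\mapsto u_\alpha$. Next, since $u_\alpha(X)=\alpha$ identically, the generalization error simplifies to
$$
\varepsilon_L(u_\alpha)=\EE\bigl[L(u_\alpha(X),Y)\bigr]=\EE\bigl[L(\alpha,Y)\bigr],
$$
which depends on $\alpha$ only, not on the marginal distribution of $X$. Consequently,
$$
\inf\{\varepsilon_L(f)\,:\,f\text{ uninformed}\}\;=\;\inf_{\alpha\in\calY}\EE\bigl[L(\alpha,Y)\bigr].
$$

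Now I would invoke Definition~\ref{Def:elicit}: by construction $\mu=\mu_L([Y])=\argmin_{\alpha\in\calY}\EE[L(\alpha,Y)]$, and (strict) convexity of $L$ together with the lower-boundedness built into Definition~\ref{Def:convex} guarantees that this argmin is non-empty (and that $\varepsilon_L(u_\mu)$ is finite). Substituting $\alpha=\mu$ into the preceding identity yields $\inf_\alpha \EE[L(\alpha,Y)]=\EE[L(\mu,Y)]=\varepsilon_L(u_\mu)$, which is precisely (i) = (ii).

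There is no real obstacle here; the argument is essentially a rewriting. The only point requiring a line of justification is that the infimum in (i) is attained, and this is immediate from the well-definedness of the eliciting functional noted after Definition~\ref{Def:elicit}. In the strictly convex case, $\mu$ is unique, so $u_\mu$ is the unique optimal uninformed predictor; otherwise, any element of the argmin set yields an equally good uninformed predictor, all attaining the common value $\varepsilon_L(u_\mu)$.
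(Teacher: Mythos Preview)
Your proof is correct and follows essentially the same approach as the paper: both reduce $\varepsilon_L(u_\alpha)$ to $\EE[L(\alpha,Y)]$ and then invoke the definition of $\mu=\mu_L([Y])$ as the minimizer of $\alpha\mapsto \EE[L(\alpha,Y)]$. Your write-up is somewhat more detailed (explicitly discussing attainment of the infimum and the strictly-convex uniqueness case), but the core argument is identical.
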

\begin{proof}
Note that $\varepsilon (u_\alpha) = \EE[L(\alpha,Y)]$.
It follows hence by definition of $\mu$ that $\varepsilon(u_\mu)\le \varepsilon(u_\alpha)$ for any (constant) $\alpha\in \calY$. I.e., $u_\mu$ is the best uninformed prediction \emph{functional}.
\end{proof}

Lemma~\ref{Lem:bestun} motivates the definition of the best uninformed predictor:

\begin{Def}\label{Def:bestun}
We call $u_\mu$, as defined in Lemma~\ref{Lem:bestun}, the ($L$-)best uninformed predictor (even though it may not be unique, the choice, when possible, will not matter in what follows).\\
We call a prediction functional ($L-$)better-than-uninformed if its expected generalization loss is strictly smaller than of the ($L-$)best uninformed predictor. More formally, a prediction functional $f$ is $L$-better-than-uninformed if $\varepsilon_L(f)\lneq \varepsilon_L(u_\mu)$.
\end{Def}

For convenience, we further introduce some mathematical notation for best predictors:

\begin{Not}
Let $L$ be a (strictly) convex loss. We will write:
\begin{enumerate}
\itemsep-0.2em
\item[(i)] $\varpi^{(L)}_Y:=\left[x\mapsto \mu_L[Y]\right]$ for the/a ($L$-)best uninformed predictor as defined in Definition~\ref{Def:bestun}.
\item[(ii)] $\varpi^{(L)}_{Y|X}:=\left[x\mapsto \mu_L[Y|X=x]\right]$ for the/a ($L$-)best predictor as considered in Proposition~\ref{Prop:bestpred}.
\end{enumerate}
$\varpi^{(L)}_Y$ and $\varpi^{(L)}_{Y|X}$ are unique when $L$ is strictly convex, as per the discussion after Definition~\ref{Def:elicit}.
When multiple choices are possible for the minimizer, i.e., if $L$ is convex but not strictly convex, an arbitrary choice may be made (not affecting subsequent discussion).
The superscript $L$ may be omitted in situations where the loss is clear from the context.
\end{Not}

An important fact which we will use in testing is that if a better-than-uninformed prediction functional exists, then $\varpi_{Y|X}$ is an example:

\begin{Prop}
\label{Prop:uninfind}
Fix a (strictly) convex loss. The following are equivalent:
\begin{enumerate}
\itemsep-0.2em
\item[(i)] $\varpi_{Y|X}$ is $L$-better-than-uninformed.
\item[(ii)] $\varpi_{Y|X}$ is not $L$-uninformed.
\item[(iii)] There is an $L$-better-than-uninformed prediction functional.
\end{enumerate}
Note that equivalence of (i) and (ii) is not trivial, since there are prediction functionals which are not better-than-uninformed but not uninformed.
\end{Prop}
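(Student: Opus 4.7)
The plan is to prove the three implications cyclically, leaning on Proposition~\ref{Prop:bestpred} (which identifies $\varpi_{Y|X}$ as the global $\varepsilon_L$-minimizer) and Lemma~\ref{Lem:bestun} (which identifies $u_\mu$ as the $\varepsilon_L$-minimizer among uninformed predictors) as the two main engines.

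Two of the three implications are essentially free. For (i)$\Leftrightarrow$(iii): (i)$\Rightarrow$(iii) is witnessed by $\varpi_{Y|X}$ itself, while for (iii)$\Rightarrow$(i) Proposition~\ref{Prop:bestpred} gives $\varepsilon_L(\varpi_{Y|X}) \le \varepsilon_L(f)$ for any $f$, so any better-than-uninformed $f$ forces $\varpi_{Y|X}$ to be better-than-uninformed as well. For (i)$\Rightarrow$(ii) I would argue by contraposition: if $\varpi_{Y|X}$ were some constant functional $u_\alpha$, Lemma~\ref{Lem:bestun} would force $\varepsilon_L(\varpi_{Y|X}) = \varepsilon_L(u_\alpha) \ge \varepsilon_L(u_\mu)$, contradicting (i).

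The delicate direction is (ii)$\Rightarrow$(i), for which I would again use contraposition. Assuming $\varpi_{Y|X}$ is not better-than-uninformed yields $\varepsilon_L(\varpi_{Y|X}) \ge \varepsilon_L(u_\mu)$; combined with Proposition~\ref{Prop:bestpred} applied to $f = u_\mu$, this becomes equality. Expanding both generalization errors as iterated expectations $\EE_X\bigl[\EE[L(\cdot,Y)\mid X]\bigr]$ and observing that the pointwise inequality $\EE[L(\varpi_{Y|X}(x),Y)\mid X=x] \le \EE[L(u_\mu(x),Y)\mid X=x]$ holds for every $x$ by the argmin definition of $\varpi_{Y|X}(x) = \mu_L[Y\mid X=x]$, equality of the integrals forces $P_X$-a.s.\ equality of the integrands. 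Strict convexity, which makes the argmin unique, then forces $\mu_L[Y\mid X=x] = \mu_L[Y]$ for $P_X$-almost every $x$, so $\varpi_{Y|X}$ coincides with the uninformed predictor $u_\mu$ up to a $P_X$-null set.

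The main obstacle is precisely the last step's tension between ``$P_X$-a.s.\ constant'' and the pointwise ``constant on $\calX$'' demanded by the formal definition of uninformed. I would resolve this by exploiting the arbitrariness of $\varpi_{Y|X}$ on $P_X$-null sets to pick a genuinely constant representative, or equivalently by reading the definition of uninformed modulo $P_X$-a.s.\ equivalence (which is natural because $\varepsilon_L$ is insensitive to such modifications anyway). A secondary point worth flagging is that uniqueness of the elicited summary is what makes the final step go through, so the parenthetical ``(strictly)'' in the hypothesis is doing essential work for the (ii)$\Rightarrow$(i) direction; in the merely convex case one would need to phrase ``uninformed'' in terms of the set-valued eliciting functional rather than a specific representative.
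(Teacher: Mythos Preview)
Your proposal is correct and follows essentially the same route as the paper: both arguments pivot on Proposition~\ref{Prop:bestpred} (optimality of $\varpi_{Y|X}$) and Lemma~\ref{Lem:bestun} (optimality of $u_\mu$ among constants). The paper's proof is in fact terser than yours, dispatching (i)$\Leftrightarrow$(ii) in two sentences without isolating the role of strict convexity or the almost-sure-versus-everywhere issue.

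Where you add value is precisely in the direction you flag as delicate. The paper's treatment of (ii)$\Rightarrow$(i) essentially asserts that the inequality $\varepsilon_L(\varpi_{Y|X})\le \varepsilon_L(u_\mu)$ is strict unless $\varpi_{Y|X}$ is uninformed, without spelling out why equality of the expectations forces $\varpi_{Y|X}$ to be constant. Your argument---push the equality down to $P_X$-a.s.\ equality of the conditional inner expectations via the pointwise inequality, then invoke uniqueness of the elicited summary under strict convexity---is exactly what is needed to close that gap, and your remark that the ``(strictly)'' hypothesis is load-bearing here is well taken. Your handling of the null-set discrepancy is also appropriate and consistent with the paper's convention (stated after Definition~\ref{Def:bestun}) that representatives of $\varpi_{Y|X}$ may be chosen freely.
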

\begin{proof}
The equivalence of (i) and (iii) follows directly from Lemma~\ref{Lem:bestun}, and noting that (i) implies (iii).\\
(i)$\Leftrightarrow$(ii): By Proposition~\ref{Prop:bestpred}, $\varepsilon \left(\varpi_{Y|X}\right)\le \varepsilon (f)$ for any $f$, in particular also any uninformed $f$. By Lemma~\ref{Lem:bestun} and the above, the inequality is strict if and only if $\varpi_{Y|X}$ is better-than-uninformed.
\end{proof}

\subsection{Statistical dependence equals predictability}\label{sec:depend.unconditional}

We continue with a result that relates - or more precisely, equates - better-than-uninformed predictability with statistical dependence.
As such, it shows that the choice of constant functions as a proxy for uninformedness was canonical.

We start with the more intuitive direction of the claimed equivalence:

\begin{Prop}\label{Prop:predind}
Let $X,Y$ be random variables taking values in $\calX,\calY$. Let $L$ be any convex loss functional.
If $X,Y$ are statistically independent, then:
There exists no $L$-better-than-uninformed prediction functional for predicting $Y$ from $X$.
More precisely, there is no prediction functional $f:\calX\rightarrow \calY$ and no convex loss $L:\calY\times \calY\rightarrow \RR$ such that $f$ is $L$-better-than-uninformed.
\end{Prop}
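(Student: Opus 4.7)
The plan is to combine Proposition~\ref{Prop:bestpred} (the conditional elicitation functional is the best predictor) with Proposition~\ref{Prop:uninfind} (equivalence of \emph{better-than-uninformed} and \emph{non-uninformed} for $\varpi_{Y|X}$). The overall strategy is: show that under independence the best possible predictor $\varpi_{Y|X}$ degenerates to a constant function, i.e.\ is uninformed; then the chain of inequalities $\varepsilon_L(f)\ge \varepsilon_L(\varpi_{Y|X}) = \varepsilon_L(u_\mu)$ rules out any better-than-uninformed $f$.

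First I would unpack the definition: if $X$ and $Y$ are statistically independent, then for every $x\in\calX$ (in the appropriate almost-sure sense) the conditional distribution $[Y\mid X=x]$ coincides with the marginal distribution $[Y]$. This is immediate from Definition~\ref{def:margindep}: $P(Y\in B\mid X\in A) = P(Y\in B)$ for all admissible $A,B$, which lifts to equality of the conditional and marginal laws. Consequently, for \emph{any} convex loss $L$, the elicited summary satisfies
\[
\mu_L\bigl([Y\mid X=x]\bigr) \;=\; \mu_L\bigl([Y]\bigr),
\]
and therefore $\varpi^{(L)}_{Y|X}(x) = \mu_L([Y])$ is constant in $x$. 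By definition this means $\varpi^{(L)}_{Y|X}$ is an uninformed prediction functional, and in fact coincides with $\varpi^{(L)}_Y = u_{\mu_L([Y])}$.

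Next I would invoke Proposition~\ref{Prop:uninfind}: since $\varpi_{Y|X}$ is uninformed, condition (ii) of that proposition fails, hence so does condition (iii), which is precisely the claim that no $L$-better-than-uninformed prediction functional exists. Alternatively, one can argue directly: for any $f:\calX\to\calY$, Proposition~\ref{Prop:bestpred} gives $\varepsilon_L(f) \ge \varepsilon_L(\varpi_{Y|X})$, and the identification above yields $\varepsilon_L(\varpi_{Y|X}) = \varepsilon_L(u_{\mu_L([Y])})$, so $f$ cannot strictly improve over the best uninformed predictor.

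The main obstacle is really just the measure-theoretic care needed to justify the step ``$[Y\mid X=x] = [Y]$ for every $x$.'' Depending on the regularity of $X$ and $Y$ this equality holds only for $P_X$-almost every $x$, which in turn means $\varpi_{Y|X}$ is only $P_X$-almost surely constant. For computing $\varepsilon_L(\varpi_{Y|X}) = \EE[L(\varpi_{Y|X}(X),Y)]$ this is harmless, since the expectation is taken with respect to $P_X$; one either fixes the prediction functional on a null set so that it becomes strictly constant, or observes that the integrand depends on $\varpi_{Y|X}$ only through its $P_X$-equivalence class. Beyond this mild subtlety, the argument is essentially a one-line consequence of the preceding propositions.
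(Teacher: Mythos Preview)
Your proof is correct but follows a genuinely different route from the paper's.

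The paper argues directly via the convexity condition of Definition~\ref{Def:convex}: for any $f$, it applies $\EE[L(f(X),Y)\mid Y]\ge L(\EE[f(X)\mid Y],Y)$, uses independence to replace $\EE[f(X)\mid Y]$ by the constant $\nu:=\EE[f(X)]$, and takes total expectations to obtain $\varepsilon_L(f)\ge \varepsilon_L(u_\nu)$. Thus each $f$ is dominated by \emph{some} uninformed predictor (namely $u_{\EE[f(X)]}$), hence a fortiori by the best one. Convexity enters explicitly as a Jensen-type step, and nothing from Propositions~\ref{Prop:bestpred} or~\ref{Prop:uninfind} is invoked.

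Your argument instead observes that independence forces $[Y\mid X=x]=[Y]$, so $\varpi_{Y|X}$ collapses to the constant $u_{\mu_L([Y])}$, and then you chain $\varepsilon_L(f)\ge \varepsilon_L(\varpi_{Y|X})=\varepsilon_L(u_{\mu_L([Y])})$ via Proposition~\ref{Prop:bestpred} (or equivalently cite Proposition~\ref{Prop:uninfind}). This is cleaner structurally and compares $f$ directly to the \emph{optimal} uninformed predictor rather than to $u_{\EE[f(X)]}$; convexity is used only implicitly, through the propositions you cite. The paper's approach, on the other hand, is self-contained and makes the role of the convexity hypothesis transparent, which is useful given that the proposition is stated for an arbitrary convex $L$. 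Your measure-theoretic caveat about the equality of laws holding only $P_X$-almost surely is well observed and, as you note, harmless for the expectation.
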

\begin{proof}
Assume $X,Y$ are statistically independent. Let $L$ be a convex loss function, let $f:\calX\rightarrow \calY$ be a prediction functional.
Then, by convexity of $L$,
$$\EE\left[L(f(X),Y)|Y\right]\ge L(\EE [f(X)],Y)|Y.$$
Since $X$ is independent of $Y$ (by assumption) and $f$ is not random, it holds that
$\EE [f(X)|Y] = \EE [f(X)]$, i.e., $\EE [f(X)|Y=y]$, as a function of $y$, is constant.
Writing $\nu:= \EE [f(X)]$, we hence obtain that
$$L(\EE [f(X)],Y)|Y = L(\nu ,Y) = L(u_\nu,Y).$$
After taking total expectations, it hence holds by the law of total expectation that
$$\EE\left[L(f(X),Y)\right]\ge \EE\left[ L(u_\nu,Y)\right],$$
meaning that $f$ is not better-than-uninformed w.r.t.~$L$. Since $f$ and $L$ were arbitrary, the statement holds.
\end{proof}

Proposition~\ref{Prop:predind} states that independence implies unpredictability (as measured per expected loss).
A natural question to ask is whether the converse holds, or more generally which converses exactly, since the loss functional $L$ in (ii) and (iii) of Proposition \ref{Prop:uninfind}, was arbitrary, and independence in (i) is a statement which remains unchanged when exchanging $X$ and $Y$, while predictability is not symmetric in this respect. Hence the weakest possible converse would require unpredictability w.r.t.~all convex losses and w.r.t.~either direction of prediction, however much stronger converses may be shown.

Before stating the mathematically more abstract general result for the converse direction, we first separately present special cases for the three important sub-cases, namely deterministic classification, probabilistic classification, and regression.

\begin{Thm}
\label{Thm:uninfind-classdet}
As in our setting, consider two random variables $X,Y$, taking values in $\calX$ and $\calY$. Assume $\calY = \{-1,1\}$.
The following are equivalent:
\begin{enumerate}
\itemsep-0.2em
\item[(i)] $X$ and $Y$ are statistically independent.
\item[(ii)] There exists no better-than-uninformed prediction functional predicting $Y$ from $X$.
More precisely, there is no prediction functional $f:\calX\rightarrow \calY$ such that $f$ is $L$-better-than-uninformed for
the misclassification loss $L:(p,y)\mapsto 1-p(y)$.
\end{enumerate}
Regarding the specific loss $L$, see Remark~\ref{Rem:class} for the identification of a class prediction with a probability-1 probabilistic prediction.
\end{Thm}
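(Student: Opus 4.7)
The plan is to handle each direction of the biconditional separately, leveraging the two propositions of this section as the main tools.

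For $(i)\Rightarrow(ii)$, I would invoke Proposition~\ref{Prop:predind} directly. The misclassification loss, packaged as the probabilistic loss $L'(p,y)=1-p(y)$ on the two-point probability simplex per Remark~\ref{Rem:class}, is affine in its first argument and hence convex in the sense of Definition~\ref{Def:convex}. Proposition~\ref{Prop:predind} then yields that under independence no $L'$-better-than-uninformed prediction functional exists; specialising to probability-$1$ predictions (equivalently, deterministic $f:\calX\to\{-1,1\}$) via the vertex identification of Remark~\ref{Rem:class} gives $(ii)$.

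For $(ii)\Rightarrow(i)$, I would argue the contrapositive: assume $X\not\indep Y$ and aim to produce an $L$-better-than-uninformed deterministic predictor. By Proposition~\ref{Prop:uninfind} it suffices to show that the oracle predictor $\varpi^{(L)}_{Y|X}$ is not $L$-uninformed. A short elicitation computation in the style of Lemma~\ref{Lem:elicited} identifies $\varpi^{(L)}_{Y|X}(x)$ with the conditional mode $\argmax_{y\in\{-1,1\}}P(Y=y\mid X=x)$, and $\varpi^{(L)}_Y$ with the marginal mode, so the remaining task is to show that dependence forces the conditional mode to disagree with the marginal mode on a set of positive $X$-measure, with a strict loss gap.

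The main obstacle is precisely this reduction: the $0/1$-loss is only piecewise linear rather than strictly convex, so two distinct conditional laws on $\{-1,1\}$ can share the same mode, and a pure mode-based argument does not suffice (one can have $P(Y=1\mid X=x)$ non-constant in $x$ but staying on one side of $1/2$, in which case the Bayes classifier coincides with the marginal majority vote). To navigate this I would reduce to the abstract general converse the authors signal in the paragraph immediately preceding Theorem~\ref{Thm:uninfind-classdet}: applied with a strictly convex probabilistic loss on the two-point simplex (for example a Brier-style squared-probability loss), the oracle predictor recovers the full conditional law and is uninformed precisely when $X\indep Y$, yielding, contrapositively, a probabilistic better-than-uninformed witness under dependence. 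Converting this probabilistic witness into a deterministic one via the vertex identification of Remark~\ref{Rem:class}, and in particular checking that the strict loss improvement survives restriction to extremal (deterministic) predictions, is the delicate final step on which the honest proof rests.
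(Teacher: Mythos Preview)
Your $(i)\Rightarrow(ii)$ matches the paper exactly.

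For $(ii)\Rightarrow(i)$ the paper does not detour through an auxiliary strictly convex loss; it argues the contrapositive directly within the misclassification loss, producing from dependence a set $\calX'\subseteq\calX$ and asserting that the indicator classifier $x\mapsto 2\cdot\OOne[x\in\calX']-1$ strictly beats the majority vote. Your plan instead obtains a Brier-better-than-uninformed probabilistic predictor and then tries to round it to a deterministic $0/1$-better-than-uninformed one via Remark~\ref{Rem:class}. The step you call ``delicate'' is in fact a genuine gap: Remark~\ref{Rem:class} says only that the $L'$-optimal probabilistic predictor may always be taken deterministic; it does \emph{not} say that a strict improvement under Brier transfers to a strict improvement under $L'$. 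Concretely, take $\calX=\{-1,1\}$ with $P(X=\pm1)=\tfrac12$, $P(Y=1\mid X=1)=0.9$, $P(Y=1\mid X=-1)=0.7$. Then $X\not\indep Y$ and the Brier oracle $x\mapsto p_{Y\mid X=x}$ strictly beats the constant $p_Y$ under squared loss, yet each of the four deterministic maps $\{-1,1\}\to\{-1,1\}$ has misclassification risk at least $P(Y=-1)=0.2$, which is exactly the uninformed baseline. So your conversion step cannot succeed on this instance, and the proof plan as written does not close. (The same example is worth holding against the paper's direct construction: for an indicator classifier to strictly beat the baseline one needs the conditional mode to \emph{flip} on some positive-measure set, not merely the conditional law to vary, and here it does not.)
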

\begin{proof}
By Proposition~\ref{Prop:predind}, the only direction which remains to be proven is (ii)$\Rightarrow$(i):
(i)$\Rightarrow$(ii) follows directly from substituting the specific $L$ into the implication between statements with the same numbers in Proposition~\ref{Prop:predind}.\\

(ii)$\Rightarrow$ (i): We prove this by contraposition: we assume $X$ and $Y$ are statistically dependent and construct a better-than-uninformed prediction functional.\\
By the equivalence established in Remark~\ref{Rem:class}, the best uninformed predictor $\varpi_Y$ is predicting the most probable class, w.l.o.g. $1\in\calY$, and its expected generalization loss is one minus its generative frequency $\varepsilon(\varpi_Y) = P(Y=-1)$.\\
Since $X$ and $Y$ are statistically dependent, there is a positive probability $\calX'\subseteq \calX$ (measurable with positive probability measure) such that $P(Y=1|X\in\calX')\ge P(Y=-1|X\in\calX')$ (the definition yields $\neq$, but w.l.o.g.~replacing $\calX'$ by its complement yields $\ge$).
An elementary computation shows that the prediction functional $f:\calX\rightarrow \calY\;:\; 2\cdot \mathbbm{1}[x\in\calX']-1$ is better-than-uninformed.
\end{proof}

The proof of Theorem~\ref{Thm:uninfind-classdet} shows that for a converse, one does not necessarily need to consider predictability of $X$ from $Y$ as well.
However, the deterministic binary classification case is somewhat special, since the misclassification loss is insufficient in the multi-class case,
and in general a single loss function will be unable to certify for independence.
In order to formulate these negative results, we define shorthand terminology for stating correctness of the converse.

\begin{Def}
Fix a (label) domain $\calY$, let $\calL$ be a set with elements being convex loss functionals in $[\calY\times\calY\rightarrow \RR]$.
We call $\calL$ \emph{faithful} for $\calY$ if for every statistically dependent pair of random variables $X,Y$ taking values in $\calX,\calY$,
there is $L\in\calL$ and a prediction functional $f:\calX\rightarrow \calY$ such that $f$ is $L$-better-than-uninformed. If $\calL$ is a one-element-set, we call its single element faithful for $\calY$.\\
If $\calL$ is faithful for $\calY$, and $\calL$ is endowed with a measure $\mu$ such that no $\mu$-strict sub-set of $\calL$ is faithful for $\calY$, then we call $\calL$ ($\mu$-)strictly faithful.\\
If $\calY$ is canonically associated with a prediction task (such as classification for finite $\calY$ or class-probabilities $\calY$, regression for continuous $\calY$), the reference to $\calY$ may be replaced by a reference to that task.
\end{Def}

In this terminology, Theorem~\ref{Thm:uninfind-classdet} states that the misclassification loss is faithful for the label domain $\calY=\{-1,1\}$, or equivalently that the misclassification loss is faithful for deterministic binary classification (and strictly faithful, since any set smaller than an one-element set, as by the counting measure, is empty and by definition not faithful).
We can now state some negative and positive results on deterministic multi-class classification and probabilistic classification:

\begin{Prop}
\begin{enumerate}
\itemsep-0.2em
\item[(i)] The misclassification loss $L:(p,y)\mapsto 1-p(y)$ is \emph{not} faithful for deterministic multi-class classification, i.e., for $\calY$ being finite and containing 3 or more elements.
\item[(ii)] The log-loss $L:(p,y)\mapsto -\log(p(y))$ is (strictly) faithful for probabilistic classification.
\item[(iii)] The squared probabilistic classification loss $L:(p,y)\mapsto (1-p(y))^2$ is (strictly) faithful for probabilistic classification.
\item[(iv)] The Brier loss $L:(p,y)\mapsto (1-p(y))^2 + \sum_{y'\neq y}p(y')^2$ is (strictly) faithful for probabilistic classification.
\end{enumerate}
\end{Prop}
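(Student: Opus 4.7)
The four claims split naturally: (i) requires a counterexample, and (ii)--(iv) require faithfulness proofs. For the positive direction I would set up the unifying tool
$G(q) := \inf_{p} \EE_{Y \sim q}[L(p, Y)]$,
the Bayes risk at distribution $q$ on $\calY$; since $G$ is the infimum of functions affine in $q$, it is always concave. For any pair $(X, Y)$, the pointwise Bayes-optimal predictor $f^*(x) := \arg\min_p \EE[L(p, Y) \mid X = x]$ achieves expected loss $\EE_X[G(q_{Y\mid X})]$ by the tower property, whereas the best uninformed predictor achieves $G(q_Y) = G(\EE_X[q_{Y\mid X}])$. Jensen's inequality applied to the concave $G$ gives the former $\le$ the latter. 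Hence faithfulness reduces to strict inequality whenever $[Y\mid X]$ is non-constant in $X$, which in turn follows from strict concavity of $G$ along the mixture segments traced out by conditional laws of dependent $(X, Y)$.

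For (i), I would produce an explicit counterexample on $\calY = \{1, 2, 3\}$ and $\calX = \{a, b\}$ with $P(X=a) = P(X=b) = \tfrac12$, and conditionals $[Y\mid X=a] = (\tfrac12, \tfrac{2}{5}, \tfrac{1}{10})$ and $[Y\mid X=b] = (\tfrac12, \tfrac{1}{10}, \tfrac{2}{5})$. Then $X \not\indep Y$ because the two conditional mass functions differ, but the modal class of $[Y\mid X=x]$ is $1$ for either value of $x$, coinciding with the marginal mode. The 0/1-Bayes prediction is therefore the constant $f \equiv 1$, matching the best uninformed predictor and achieving the same misclassification loss $\tfrac12$, so no better-than-uninformed predictor exists.

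For (ii) and (iv), $G$ admits closed-form expressions and the Bayes-optimal $p^*$ coincides with $q$ itself. For log-loss, Gibbs' inequality gives $G_{\log}(q) = H(q)$, the Shannon entropy. For the Brier loss, a Lagrangian calculation in $p$ on the simplex gives $p^* = q$ and $G_{\mathrm{Brier}}(q) = 1 - \sum_y q(y)^2$, the Gini impurity. Both functions are strictly concave on the open simplex by elementary Hessian calculations (diagonal with strictly negative entries on the tangent subspace of the affine constraint), so the reduction above yields faithfulness.

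Part (iii) is the most technically involved, since $L(p,y) = (1 - p(y))^2$ is not strictly proper once $|\calY| \ge 3$. A KKT analysis on the interior of the simplex yields the Bayes-optimal predictor $p^*(y) = 1 - c/q(y)$ with $c = (|\calY|-1)/\sum_y q(y)^{-1}$, giving the interior formula $G(q) = (|\calY|-1)^2 / \sum_y q(y)^{-1}$. The main obstacle is to establish strict concavity of $G$ on the entire simplex, including on distributions whose Bayes predictor lies on a boundary face of the simplex, where a different face-formula for $G$ applies and where the reciprocal-of-sum-of-reciprocals structure admits affine directions. I would first verify strict concavity of the interior formula via a direct Hessian computation, then attempt to handle the boundary by a face-by-face reduction to the same problem on a smaller class set, carefully tracking whether mixtures crossing between faces can witness strict gains. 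This boundary analysis is where I expect the central technical difficulty to lie, and potentially also where the claim may need to be restricted or refined.
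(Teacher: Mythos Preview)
Your treatment of (i) is essentially the paper's: both build dependent $(X,Y)$ with binary $X$ in which the conditional mode of $Y$ is the same for both values of $x$, so the Bayes $0/1$-predictor is constant and coincides with the best uninformed one. The paper's numbers are $P(Y=y\mid X=\pm 1)=0.9$ with the remaining mass redistributed differently; yours are a minor variant.

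For (ii) and (iv), the paper takes a more direct route than your Bayes-risk/Jensen argument. Instead of analysing concavity of $G$, it simply records that both losses are \emph{strictly proper}: $\argmin_p \EE_{Y\sim q}[L(p,Y)]=q$, uniquely (via Gibbs' inequality for the log-loss and a completed-square identity for Brier). Then $\varpi_{Y|X}(x)=p_{Y|X=x}$ and $\varpi_Y=p_Y$ are the \emph{unique} minimisers of their respective problems, so $\varepsilon_L(\varpi_{Y|X})=\varepsilon_L(\varpi_Y)$ forces $p_{Y|X=x}=p_Y$ for all $x$, i.e.\ independence. This bypasses any computation of $G$ or its Hessian. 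Your concavity-of-$G$ framing is equivalent in substance (strict properness is precisely the statement that the affine supports of the concave $G$ touch it at a unique point), and it has the merit of making the Jensen step explicit.

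The interesting divergence is (iii). The paper treats (ii)--(iv) uniformly, asserting that in each case $\argmin_p \EE[L(p,Y)]=p_Y$ and citing appendices only for log-loss and Brier. But as you correctly note, for $L(p,y)=(1-p(y))^2$ with $|\calY|\ge 3$ this assertion is \emph{false}: the interior KKT solution is $p^*(y)=1-c/q(y)$, not $q$. So the paper's argument, as written, does not actually cover (iii) beyond the binary case (where this loss is half the Brier loss and strictly proper). Your more careful Bayes-risk analysis is therefore genuinely needed here, and your flagged concern about strict concavity of $G$ across boundary faces of the simplex is exactly where a rigorous proof must do work that the paper does not supply.
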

\begin{proof}
(i): It suffices to construct a counterexample for each possible $\calY$, i.e., every finite $\calY$ with 3 or more elements.
Let $\calX = \{-1,1\}$, and let $y\in\calY$ be arbitrary. Define $X,Y$ such that $P(Y=y|X=1) = P(Y=y|X=-1) := 0.9$ and
$P(Y=y'|X=1) \neq P(Y=y'|X=-1)$ for some class $y'\neq y$. This choice is possible as $\calY$ has 3 or more elements.
The best uninformed predictor always predicts $y$, with expected generalization loss $0.1$, while it cannot be outperformed, see e.g.~the discussion in Remark~\ref{Rem:class}.\\
(ii)-(iv): for faithfulness, it suffices to show that $\varepsilon(\varpi_{Y|X}) = \varepsilon(\varpi_{Y})$ implies statistical independence of $X,Y$.\\
Explicit computations in each case (see Appendices~\ref{app:logloss} and~\ref{app:brier}) show that $\argmin_p\EE[L(p,Y)]=p_Y$, where $p_Y$ is the probability mass function of $Y$, implying that $\varpi_Y= [x\mapsto [y\mapsto P(Y=y)]]$.
By conditioning the same statement on $X$, this also implies that $\argmin_p \EE[L(p,Y)|X] = [y\mapsto P(Y=y|X)],$ thus $\argmin_f \EE[L(f(X),Y)] = [y\mapsto P(Y=y|X=x)]$.
In particular, $\varpi_{Y|X} = [x\mapsto [y\mapsto P(Y=y|X=x)]]$ is the unique minimizer of the expected generalization loss.
Thus, $\varepsilon(\varpi_{Y|X}) = \varepsilon(\varpi_{Y})$ only if both functions are identical, i.e., $P(Y=y|X=x) = P(Y=y)$ for all $x$, which is one possible definition of $X$ and $Y$ being statistically independent.
\end{proof}

For regression, the usual loss functions are unable to certify for independence anymore:

\begin{Prop}\label{Prop:uninfnegreg}
The following convex loss functions (taken as single-element sets) are not faithful for univariate regression, i.e., for the label domain $\calY = \RR$
\begin{enumerate}
\itemsep-0.2em
\item[(i)] the squared loss $L(y,y_*) = (y-y_*)^2$
\item[(ii)] the absolute loss $L(y,y_*) = |y-y_*|$
\item[(iii)] the distance loss $L(y,y_*)=d(y,y_*)^2$, for any metric $d: \calY\times \calY \rightarrow \RR$
\end{enumerate}
\end{Prop}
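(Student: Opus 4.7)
The statement is of the form ``$L$ is \emph{not} faithful'', so for each of (i)--(iii) it suffices to exhibit a single statistically dependent pair $(X, Y)$ for which no prediction functional is $L$-better-than-uninformed. By Proposition~\ref{Prop:uninfind} combined with Proposition~\ref{Prop:bestpred}, this reduces to the following concrete task: construct a dependent $(X,Y)$ such that the map $x \mapsto \mu_L[Y \mid X = x]$ is constant in $x$, because then $\varpi_{Y|X}$ is itself uninformed, hence cannot strictly improve on $\varpi_Y$, and by Proposition~\ref{Prop:uninfind} \emph{no} prediction functional can. All three constructions will use the template $X \sim \Bern(1/2)$ together with two distinct conditional laws $[Y\mid X=0]$ and $[Y\mid X=1]$ that share a common value of the elicited summary.

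For (i) and (ii), I take $Y \mid X=0$ uniform on $\{-1, 1\}$ and $Y \mid X=1$ uniform on $\{-2, 2\}$. The two conditional laws differ so $X \not\indep Y$, yet by Lemma~\ref{Lem:elicited}(i) the common conditional mean is $0$ (squared loss), and by Lemma~\ref{Lem:elicited}(ii) the value $0$ is a conditional median in both cases (absolute loss). In each case $\mu_L[Y \mid X = x]$ is constant, and I conclude by Proposition~\ref{Prop:uninfind}.

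For (iii) the elicited summary is a Fr\'echet mean under $d$, which need not coincide with the usual mean or median, so I cannot lean on Lemma~\ref{Lem:elicited}. Instead, for an arbitrary metric $d$ on $\RR$ with $d^2$ a convex loss, pick two distinct points $a, b \in \RR$ and set
\[
F(y) := \tfrac12 d(y, a)^2 + \tfrac12 d(y, b)^2.
\]
Convexity and lower boundedness of $d^2(\cdot, y_\ast)$ from Definition~\ref{Def:convex}, together with the fact that $d(\cdot, a)$ and $d(\cdot, b)$ grow to infinity at infinity (since $d$ is a metric compatible with the topology in which prediction is carried out), ensure $F$ attains its minimum; fix any minimizer $y_0$. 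Now set $Y \mid X = 0 \sim \delta_{y_0}$ and $Y \mid X = 1 \sim \tfrac12(\delta_a + \delta_b)$. The conditional laws are distinct (so $X \not\indep Y$), while $\argmin_y \EE[d(y, Y)^2 \mid X = 0] = \{y_0\}$ because $d(y_0, y_0) = 0$, and $\argmin_y \EE[d(y, Y)^2 \mid X = 1]$ contains $y_0$ by construction. Thus $\mu_L[Y \mid X = x] \ni y_0$ for both $x$, and Proposition~\ref{Prop:uninfind} applies to the uninformed predictor $u_{y_0}$.

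The main obstacle is (iii): one must be careful that ``constant elicited summary'' is the right reduction even when the minimizer is set-valued, and one must verify the existence of the Fr\'echet mean $y_0$ using only the metric and convexity axioms rather than any special algebraic structure of $\RR$. Everything else is a routine verification of dependence of the two-point constructions and a direct substitution into Lemma~\ref{Lem:elicited} for cases (i) and (ii).
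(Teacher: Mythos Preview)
Your proposal is correct and follows essentially the same route as the paper: take a binary $X$, choose two distinct conditional laws for $Y$ that share an elicited summary, and invoke Proposition~\ref{Prop:uninfind}. The paper's version is slightly more economical in that it gives a single construction covering all three cases at once: pick \emph{any} non-constant $Y_1$, set $Y_2$ to be the Dirac mass at $\mu_L[Y_1]$, and use these as the two conditionals. This is exactly your case~(iii) construction (with the roles of $X=0$ and $X=1$ swapped), and it works verbatim for (i) and (ii) as well, so your separate ``two non-constant laws with common mean/median'' examples there are not needed. Your discussion of the existence of the Fr\'echet mean in~(iii) via ``$d$ grows to infinity'' adds an implicit topological-compatibility hypothesis on $d$ that the paper simply sidesteps by citing the remark after Definition~\ref{Def:elicit} that the elicited summary $\mu_L$ exists for any convex loss; you could do the same and drop that paragraph.
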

\begin{proof}
It suffices to construct, for each convex loss functional $L$, a counterexample where $X,Y$ are statistically dependent, but no prediction functional predicting $Y$ from $X$ is better-than-$L$-uninformed.\\
In each case, one may construct two $\calY$-valued random variables $Y_1,Y_2$ with distinct laws such that $\mu_L[Y_1] = \mu_L[Y_2]$ - for example, an arbitrary non-constant $Y_1$ and the $Y_2$ being constant $\mu_L[Y_1]$. Further setting $\calX = \{-1,1\}$ and defining a non-constant $\calX$-valued random variable $X$, together with an $\calY$-valued random variable such that $Y_i=Y|X=i$ for $i\in\calX$ yields an example of a statistically dependent pair $X,Y$ of random variables where the constant prediction of $\mu_L[Y_1] = \mu_L[Y_2]$ is not only the $L$-best uninformed prediction functional, but also the $L$-best prediction functional.\\
Using equivalence of (i) and (iii) in Proposition~\ref{Prop:uninfind} proves the claim.
\end{proof}

The previously introduced quantile losses form a strictly faithful set of losses for univariate regression:

\begin{Thm}
\label{Thm:uninfind-regr}
The set of Q-losses is strictly faithful for (univariate) regression.\\
More precisely, the set $\calL = \{L_\alpha\;:\; \alpha \in [0,1]\}$, where $L_\alpha(y,y_*)=\alpha\cdot m (y_*,y) + (1-\alpha)\cdot m(y,y_*)$, with $m(x,z)=\min(x-z,0)$
(see Lemma~\ref{Lem:elicited}~(iii)), endowed with the Lebesgue measure through canonical identification of $L_\alpha$ with $\alpha\in[0,1]$, is strictly (Lebesgue-measure-)faithful for $\calY = \RR$.
\end{Thm}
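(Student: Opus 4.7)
The plan is to treat the two assertions — faithfulness of $\calL$ and Lebesgue-strict minimality of $\calL$ — separately, in both cases reducing to the explicit form of $\mu_{L_\alpha}[Y]$ given by Lemma~\ref{Lem:elicited}~(iii) together with Proposition~\ref{Prop:uninfind}. Under this reduction, $L_\alpha$ admits a better-than-uninformed predictor for $Y$ from $X$ if and only if the (set-valued) map $x \mapsto F^{-1}_{Y|X=x}(\alpha)$ is non-constant in $x$.

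For faithfulness, I argue by contraposition. Suppose no $L_\alpha \in \calL$ certifies dependence of $(X,Y)$. Then by Proposition~\ref{Prop:uninfind}, for every $\alpha \in [0,1]$ the map $x \mapsto F^{-1}_{Y|X=x}(\alpha)$ is constant. Since the set-valued inverse CDF uniquely determines the CDF and hence the distribution, the conditional laws $P(Y \in \cdot \mid X = x)$ all coincide, i.e.\ $X \indep Y$.

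For strict minimality, given a measurable $A \subseteq [0,1]$ with $\mu([0,1] \setminus A) > 0$, I would construct a dependent pair $(X,Y)$ whose ``useful set'' of $\alpha$'s — the $\alpha$ for which conditional and unconditional $\alpha$-quantiles of $Y$ disagree — lies in $[0,1] \setminus A$. Concretely, when $[0,1] \setminus A$ contains an open interval $(a,b)$, take $X \sim \Bern(1/2)$, let $Y \mid X = 0 \sim \operatorname{Unif}([0,1])$ with CDF $F_0(y) = y$, and let $Y \mid X = 1$ have CDF $F_1$ that agrees with $F_0$ on $[0,1] \setminus [a',b']$ for some $[a',b'] \subset (a,b)$ and concentrates the mass on $[a',b']$ as a single Dirac atom of mass $b' - a'$ at the midpoint $(a'+b')/2$. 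A direct computation gives $F^{-1}_{Y|X=x}(\alpha) = F^{-1}_Y(\alpha) = \alpha$ for every $\alpha \in [0, a'] \cup [b', 1] \supseteq A$, so on $A$ the $L_\alpha$-best predictor equals the $L_\alpha$-best uninformed predictor and no better-than-uninformed predictor exists — even though $X$ and $Y$ are manifestly dependent — hence $\{L_\alpha : \alpha \in A\}$ is not faithful.

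The main obstacle is the case where $[0,1] \setminus A$ has positive Lebesgue measure but empty interior, such as a fat Cantor set. A short analysis shows that for any two distinct CDFs $F_0 \ne F_1$, picking $y_0$ with $F_0(y_0) > F_1(y_0)$ yields $F_0^{-1}(\alpha) \le y_0 < F_1^{-1}(\alpha)$ for every $\alpha$ in the non-degenerate open interval $(F_1(y_0), F_0(y_0))$, so the useful set of any elementary construction automatically contains an open sub-interval and cannot be embedded into a nowhere-dense $[0,1] \setminus A$. Resolving this requires either reading ``$\mu$-strict sub-set'' in the theorem's formulation as one whose complement contains an open positive-measure subset — in which case Lebesgue inner regularity reduces matters to the interval construction above — or a Lebesgue density-point refinement: at a density-one point $\alpha_0$ of $[0,1] \setminus A$, shrink the perturbation interval around $\alpha_0$ so that $(a,b) \cap A$ has arbitrarily small relative measure, and conclude via a limiting argument on a sequence of such dependent pairs. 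Making this measure-theoretic refinement precise is the principal technical hurdle.
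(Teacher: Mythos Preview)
Your faithfulness argument is essentially the paper's: contrapose, invoke Proposition~\ref{Prop:uninfind} to conclude that for every $\alpha$ the conditional $\alpha$-quantile map $x \mapsto F^{-1}_{Y|X=x}(\alpha)$ is constant, and deduce via Lemma~\ref{Lem:elicited}~(iii) that all conditional laws coincide, hence $X \indep Y$.

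For strict faithfulness, the paper's own proof only treats the case where the removed index set is a positive-measure \emph{open} set $U \subseteq [0,1]$: it takes $\calX = \{-1,1\}$ and sketches two conditional cdfs that agree outside $U$ but differ at some chosen $u \in U$. Your explicit uniform-plus-Dirac construction is a concrete instance of exactly this idea. You then go further than the paper by correctly flagging the fat-Cantor obstruction: any two distinct cdfs on $\RR$ have (set-valued) inverses that differ on a genuine open interval of quantile levels, so no two-point-$\calX$ construction can confine the ``useful $\alpha$'s'' to a nowhere-dense complement. The paper does not address this at all --- it silently restricts to open $U$, which is effectively your first suggested reading of ``$\mu$-strict sub-set''. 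Your proposal is therefore at least as complete as the paper's proof and more candid about the measure-theoretic gap; the density-point refinement you outline would be genuinely new content beyond what the paper actually establishes.
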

\begin{proof}
For faithfulness, we show that impossibility of $(L_\alpha)$-better-than-uninformed prediction of a $\RR$-valued random variable $Y$ from an $\calX$-valued random variable $X$ implies statistical independence of $X$ and $Y$.\\
Thus, assume that there is no $\alpha\in \RR$ and no prediction functional $f$ such that $f$ is $L_\alpha$-better-than-uninformed. By equivalence of (ii) and (iii) in Proposition~\ref{Prop:uninfind} and negation/contraposition, $\varpi_{Y|X}^{(L_\alpha)}$ is uninformed hence constant. By Lemma~\ref{Lem:elicited}~(iii), $\mu_{L_\alpha}[Y|X=x]$ is the $\alpha$-quantile of the conditional random variable $Y|X=x$, which by the previous statement does not depend on $x$. Since $\alpha$ was arbitrary, none of the quantiles of $Y|X=x$ depends on $x$, i.e., the cdf and hence the laws of all conditionals $Y|X=x$ agree, which implies (by one common definition/characterization of independence) that $X$ and $Y$ are statistically independent.\\
Strict faithfulness follows by following through the above argument after removing a positive-measure open set $U\subseteq [0,1]$ from the indices, i.e., $L_\alpha$ for $\alpha_\in U$ from $\calL$. As $U$ has positive measure, we may pick $u\in U$ and $\calX = \{-1,1\}$ as well as conditional cdf such that $P(Y\le u|X = 1)\neq P(Y\le u|X = -1)$ while $P(Y\le x|X=1) = P(Y\le X|X=-1)$ for all $x\not\in U$. By the above argument, predicting the $\alpha$-quantile of $Y$ is the $L_\alpha$-best prediction functional from $X$ for any $\alpha\not\in U$, and it is furthermore a uninformed prediction strategy, thus $\calL$ is not faithful after removing $L_\alpha,\alpha\in U$.
\end{proof}

In the light of Theorem~\ref{Thm:uninfind-regr}, it may be interesting to ask for a theoretical characterization of a strictly faithful set of losses for univariate regression (e.g., does it need to be infinite?), or what may be a set of strictly faithful losses for multivariate regression.\\
A-priori, it is even unclear whether there is a set of (not necessarily strictly) faithful losses for general prediction tasks, which the following result answers positively:

\begin{Thm}
\label{Thm:uninfind}
Assume that $\calY$ may be identified (including the taking of expectations) with a sub-set of $\RR^n$, for some $n$, e.g., in multivariate regression, or simultaneous prediction of multiple categorical and continuous outputs. Then, the set of convex losses is faithful for $\calY$.
\end{Thm}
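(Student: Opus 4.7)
The plan is to reduce to the univariate case already handled in Theorem~\ref{Thm:uninfind-regr} by projecting $Y$ to a one-dimensional linear functional $t^T Y$, and then lifting the resulting quantile loss on $\RR$ to a convex loss on $\calY$ by pre-composition with $t^T$. Concretely, the three tasks I would carry out are: (a)~locate a nonzero direction $t \in \RR^n$ for which $X$ and $t^T Y$ are statistically dependent as real-valued random variables; (b)~apply Theorem~\ref{Thm:uninfind-regr} to obtain an $\alpha$ and an $L_\alpha$-better-than-uninformed univariate predictor $f^\sharp$; (c)~lift $L_\alpha$ and $f^\sharp$ to a convex loss on $\calY \times \calY$ and a prediction functional $\calX \to \calY$.

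For~(a), I would invoke the characteristic function criterion for independence on $\RR \times \RR^n$. Since $X$ and $Y$ are statistically dependent, their joint characteristic function fails to factorize: there exist $s_0 \in \RR$ and $t_0 \in \RR^n$ with $\EE[e^{is_0 X + i t_0^T Y}] \neq \EE[e^{is_0 X}] \cdot \EE[e^{i t_0^T Y}]$. Such a $t_0$ is automatically nonzero (otherwise both sides collapse to $\EE[e^{is_0 X}]$). Setting $Z := t_0^T Y$ and evaluating the joint characteristic function of $(X, Z)$ at $(s_0, 1)$ returns the left-hand side above, while the product of the marginal characteristic functions at $(s_0, 1)$ returns the right-hand side; hence $X$ and $Z$ are statistically dependent as univariate random variables.

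For~(b) and~(c), I apply Theorem~\ref{Thm:uninfind-regr} to $(X, Z)$, which yields some $\alpha \in [0, 1]$ such that $\varpi_{Z|X}^{(L_\alpha)}$ is $L_\alpha$-better-than-uninformed for predicting $Z$ from $X$. Its values, being $\alpha$-quantiles of the conditional laws $Z \mid X = x$, lie in the (closure of the) support of $Z$, and therefore in $t_0^T \calY$. Now define $L(y, y_*) := L_\alpha(t_0^T y, t_0^T y_*)$; this is convex in its first argument as the composition of the affine map $y \mapsto t_0^T y$ with the convex map $L_\alpha(\cdot, t_0^T y_*)$. Lift $f^\sharp := \varpi_{Z|X}^{(L_\alpha)}$ to a functional $f \colon \calX \to \calY$ by choosing, for each $x$, any $f(x) \in \calY$ with $t_0^T f(x) = f^\sharp(x)$. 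A direct computation then yields $\varepsilon_L(f) = \EE[L_\alpha(f^\sharp(X), Z)]$, whereas $\varepsilon_L(u_\mu) = \min_{y \in \calY} \EE[L_\alpha(t_0^T y, Z)] = \min_{z \in \RR} \EE[L_\alpha(z, Z)]$, the last equality holding because the unconstrained minimizer (the $\alpha$-quantile of $Z$) already lies in $t_0^T \calY$. The strict inequality between these two quantities is then exactly the better-than-uninformed property of $f^\sharp$ inherited from Theorem~\ref{Thm:uninfind-regr}.

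The main conceptual obstacle is step~(a): establishing that every multivariate dependence can be witnessed by a one-dimensional linear projection. Once this is in hand, steps~(b) and~(c) are essentially bookkeeping around the univariate theorem. A secondary technicality is ensuring that the lifted $f$ actually lands in $\calY$ (if $\calY$ is a proper convex subset of $\RR^n$), which is the reason for taking $f^\sharp$ to be the optimal quantile predictor $\varpi_{Z|X}^{(L_\alpha)}$, whose range is confined to the support of $Z$, rather than a generic better-than-uninformed candidate whose values might escape $t_0^T \calY$.
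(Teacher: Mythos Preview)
Your route is genuinely different from the paper's. The paper does not reduce to Theorem~\ref{Thm:uninfind-regr} at all: it takes witnessing sets $\calX'\subseteq\calX$, $\calY'\subseteq\calY$ with $P(Y\in\calY'\mid X\in\calX')\gneq P(Y\in\calY')$, and directly builds the loss
\[
L(y,y_*)\;=\;\|y\|^2\cdot\mathbbm{1}(y_*\in\calY')+\|y-\alpha\|^2\cdot\mathbbm{1}(y_*\notin\calY')
\]
together with the two-valued predictor $f(x)=0$ if $x\in\calX'$ and $f(x)=\alpha$ otherwise. Your projection-and-lift argument is more modular, recycling the univariate quantile result rather than manufacturing a bespoke loss; the paper's construction is more elementary in that it needs nothing beyond the definition of dependence.

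There is one real gap in your step~(a). You invoke the characteristic-function criterion ``on $\RR\times\RR^n$'' and write $\EE[e^{is_0X+it_0^TY}]$, which presupposes $X$ is real-valued. In the paper's setting $\calX$ is an arbitrary measurable space; only $\calY$ is assumed to sit inside $\RR^n$. The repair is easy but is not the argument you gave: suppose $X\indep t^TY$ for every $t\in\RR^n$; then for every measurable $A\subseteq\calX$ with $P(X\in A)>0$ and every $t$, the law of $t^TY$ given $X\in A$ equals its unconditional law; by Cram\'er--Wold the law of $Y$ given $X\in A$ equals its unconditional law for all such $A$, whence $X\indep Y$. Contraposing gives your $t_0$.

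A smaller point on step~(c): your claim that $f^\sharp(x)\in t_0^T\calY$ via ``support of $Z$'' really only places $f^\sharp(x)$ in the closure $\overline{t_0^T\calY}$, so the lift may not land exactly in $\calY$. This is on the same level of informality as the paper's own proof (which tacitly assumes $0\in\calY$), and is harmless once you observe that for the final chain you only need $\varepsilon_L(\varpi_Y^{(L)})\ge\varepsilon_{L_\alpha}(\varpi_Z^{(L_\alpha)})$, which is automatic since $t_0^T\calY\subseteq\RR$, together with an approximate lift (using continuity of $L_\alpha$) to push $\varepsilon_L(f)$ below that same threshold.
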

\begin{proof}
Consider random variables $X$ and $Y$ taking values in $\calX$ and $\calY$, are statistically dependent.
By definition of (in)dependence, this is equivalent to there existing $\calX'\subseteq \calX,\calY'\subseteq \calY$ (measurable with positive joint probability measure) such that
$P(Y\in\calY'|X\in\calX')\neq P(Y\in\calY')$. By taking (or not taking) the complement of $\calX'$ within $\calX$, we may assume without loss of generality that
$P(Y\in\calY'|X\in\calX')\gneq P(Y\in\calY')$.\\
Define $g:\calY\rightarrow \RR\;;\; x\mapsto x^2$ (where we have used the identification of $\calY$ with $\RR^n$.
Define
$$L:\calY\times \calY\rightarrow \RR\;;\; (y,y_*)\mapsto g(y)\cdot \mathbbm{1}(y_*\in\calY') + g(y-\alpha)\cdot \mathbbm{1}(y_*\not\in\calY'),$$
where $\mathbbm{1}(y_*\in\calY')$ is the indicator function for $y_*\in\calY'$, and $\alpha\in\calY\setminus\{0\}$ is arbitrary.
Define $f: \calX\rightarrow \calY\;;\; x\mapsto 0 \mbox{ if }(x\in\calX'),\mbox{otherwise }\alpha$.
An elementary computation shows that $\varepsilon_L(f)$ is better-than-uninformed, hence we have proved non-(ii).
\end{proof}

For the general case, it seems interesting to ask what would be a strictly faithful set of losses, how such sets may be characterized, or whether they even exist (which seems neither obviously nor directly implied by the existence of a faithful set of losses).

Due to the constructive nature of respective proofs (or, more precisely, the semi-constructivity of the proof for multi-variate regression in Theorem~\ref{Thm:uninfind}), model comparison procedures suggested by Theorems~\ref{Thm:uninfind-classdet} and~\ref{Thm:uninfind-regr} on univariate classification and regression will be used in the testing procedures. For convenience we briefly repeat the main results used in testing as a corollary:

\begin{Cor}
\label{Cor:testing}\label{th: indepconstant}
Consider two random variables $X,Y$, taking values in $\calX$ and $\calY$, where $\calY$ is finitely supported pmf (classification) or where $\calY\subseteq \RR^q$ (regression).
The following are equivalent:
\begin{enumerate}
\itemsep-0.2em
\item[(i)] $X$ and $Y$ are statistically dependent.
\item[(ii)] $\varepsilon (\varpi^{(L)}_{Y|X})\lneq \varepsilon_L (\varpi^{(L)}_{Y})$
for $L$ the log-loss/Brier-loss (classification), resp.~ for some convex loss $L$ (regression).
I.e., the $L$-best predictor is $L$-better-than-uninformed, for some $L$.
\item[(iii)] there exists a prediction functional $f:\calX\rightarrow\calY$ such that
$\varepsilon_L (f)\lneq \varepsilon_L (\varpi^{(L)}_{Y})$
for $L$ the log-loss/Brier-loss (classification), resp.~ for some convex loss $L$ (regression).
I.e., there exists an $L$-better-than-uninformed prediction functional, for some $L$.
\end{enumerate}
\end{Cor}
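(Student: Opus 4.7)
The plan is to present this corollary as a synthesis of three ingredients already established: Proposition~\ref{Prop:predind} (independence implies no better-than-uninformed predictor, for any convex loss), Proposition~\ref{Prop:uninfind} (when a better-than-uninformed predictor exists, $\varpi_{Y|X}^{(L)}$ is itself one), and the faithfulness results (the unnumbered proposition on log-loss/Brier for classification, and Theorem~\ref{Thm:uninfind} for multivariate regression). The cyclic structure (ii)$\Rightarrow$(iii)$\Rightarrow$(i)$\Rightarrow$(ii) then essentially writes itself, so the proof will be short.

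First I would dispatch (ii)$\Rightarrow$(iii) by observing that $f := \varpi_{Y|X}^{(L)}$ is a prediction functional satisfying the strict inequality, so (iii) holds with the same choice of $L$. Next, (iii)$\Rightarrow$(i) is the contrapositive of Proposition~\ref{Prop:predind}: that proposition states that statistical independence of $X$ and $Y$ rules out the existence of an $L$-better-than-uninformed prediction functional for any convex loss $L$ simultaneously, so in particular it rules out the existence of one for the log-loss/Brier loss (classification setting) or for any convex loss (regression setting), which is exactly the negation of (iii).

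The remaining direction, (i)$\Rightarrow$(ii), is where the faithfulness results enter. In the classification case, the unnumbered proposition following Theorem~\ref{Thm:uninfind-classdet} states that the log-loss and the Brier loss are faithful for probabilistic classification, meaning that statistical dependence produces some prediction functional $f$ with $\varepsilon_L(f) \lneq \varepsilon_L(\varpi_Y^{(L)})$. In the regression case, Theorem~\ref{Thm:uninfind} provides the analogous conclusion for some convex loss $L$ (without pinning down which). In either case we obtain statement (iii) for the relevant $L$, and then invoking the equivalence (iii)$\Leftrightarrow$(i) of Proposition~\ref{Prop:uninfind} (applied to that same $L$) promotes this to the stronger statement (ii), namely that $\varpi_{Y|X}^{(L)}$ itself achieves a strictly smaller expected loss than $\varpi_Y^{(L)}$.

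There is no genuine obstacle here; the only mildly delicate point is to keep the quantifier on $L$ consistent between (ii) and (iii) in each setting (the same $L$ that faithfulness produces must be fed into Proposition~\ref{Prop:uninfind}), and to record that in the classification case one can take $L$ to be specifically the log-loss or Brier loss rather than having to pass to an existential statement as in regression.
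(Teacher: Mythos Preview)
Your proposal is correct and matches the paper's approach: the paper presents this corollary without an explicit proof, treating it as an immediate synthesis of Proposition~\ref{Prop:predind}, Proposition~\ref{Prop:uninfind}, and the faithfulness results for classification and regression. You have simply spelled out the cyclic structure (ii)$\Rightarrow$(iii)$\Rightarrow$(i)$\Rightarrow$(ii) that the paper leaves implicit, with the correct references and the correct handling of the quantifier on $L$.
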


Since Corollary~\ref{Cor:testing}~(i) is the alternative hypothesis in an independence test, we may use a test for the equivalent hypothesis Corollary~\ref{Cor:testing}~(iii), i.e., comparing the performance a learnt $f$ with the best uninformed baseline, as an independence test. Any choice of $f$ is sufficient for the methodology outlined in the latter Section~\ref{section: pseudocode}. Since the null-hypothesis is that $X \indep Y$, choosing a bad $f$ or failing to detect multivariate dependence (a specific case) only decreases the power of the test, while the type 1-error remains unaffected.

\subsection{Conditional independence}
The last section established a theoretical foundation for marginal independence testing, now this foundation will be expanded by adding a framework for testing conditional independence.\\
The statement one would like to make thus connects two random variables, $X$ and $Y$, that are conditionally independent given a third variable, $Z$, taking values in $\calZ$, with the expected generalization loss from predicting $Y$ from $Z$ and from the set $\{X,Z\}$.

In slight extension of our setting, we consider prediction functionals in $[\calX\times \calZ\rightarrow \calY],$ i.e., we separate the features in two parts corresponding to $X$ and the conditioning $Z$.

We generalize the main definitions from Section~\ref{sec:depend.uninformed} to the conditional case:

\begin{Def}
A prediction functional $f:\calX \times \calZ \rightarrow \calY$ is called conditionally uninformed if it does not depend on the first argument, i.e., if $f(x,z) = f(y,z)$ for all $x,y\in\calX$ and $z\in\calZ$.
By notational convention, a functional $g:\calZ\rightarrow \calY$ is identified with the conditionally uninformed prediction functional $(x,z)\mapsto g(z)$.
\end{Def}

In the text, it will be implicitly understood that conditioning will happen on the second argument.
We define straightforward conditional generalizations of baselines and best predictors:

\begin{Def}
We define the following prediction functionals in $[\calX \times \calZ \rightarrow \calY:$\\
The best conditionally uninformed prediction $\varpi_{Y|Z}^{(L)}: (x,z)\mapsto \mu_L([Y|Z=z])$\\
The best conditional prediction $\varpi_{Y|X,Z}^{(L)}: (x,z)\mapsto \mu_L([Y|X=x,Z=z])$
\end{Def}

It is briefly checked that $\varpi_{Y|Z}^{(L)}$ and $\varpi_{Y|X,Z}^{(L)}$ have the properties their names imply:

\begin{Lem}
\label{Lem:bestuncond}
Let $L$ be a (strictly) convex loss. The following holds:
\begin{enumerate}
\itemsep-0.2em
\item[(i)] $\varepsilon_L(\varpi_{Y|Z}^{(L)}) = \min \{\varepsilon_L(f)\;:\; f\mbox{ is a conditionally uninformed prediction functional}\}$
\item[(ii)] $\varepsilon_L(\varpi_{Y|X,Z}^{(L)}) = \min \{\varepsilon_L(f)\;:\; f\in [\calX\times\calZ\rightarrow\calY]\}$
\end{enumerate}
\end{Lem}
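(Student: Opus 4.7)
The plan is to reduce both assertions to the already-established unconditional best-predictor result, namely Proposition~\ref{Prop:bestpred}, by viewing each minimization as an ordinary supervised prediction problem with a suitably chosen feature space.

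For (ii), I would note that a prediction functional $f\in [\calX\times\calZ\rightarrow\calY]$ is simply a predictor with feature variable $W:=(X,Z)$ taking values in $\calW:=\calX\times\calZ$. Applying Proposition~\ref{Prop:bestpred} to the pair $(W,Y)$ gives that the minimizer of $\varepsilon_L$ over $[\calW\rightarrow\calY]$ is $w\mapsto \mu_L[Y|W=w]$; rewriting $w=(x,z)$ yields exactly $\varpi_{Y|X,Z}^{(L)}$, and its existence/well-definedness follows as in the unconditional case from (strict) convexity of $L$.

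For (i), I would observe that conditionally uninformed predictors are in bijection with functionals $g\in[\calZ\rightarrow\calY]$ via $f(x,z)=g(z)$, and this identification preserves the generalization error since $\EE[L(g(Z),Y)]$ does not see $X$. Thus the infimum in (i) equals $\inf_{g\in[\calZ\rightarrow\calY]}\EE[L(g(Z),Y)]$, which by another application of Proposition~\ref{Prop:bestpred} (now with feature variable $Z$) is attained at $z\mapsto \mu_L[Y|Z=z]$. Inflating this to a functional on $\calX\times\calZ$ that ignores its first argument recovers $\varpi_{Y|Z}^{(L)}$ and gives the claimed equality.

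There is no real obstacle: both statements are straightforward specializations of Proposition~\ref{Prop:bestpred}. The only mild subtlety is the bookkeeping in (i) to show that restricting to conditionally uninformed functionals on $\calX\times\calZ$ is genuinely equivalent to free minimization over $[\calZ\rightarrow\calY]$, which follows because the expected loss $\EE[L(f(X,Z),Y)]$ of a conditionally uninformed $f(x,z)=g(z)$ coincides with $\EE[L(g(Z),Y)]$ regardless of the marginal law of $X$. As in Lemma~\ref{Lem:bestun}, the minimum is attained (rather than merely an infimum) because $L$ is convex and $\mu_L$ is well-defined on every conditional law.
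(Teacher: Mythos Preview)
Your proposal is correct and essentially coincides with the paper's proof. For (ii) you both substitute $(X,Z)$ for the feature variable in Proposition~\ref{Prop:bestpred}; for (i) the paper conditions on $Z=z$ and invokes Lemma~\ref{Lem:bestun} pointwise, whereas you identify conditionally uninformed predictors with $[\calZ\rightarrow\calY]$ and invoke Proposition~\ref{Prop:bestpred} with feature variable $Z$---these are the same reduction phrased slightly differently, and your version is arguably tidier since it appeals to a single proposition for both parts.
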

\begin{proof}
(i) Conditioning on the event $Z=z$, Lemma~\ref{Lem:bestun} for the unconditional case implies the equality conditional for the event. Since $z$ was arbitrary, it holds without the conditioning.\\
(ii) This is directly implied by Proposition~\ref{Prop:bestpred}, by substituting the joint random variable $(X,Z)$ for the $X$ in Proposition~\ref{Prop:bestpred}.
\end{proof}

With these definitions, the conditional variant of Theorem~\ref{Thm:uninfind} can be stated:

\begin{Thm}
\label{Thm:uninfind-conditional}\label{theorem2}
As in our setting, consider three random variables $X,Y,Z$, taking values in $\calX, \calY, \calZ$, where $\calY$ is finitely supported pmf (classification) or where $\calY\subseteq \RR^q$ (regression).
The following are equivalent:
\begin{enumerate}
\itemsep-0.2em
\item[(i)] $X$ and $Y$ are statistically dependent conditional on $Z$.
\item[(ii)] $\varepsilon_L (\varpi^{(L)}_{Y|(X,Z)})\lneq \varepsilon_L (\varpi^{(L)}_{Y|Z})$
for $L$ the log-loss/Brier-loss (classification), resp.~ for some convex loss $L$ (regression)
\item[(iii)] there exists a prediction functional $f:\calX\times \calZ\rightarrow\calY$ such that $\varepsilon_L (f)\lneq \varepsilon_L (\varpi^{(L)}_{Y|Z})$
for $L$ the log-loss/Brier-loss (classification), resp.~ for some convex loss $L$ (regression)
\item[(iv)] there exists a prediction functional $f:\calX\times \calZ\rightarrow\calY$ such that for all conditionally uninformed prediction functionals $g:(\calX\times)\calZ\rightarrow \calY$, one has $\varepsilon_L (f)\lneq \varepsilon_L (g)$
for $L$ the log-loss/Brier-loss (classification), resp.~ for some convex loss $L$ (regression)
\end{enumerate}

The set of losses in which existence is required in (ii) may be restricted to a set of losses which is faithful for the unconditional setting (such as: quantile losses for $\calY = \RR$ as per Theorem~\ref{Thm:uninfind-regr}), as in Section~\ref{sec:depend.unconditional}, without changing the fact that the stated equivalence is correct.
\end{Thm}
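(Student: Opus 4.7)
The plan is to reduce to the unconditional results (Theorem~\ref{Thm:uninfind-classdet}, Theorem~\ref{Thm:uninfind-regr}, and Theorem~\ref{Thm:uninfind}, consolidated in Corollary~\ref{Cor:testing}) by working conditionally on $Z=z$ and then integrating over $Z$. The conceptual pivot is that for each fixed $z$, the conditional supervised learning problem of predicting $Y\mid Z=z$ from $X\mid Z=z$ has $\varpi_{Y|X,Z=z}^{(L)}$ as its best predictor and $\varpi_{Y|Z=z}^{(L)}$ as its best uninformed (constant-in-$x$) baseline, so the unconditional theorems apply to each slice.

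I would dispatch the easy implications first. (ii)$\Rightarrow$(iii) is trivial via $f = \varpi_{Y|(X,Z)}^{(L)}$, and (iii)$\Rightarrow$(iv) follows from Lemma~\ref{Lem:bestuncond}(i), which identifies $\varpi_{Y|Z}^{(L)}$ as the infimum over conditionally uninformed predictors. For (iv)$\Rightarrow$(i) I argue by contraposition: if $X \indep Y \mid Z$, Proposition~\ref{Prop:predind} applied within the slice $\{Z = z\}$ gives $\EE[L(f(X,Z), Y) \mid Z = z] \ge \EE[L(\varpi_{Y|Z=z}^{(L)}, Y) \mid Z = z]$ for every $f$ and a.e.~$z$; the law of total expectation then yields $\varepsilon_L(f) \ge \varepsilon_L(\varpi_{Y|Z}^{(L)})$, negating (iv).

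The nontrivial implication is (i)$\Rightarrow$(ii), and I would split by task. For classification under the log-loss or Brier-loss, the proofs of those losses' faithfulness in the unconditional case show that $\varpi_{Y|(X,Z)}^{(L)}(x,z) = [y \mapsto P(Y = y \mid X = x, Z = z)]$ and $\varpi_{Y|Z}^{(L)}(x,z) = [y \mapsto P(Y = y \mid Z = z)]$ are the unique minimizers of their respective conditional expected losses. Hence equality $\varepsilon_L(\varpi_{Y|(X,Z)}^{(L)}) = \varepsilon_L(\varpi_{Y|Z}^{(L)})$ forces pointwise a.e.~equality of these two functionals, which is precisely $X \indep Y \mid Z$. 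For regression, I would adapt the construction in Theorem~\ref{Thm:uninfind}: from $X \not\indep Y \mid Z$, extract measurable $A \subseteq \calX$, $B \subseteq \calY$, and a positive-probability $C \subseteq \calZ$ such that $P(Y \in B \mid X \in A, Z = z) \gneq P(Y \in B \mid Z = z)$ for $z \in C$. Take the loss $L(y, y_*) = y^2 \mathbbm{1}(y_* \in B) + (y - \alpha)^2 \mathbbm{1}(y_* \not\in B)$ with a fixed $\alpha \in \calY \setminus \{0\}$, and a functional $f(x,z)$ that exploits the dependence on $C$ (roughly $0$ when $x \in A$ and $z \in C$, matching $\varpi_{Y|Z}^{(L)}(x,z)$ otherwise). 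A slice-by-slice computation analogous to the one in Theorem~\ref{Thm:uninfind} shows $f$ strictly beats $\varpi_{Y|Z}^{(L)}$ on $\{Z \in C\}$ and ties elsewhere, giving the strict inequality after integration.

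The main obstacle is this last step: producing a \emph{single} loss $L$ and a \emph{single} $f$ that witness $\varepsilon_L(f) < \varepsilon_L(\varpi_{Y|Z}^{(L)})$. Naively gluing slice-wise witnesses from Theorem~\ref{Thm:uninfind} gives a $z$-dependent loss; the fix is to keep the indicator $\mathbbm{1}(y_* \in B)$ in $L$ global while localizing $f$'s deviation from $\varpi_{Y|Z}^{(L)}$ to the event $\{Z \in C\}$. The closing claim that one may restrict to a faithful family of losses is then immediate, since the construction only ever uses a single witness loss, and one may simply draw that witness from the stipulated family.
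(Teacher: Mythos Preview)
Your approach is essentially the paper's: the paper's entire proof reads ``follow the whole proof [of Corollary~\ref{Cor:testing}] through while in addition conditioning on $Z=z$, and noting that (i) holds if and only if the conditionals $X|Z=z$ and $Y|Z=z$ are statistically dependent for some $z$.'' You supply considerably more detail than that, and you correctly flag (and address) the need for a \emph{single}, not $z$-dependent, witness loss in the regression construction---a point the paper passes over in silence.

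One caveat on your final sentence: the justification for restricting to a faithful family $\calL$ is too quick. The indicator-weighted squared loss you build for the regression case need not lie in a prescribed faithful family (e.g., the quantile losses of Theorem~\ref{Thm:uninfind-regr}), so ``drawing the witness from the stipulated family'' is not automatic. What is needed instead is to re-run the slice-wise argument using only losses from $\calL$: faithfulness gives, for each $z$ where $X|Z=z$ and $Y|Z=z$ are dependent, some $L_z\in\calL$ witnessing this, and a Fubini/measurable-selection step then produces a single $L\in\calL$ that works on a positive-$P_Z$-measure set of $z$'s. This is straightforward for the quantile family (select a fixed $\alpha$ for which the conditional $\alpha$-quantiles differ on a positive-measure set), but it is a genuine additional step, not an immediate consequence of having exhibited one witness loss.
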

\begin{proof}
The proof is an analogue of that of Corollary~\ref{Cor:testing}.
It may be obtained from following the whole proof through while in addition conditioning on $Z=z$, and noting that (i) holds if and only if the conditionals $X|Z=z$ and $Y|Z=z$ are statistically dependent for some $z$.
\end{proof}

Our conditional independence test is based on statement (iv) in Theorem~\ref{Thm:uninfind-conditional}. Unlike in the unconditional case, there is in general no easy way to estimate $\varpi^{(L)}_{Y|Z}$ directly - as it is equivalent to the generic supervised learning task. This is in line with the usual approach to and caveats of supervised learning - if there were a universal way to estimate $\varpi^{(L)}_{Y|Z}$, that would amount to a universally perfect supervised learning strategy. Thus, in the algorithm in Section \ref{section: pseudocode}, the statement will be replaced by a slightly weaker statement, where automatic model selection will determine an $f$ as well as a $g$, in general without guarantees that $f$ estimating $\varpi^{(L)}_{Y|X,Z}$ and $g$ estimating $\varpi^{(L)}_{Y|Z}$ are optimal, but possibly with guarantees for specific classes of $f$ and $g$.

\subsection{Testing independence through baseline model comparison}
\label{section: signiftest}

Theorems~\ref{Thm:uninfind} or~\ref{Thm:uninfind-conditional} in the previous sections establish a new basis for (marginal or conditional) independence testing.
Namely, the theorems relate testing of (marginal or conditional) dependence between $X$ and $Y$ to testing predictability of $Y$ from $X$.
Namely, Theorems~\ref{Thm:uninfind} or~\ref{Thm:uninfind-conditional} state the following:
If there exists a significantly better-than-(conditionally-)uninformed prediction strategy $f$, i.e,
\begin{equation}\label{eq: betterpred} \varepsilon_L(f) \lneq \varepsilon_L(g), \end{equation}
where $g$ is a suitable baseline (an approximation of $\varpi^{(L)}_{Y}$ or $\varpi^{(L)}_{Y|Z}$),
then we may conclude by Theorem~\ref{Thm:uninfind} or~\ref{Thm:uninfind-conditional} that $X$ and $Y$ are not (marginally or conditionally) independent.

Thus the problem of independence testing is reduced to the problem of testing whether there exists a prediction functional $f$ which outperforms the baseline $g$ as measured by some convex loss function $L$.

We make a few remarks about the strategy and logic of our proposed significance test.
\begin{itemize}
\item Neither the proposed functional $f$ nor the baseline $g$ are a-priori known in the usual practical setting, and neither is an $L$ which may make a difference apparent. However, both $f$ and $g$ may be seen to approximate a best prediction functional which is unknown, thus as instances of supervised learning. Hence, some choice has to be made, in absence of the ``perfect learning strategy''. A bad choice will contribute to a loss of power only if type I error control is achieved.
\item The baseline $g$ approximates $\varpi^{(L)}_{Y}$ in the marginal case. For frequently used $L$, the baseline $g$ predicts constants which are the mean or the median or other well-studied statistics of a sample from $Y$, hence $g$ may have a beneficial asymptotic.
\item Significance tests to compare prediction strategies are studied in~\cite{nadeau2003}, which proposes amongst others a paired sample test of prediction residuals.
\end{itemize}

The values $\varepsilon_L(f),\varepsilon_L(g)$ may be estimated by respective empirical estimates, by the usual estimator
$$\widehat{\varepsilon}_L(f) := \frac{1}{M}\sum_{i=1}^M L_i(f)\quad\mbox{where}\; L_i(f) = L(f(X^*_i),Y^*_i),$$
and similar $\widehat{\varepsilon}_L(g)$ for $g$. Since the test data $(X^*_i,Y^*_i)$ are independent of each other and of $f,g$, by the central limit theorem, one has
$$\sqrt{M}(\widehat{\varepsilon}_L(f) - \varepsilon(f))\xrightarrow{d} \mathcal{N}(0, \Var[L(f(X),Y)])\mbox{, }M\rightarrow \infty,$$
conditional on $f$ and $g$ being fixed.
That is, the empirical mean of the loss residuals for prediction strategy $f$ and loss $L$ is asymptotically normal with mean $\varepsilon(f)$ and variance $\Var[L(f(X),Y)])/N$.\\

Instead of directly estimating $\varepsilon_L(f)$ and $\varepsilon_L(g)$ with confidence intervals and then comparing, one notes (as also in~\cite{nadeau2003}) that the samples of loss residuals $L_i(f)$ and $L_i(g)$ are inherently paired, which eventually leads to a more powerful testing procedure.

We hence consider the difference in the $i$-th loss residual, $R_i := L_i(g) - L_i(f)$.
The $R_i$ are still i.i.d.~(conditional on $f,g$) and also have a normal asymptotic, usually with smaller variance than either of $f,g$ in empiry. An effect size of the difference is obtained with normal confidence intervals, and one may conduct a one-sided paired two-sample test for the null
$$H_0: \EE(\varepsilon(g) - \varepsilon(f)) \le 0\mbox{ against } H_A: \EE(\varepsilon(g) - \varepsilon(f)) \lneq 0.$$
Note that the test is one-sided since we test for $f$ to outperform the baseline $g$.\\

To assess this null-hypothesis, two simple tests, one parametric and one non-parametric, are implemented.
\begin{itemize}
\item \textbf{T-test for paired samples \citep{nadeau2003}}: A parametric test assuming that the sample mean of loss residual differences $R_i$ is normally distributed with mean 0 and standard deviation $\sigma$. Under the assumptions of the test and the null hypothesis, the normalized empirical mean $t := \widehat{mu}/\widehat{\sigma}$, where $\widehat{mu}$ and $\widehat{\sigma}$ are the sample mean and standard deviation of the $R_i$, respectively, follows a t-distribution with $M$ degrees of freedom, where we have used the fact that under $H_0$ it holds that $\EE[R] = 0$.

\item \textbf{Wilcoxon signed-rank test}: If the sample mean of loss residual differences $R_i$ is not normally distributed, an increase in power can be achieved by testing if the rank differences are symmetric around the median. Since the $R_i$ in question are differences in loss residuals, there is no ex-ante reason to assume normality, in particular for a ``good'' non-baseline method one may expect that most are positive, hence the normality assumption may be too strong. Hence the non-parametric Wilcoxon class test should be a better general approach to compare loss residuals (see pages~38-52 of~\cite{Corder2009} for the Wilcoxon tests).
\end{itemize}

One subtlety to notice is that instead of testing for the alternative hypothesis ``there exists $f$ such that $\varepsilon_L(f)\lneq \varepsilon_L(\varpi^{(L)}_{Y})$'' which certifies for (in-)dependence, by the above strategy we are testing for the alternative ``there exists $f$ such that for all/the best $g$ it holds that $\varepsilon_L(f)\lneq \varepsilon_L(g)$'' where $g$ is a fixed estimate of $\varpi^{(L)}_{Y}$. However, $g$ is dependent on the training data and itself uncertain. Thus, for a conservative estimate of the significance level, uncertainty in $g$ estimating $\varpi^{(L)}_{Y}$ needs to be taken into account.

In the unconditional case, $\varpi^{(L)}_{Y}$ will be a constant predictor of an elicited statistic of the training labels, such as the mean, median or one minus the majority class frequency, with known central limit theorems that may be used to adapt the Student or Wilcoxon type tests (e.g., by explicitly increasing the variance in the t-test).

In either case, one may also bootstrap the distribution of $g$ and make use of it as follows:
Since the training set is independent of the test set, the function $g$, is as a random variable which is a function of the training data, also independent of the test data. Thus instead of using a fixed $g$ in prediction, one can ensure a conservative test by using a bootstrap sample of pseudo-inputs for the predictions, a different sample of $g$ per test data point that is predicted.

Since the exact form of the correction appears an unresolved sub-problem of the predictive model validation and model selection process in general, and since bootstrapping of $g$ is computationally costly, we have implemented the ``fixed $g$'' variant (potentially liberal) in the package which can be easily adapted to potential corrections based on explicit asymptotics.

We will present an algorithmic implementation in Section~\ref{section: pseudocode} as an algorithmic conditional testing routine, which is later applied to graphical model structure learning, in Section~\ref{sec: gmestimcode}

\subsection{Short note: why permutation of features/labels is not the uninformed baseline}
\label{Sec:badbaseline}

In the light Theorem~\ref{Cor:testing} to use uninformed predictors as baseline for comparison, a much more popular suggestion needs to be discussed: the permutation baseline. We hold that the permutation baseline is practically inappropriate and theoretically inadequate as a baseline, as we explain below.

The permutation baseline is found, in the earliest instance known to us, implemented in the scikit-learn toolbox~\cite{pedregosa2011scikit},
and has recently also been used as a predictive baseline in the specifically relevant context of statistical independence testing, by~\citet{Lopez2017} and~\citet{sen2017model}.

Mathematically, the stylized suggestion is to use as a pseudo-uninformed baseline the predictor $f:\calX\rightarrow \calY$ which has been obtained by learning on modified training data
$(X_{\pi(1)},Y_1),\dots,(X_{\pi(N)},Y_N)$ where $\pi$ is a uniformly sampled random permutation of the numbers $\{1,\dots, N\}$.
The baseline performance is then estimated by the performance of such an $f$, given a sensible choice of learning algorithm.
Alternatively, labels are permuted but not feature vectors, or both are permuted independently, both of which is equivalent to the above as long as the re-sampling for performance estimation is uniformly random. \citet{sen2017model} introduce a more sophisticated bootstrap strategy for conditional independence testing, following the same rationale and a similar reasoning.

The quite strong (enthymemous) argument in favour of using this $f$ as a baseline is that in jumbling the order of the features $X_i$, no $f$ can make sensible use of the $X_i$ because the relation to the $Y_i$ is destroyed in the process.

We argue that this is a non-sequitur and hence a non-argument - the only certain way to prevent prediction of a test label $Y^*$ by a test feature $X^*$ (without preventing prediction altogether) is, by definition of what this means, actually removing access to the test feature $X^*$. Leaving access to $X^*$ may still enable $f$ to predict $Y^*$ well from $X^*$, irrespective of what happened to the training sample.

One argument which actually shows inadequacy of the permutation baseline is that a Theorem such as Theorem~\ref{Thm:uninfind-conditional} cannot be true for a permutation version of ``uninformed'', since a prediction strategy not restricted in its image can always guess a good prediction functional.

Even worse, there are situations where one need not even resort to guessing, where the true prediction functional may even be perfectly recovered from the permuted data by the very same prediction strategy which also works on the unpermuted data. We give a stylized example where this is the case:

Let $(X,Y) := (1,1)\;\mbox{with probability}\; 2/3,\; (-1,-1)\;\mbox{with probability}\; 1/3,$ and consider, as usual, i.i.d.~samples $(X_1,Y_1),\dots, (X_N,Y_N)$ for training.
A naive algorithm which simply assigns the most frequent label to the most frequent feature - such as the one sketched as Algorithm~\ref{alg:countassoc} - will make correct predictions with high probability. This is true no matter whether the permutation is applied or not, and is independent of the size of the dataset, note for example that none of the variables or decisions in Algorithm~\ref{alg:countassoc} change.

\begin{algorithm}[ht]
\caption{Count association classifier, fitting. This is a theoretical counterexample to the permutation baseline, do not use in practice.\newline
\textit{Input:} Training data $\calD = \{(X_1,Y_1),\dots (X_N,Y_N)\}$, where $X_i$ and $Y_i$ take values in $\{-1,1\}$.
both trainable\newline
\textit{Output:} a classical predictor $f:\calX\rightarrow \calY$ \label{alg:countassoc}}
\begin{algorithmic}[1]
    \State Let $N_X\leftarrow \card\{X_i\;:\;X_i = 1\}$
    \State Let $N_Y\leftarrow \card\{Y_i\;:\;Y_i = 1\}$
    \State If $|N_X - N_Y| < |N- N_X + N_Y|$ output $f: x\mapsto x$
    \State Else output $f: x\mapsto -x$
\end{algorithmic}
\end{algorithm}

The counterexample may seem pathological, but in fact it is only stylized. Much less pathological examples may be easily constructed through replacing the numbers $-1,1$ well-separable clusters in whatever continuous or mixed variable space.

A heuristic remedy of the problem is to strongly restrict the class of potential predictors $f$ used as a permutation baseline, or make restrictive regularity assumptions on the joint distribution of $(X,Y)$ and/or the conditioning variable, such as in the recent work of~ \citet{sen2017model}.

However, if one does not wish to make such assumptions, with Theorem~\ref{Thm:uninfind-conditional} and the above examples and considerations, one may conclude that any attempt to define ``uninformedness'' through the data and not through properties of the learning algorithm may be theoretically insufficient, especially since it does not define a valid class of baselines without further modifications or assumptions, as the above examples and considerations show.

In contrast to the permutation baseline criticized above, the ``constant prediction'' definition of uninformedness forbids $f$, by construction, to make use of the relevant information in prediction. The claim that a constant $f$ is indeed prevented from using any learnt information about the prediction rule is mathematically proven in Theorem~\ref{Thm:uninfind-conditional}. 
\newpage
\section{Graphical Models}\label{sec:gm}
This section will briefly review graphical models and graphical model structure learning, which we will later address using the conditional independence test outlined in Section~\ref{sec:depend}.

\subsection{Informal definition of graphical models}
A probabilistic graphical model is a graph-based description of properties of a joint probability distribution (possibly but not necessarily a full specification thereof).
It usually consists of a directed or undirected graph, together with a fixed convention how to translate the graph structure into a full probability distribution, or more generally, into conditional independence statements about a distribution of not further specified type.

\subsubsection*{Graphs}
A graph $G$ is an ordered pair $(V,E)$ where $V\subseteq E\times E$. The set $V$ are interpreted as vertices (nodes) of the graph, and the elements in $E$ are interpreted as edges (links) of the graph (ordered for directed graphs, unordered for undirected graphs). $G$ is usually visualized by drawing all vertices $V$ and edges between all pairs of vertices in $E$. In graphical models, the vertex set is identified with a collection of (possibly associated) random variables $X = [X_1, ..., X_n]$, and the edges encode some independence or conditional independence information about the components of $X$. Two popular choices are discussed in Section~\ref{sec:indepstatement} below.

\subsubsection*{Probabilistic graphical models}
Graphical model theory is usually concenred two main tasks, structure learning (of the graph) and inference (on a given graph). Inference is concerned with estimating parameters or statistics of the parametric distribution assuming the independence structure prescribed by a fixed graph, from a finite sample drawn from $X$. Structure learning is concerned with inferring the graph from such a sample, thus inferring the encoded independence relations. For an extended introduction to graphical models, the reader might refer to~\cite{koller2009} or~\cite{barber2012}.\\
Graphical model structure learning is usually considered the more difficult of the two tasks, due to the combinatorial explosion of possible graphs.
Manual approaches involve an expert encoding domain knowledge in presence or absence of certain edges; automated approaches usually conduct inference based on parametric distributional assumptions combined with selection heuristics~\cite{koller2009}.

\subsection{Types of graphical models}\label{sec:indepstatement}
We review two of the most frequently used types of graphical models.

\subsubsection{Bayesian Networks}
A Bayesian network is a graphical model which states conditional independence assumptions without making parametric distributional assumptions. The conditional independence assumptions are encoded in a directed acyclic graph over a set of random variables. Acyclicity of the graph implies that each node $X_i$ has a (potentially empty) set of descendants,

\begin{Def}
A node $X_j$ is a descendant of $X_i$ in the graph $G$ if there is a directed path from $X_i$ to $X_j$, where a path is any connection of links that lead from $X_i$ and $X_j$.
\end{Def}

That is, if there is a path following the arrows in $G$, going from $X_i$ to $X_j$. Since the graph is acyclic, no cycles exist, and following the arrows in the graph, the same variable can never be reached twice. Examples of such a networks are shown in Figure \ref{fig:expertgm}.\\

\begin{figure}
  \centering
    \begin{subfigure}[t]{0.15\textwidth}
        \tikz{ %
	    \node[latent, minimum size = 40pt, font=\fontsize{8}{8}\selectfont] (grade) {Grade};
	    \node[latent, below=of grade, minimum size = 40pt, font=\fontsize{8}{8}\selectfont] (letter) {Letter};
	    \edge {grade} {letter};
        }
	\caption{}
    \end{subfigure}%
    \begin{subfigure}[t]{0.3\textwidth}
        \tikz{ %
	    \node[latent, minimum size = 40pt, font=\fontsize{8}{8}\selectfont] (grade) {Grade};
	    \node[latent, below=of grade, minimum size = 40pt, font=\fontsize{8}{8}\selectfont] (letter) {Letter};
	    \node[latent, above=of grade, minimum size = 40pt, font=\fontsize{8}{8}\selectfont] (difficulty) {Difficulty};
	    \node[latent, right=of difficulty, minimum size = 40pt, font=\fontsize{8}{8}\selectfont] (intel) {Intelligence};	
	    \edge {difficulty} {grade};
	    \edge {intel} {grade};
	    \edge {grade} {letter};
        }
	\caption{}
    \end{subfigure}%
    \begin{subfigure}[t]{0.3\textwidth}
        \tikz{ %
	    \node[latent, minimum size = 40pt, font=\fontsize{8}{8}\selectfont] (grade) {Grade};
	    \node[latent, below=of grade, minimum size = 40pt, font=\fontsize{8}{8}\selectfont] (letter) {Letter};
	    \node[latent, above=of grade, minimum size = 40pt, font=\fontsize{8}{8}\selectfont] (difficulty) {Difficulty};
	    \node[latent, right=of difficulty, minimum size = 40pt, font=\fontsize{8}{8}\selectfont] (intel) {Intelligence};	
	    \node[latent, right=of grade, minimum size = 40pt, font=\fontsize{8}{8}\selectfont] (gpa) {GPA};
	    \edge {difficulty} {grade};
	    \edge {intel} {grade};
	    \edge {grade} {letter};
	    \edge {intel} {gpa};
        }
	\caption{}
    \end{subfigure}%
\caption[Expert-based graphical model structure learning]{Example of expert based graphical model structure learning (backtracking), adapted from~\cite{koller2009}.\\(a) The quality of an academic reference letter is determined by the grade \\ (b) The students  intelligence and course difficulty determine the grade.\\ (c) Knowing a students GPA gives additional information about the state of intelligence}
\label{fig:expertgm}
\end{figure}

Bayesian Network graphs arise in a natural way when considering the factorization properties of a probability distribution. Assume we are interested in a probability distribution $P$ over the $Difficulty$ of a course, a students $Intelligence$, the $Grade$ a student achieved in a course, and the quality of a reference $Letter$ received from the tutor of the course, $P(Difficulty, Intelligence, Grade, Letter)$. Further assume the following is true
\begin{itemize}
\item The quality of the $Letter$ is solely determined by the $Grade$ a student received in the course. That is, $Letter \indep \{Difficulty, Intelligence\} | Grade$
\item The $Grade$ of a student depends on the $Difficulty$ of the course and his $Intelligence$
\item $Difficulty$ and $Intelligence$ are not causally influenced by any other variable in the graph
\end{itemize}
This gives a natural way to order the variables for factorization. $Difficulty$ and $Intelligence$ are so-called root nodes, hence their order is irrelevant, however $Grade$ depends on both of them and $Letter$ depends on $Grade$, giving the ordering: $\{Letter, Grade, Difficulty, Intelligence\}$, which will now be denoted as $\{L, G, D, I\}$. An ordering $\{X_1,...,X_n\}$ implies that for $\forall i,j \in \{1,...,n\}$ and $i < j$, $X_j$ can not be a descendant of $X_i$. The distribution is then factorized, according to the ordering,
$$P(L,G,D,I) = P(L|G,D,I)P(G,|D,I)P(D|I)P(I),$$
a standard way to factorize distributions. Working in the independence statements gives
$$P(L,G,D,I) = P(L|G)P(G,|D,I)P(D)P(I),$$
since $L\indep \{D,I\} | G$ and $D \indep I$. Returning to Figure \ref{fig:expertgm} (b) shows that this factorized distribution exactly matches the arrows in the graph, which start at the parents (the conditioning variables) and lead to the children (the variable that the conditional distribution is over).

To understand the independence statements encoded in a Bayesian Network, one needs to first distinguish between active and passive trails \cite[p. 71]{koller2009}. Let a trail be any path between $X_i$ and $X_j$ on $\mathcal{G}$ and a v-structure a structure such that $X_{i-1} \rightarrow X_i \leftarrow X_{i + 1}$, where $X_i$ is a descendant of both $X_{i-1}$ and $X_{i+1}$.
\begin{Def}
A trail between $X_i$ and $X_j$ is active given a conditioning set $Z$, if for every v-structure $X_{i-1} \rightarrow X_i \leftarrow X_{i + 1}$ along the trail, $X_i$ or one if it's descendants is in $Z$ and no other variables on the trail are in $Z$.
\end{Def}
In Figure \ref{fig:expertgm} (c) that means, the trail from \textit{Letter} to \textit{GPA} is active only if conditioned on either the empty set or \textit{Difficulty}, and the trail between \textit{Intelligence} and \textit{Difficulty} is only active if \textit{Grade} or \textit{Letter} is in the conditioning set.
\begin{Def}\cite{barber2012}
If there is no active trail in the graph $G$ between the nodes $X$ and $Y$ given $Z$, then $X\indep Y|Z$ in any distribution consistent with the graph $G$
\end{Def}

\subsubsection{Markov Networks}
A Markov Network, or Markov Random Field, is a graphical model which encodes conditional independence statements in an undirected graph over a set of random variables. While the Bayesian Network is defined in terms of probability distributions, the Markov Network is usually specified in terms of factor products which are unnormalized probability distributions. The scopes of these factors determine the complexity of the system, if the scope of a factor covers all the variables in the graph, the graph is unconstrained, whereas any constraint to smaller scopes decreases its complexity.\\
Figure \ref{fig:markovnet} shows the Markov Network for the (normalized) factor product
$$P(a,b,c,d,e,f,g) = \frac{1}{Z} \phi_1(a,b,c)\phi_2(b,c,d)\phi_3(c,e)\phi_4(e,f)\phi_4(f,g)\phi_5(e,g),$$
with the normalizing constant (partition function) Z. Small letters denote realizations of the capital-lettered nodes. If $P$ is a probability mass function,
$$Z =\sum_{a,b,c,d,e,f,g} \phi_1(a,b,c)\phi_2(b,c,d)\phi_3(c,e)\phi_4(e,f)\phi_4(f,g)\phi_5(e,g),$$
where $\phi(s)$ is such that $\phi:S \rightarrow \mathbb{R}$, $\forall s \in S$.

If $P$ is a probability density function over continuous variables, Z would be attained by integrating over the support of the variables in the scopes of $P$.

\begin{figure}
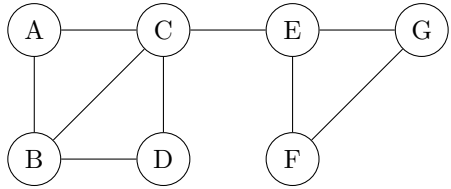

  \centering
        \tikz{ %
	    \node[latent] (A) {A};
	    \node[latent, below=of A] (B) {B};
	    \node[latent, right= of A] (C) {C};
	    \node[latent, below=of C] (D) {D};
	    \node[latent, right= of C] (E) {E};
	    \node[latent, below=of E] (F) {F};
	    \node[latent, right= of E] (G) {G};
	    \edge [-] {A} {B};
	    \edge [-] {A} {C};
	    \edge [-] {C} {B};
	    \edge [-] {D} {B};
	    \edge [-] {C} {D};
	    \edge [-] {C} {E};
	    \edge [-] {E} {F};
	    \edge [-] {E} {G};
	    \edge [-] {F} {G};
        }
\caption{Markov Network}
\label{fig:markovnet}
\end{figure}

Since the links are undirected, independence statements arising from Markov Networks are symmetric. A path between $X_i$ and $X_j$, $i \neq j$ is called active given $Z$, if no node on the path is in $Z$. To determine whether $X_i$ and $X_j$ are conditionally independent given $Z$ in all distributions consistent with $\mathcal{G}$, one again considers all paths going from $X_i$ to $X_j$. It holds that $X_i \indep X_j | Z$ if there is no path between $X_i$ and $X_j$ that is active given $Z$.\\
So to attain a Markov Network that is consistent with a probability distribution, one has to consider the pairwise conditional independence properties of the variables that the nodes in the graph represent. In general, if in a Markov Network there is no link between a variable $X_i$ and $X_j$, then $X_i \indep X_j | X \setminus \{X_i,X_j\}$ in any distribution $P$ over $X$ consistent with the graph. These are called the pairwise Markov-independencies of the graph \cite[ch. 17.2]{friedman2001}.\\[0.1in]
The independencies that can be encoded using Bayesian and Markov Networks differ. Bayesian Networks are natural to express (hypothesized) causal relationships, distributions where an ordering can be attained and thus have nice factorization properties, while Markov Networks are natural for expressing a set of pairwise Markov independencies. Additionally, Bayesian Networks can be transformed into an undirected graph by a (potentially forgetful) process called "moralization" \cite[p. 135]{koller2009}.

\subsection{Graphical model structure learning}
There are two dominant approaches to structure learning, independence testing based and score-based methods, however they both suffer to a varying extent from the same underlying problem: combinatorial explosion. Given a probability distribution $P$ (with associated graph $G$) over a multivariate random variable $X = [X_1, ..., X_n]$, for undirected networks, between any pair of variables $X_i$ and $X_j$ there can be a link or no link in $G$. Since the links are undirected, there are thus $\frac{p(p-1)}{2}$ potential edges in the graph, and an exponential number of possible graphs. If one wants to test if $X_i$ and $X_j$ are independent given $Z$, $Z \subset X$, one needs to test if any path between the two is active, leading to a potentially very large number of tests. A graph's complexity can be decreased by upper bounding the number of edges for each node, however, it was shown that, for directed graphs where each node can have $d$ parents, for $d > 1$, the problem a finding an optimal graph is NP-hard \cite[p. 811]{koller2009}. Generally, the space of graphs is reduced by making structural (what type of relationships can be captured in the graph) and parametric (constraints on the distribution) assumptions. An additional problem is posed when there exists more than one optimum to the structure learning method, as a result of the fact that different graphs can be equivalent (and thus not identifiable) in the respective search space. An example of this are the two graphs shown in Figure \ref{fig:dagequiv}, which arises from the fact that if $\{X,Y\}\sim P$, $P$ can be decomposed either into $P(X|Y)P(Y)$ or $P(Y|X)P(X)$, which are equivalent.
\begin{figure}
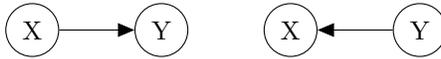

  \centering
        \tikz{ %
	    \node[latent] (X1) {X};
	    \node[latent, right=of X1] (Y1) {Y};
	    \node[latent, right= of Y1] (X2) {X};
	    \node[latent, right=of X2] (Y2) {Y};

	    \edge {X1} {Y1};
	    \edge {Y2} {X2};
        }
\caption[Equivalent DAGs]{Two DAGs equivalent with respect to independence statements}
\label{fig:dagequiv}
\end{figure}
There are different ways to approach these problems, some of which are outlined below.

\subsubsection{Score-based methods}
Score-based methods attempt to find a (usually local) optimum of a score function in the space of possible graphs. Examples of this can be the Kullback-Leibler divergence scoring function for directed graphical models, which can be used to find the optimal Chow-Liu tree graph \cite[p.219]{barber2012}. While this is a convex algorithm, finding the global optimum, it does so under the heavy constraint that each node only has one parent at maximum (resulting in a tree-structured graph). For undirected graphs over discrete variables, gradient descent can be used directly on the log likelihood to find the local optimum \cite[p. 221]{barber2012}, however each gradient evaluation requires calculation of the partition function $Z$ (by summing over all discrete states), which makes this algorithm very expensive. Many of the algorithms for score-based methods are designed for discrete variables, and not applicable when the variables are continuous. Performance is usually evaluated by calculating the likelihood score on a hold-out sample.
One other area not mentioned above is \textit{Bayesian model averaging} (e.g. \cite{Fragoso2015}), which seeks to improve the above methods by averaging over different model structures. It can be viewed as an ensembling method for the score-based methods mentioned above. State-of-the-art score based methods are oftentimes very expensive or make very strong assumptions on the underlying distribution $P$, such as tree-structure of the graph \citep[ch. 18]{koller2009}.

\subsubsection{Independence testing based methods}\label{sec:indepbased}
Unlike the score-based models, independence testing based models conduct independence tests between variables or sets of variables locally and then aggregate the information to produce the graphical model. These sets of independence tests are then used to infer the structure of the graph in a straightforward manner, based on the properties of the respective type of graphical model. An example of this is the PC-Algorithm~\cite[e.g. p. 214]{barber2012} which is used in attempts at causal discovery. Another algorithm for recovering the undirected skeleton is given by Algorithm 3.3 of~\cite{koller2009}. 

There are probably two main reasons that this latter independence testing based method is not used for graphical modelling:
\begin{itemize}
\item[(a)] is that it relies on a conditional independence test where the conditioning is on multiple variables, i.e., all except two. As outlined in Section~\ref{sec:background}, this is a largely unsolved problem expect when there is at most one conditioning variable, i.e., if there are three or less variables in total.
\item[(b)] It is hard to regulate the bias-variance trade-off, unlike for score-based methods where this may be achieved through the use of constraints such as an upper bound on the number of parents.
\end{itemize}

However, these two issues may be addressed through our novel predictive conditional independence testing framework:
\begin{itemize}
    \item[(a)] The predictive conditional independence test described in the subsequent Section~\ref{section: pseudocode}, based on the theoretical insights in Section~\ref{sec:depend}, allows for efficient conditional independence testing with variables of any dimension, and thus provides a predictive approach to learning graphical models.
    \item[(b)] The link to the supervised learning workflow allows for direct adoption of well-known strategies to trade-off bias and variance and error estimation from supervised learning, such as regularization and cross-validatied tuning, to the graphical modelling workflow..
\end{itemize}

In addition, our suggested algorithms have further desirable features:
\begin{itemize}
\item The intimate link to supervised learning also allows for a controlled trade-off between power of the algorithm and time complexity. This trade-off is readily available when using the conditional independence testing routine described in Section ~\ref{section: pseudocode}, since the user can choose estimation methods of varying power/complexity in order to influence the total run time needed by the algorithm.
\item Section~\ref{sec:fdr} will also introduce a false-discovery-rate control routine to provide a tool that lets a user control the proportion of false-positives (erroneously found edges in the estimated  graphical model structure), regardless of the size of the graph.
\end{itemize}

\subsubsection*{Note on causal inference}
Graphical models, and specifically Bayesian Networks, are a natural way to express hypothesized causality, since the arrows seem to express causation. However, when actually learning graphical models from data, causality may only be inferred by making strong (and often incorrect) assumptions on the underlying distribution, or by collecting data in a manner that allows for causal interpretation, namely in proper experimental set-ups (e.g., following Pearl's do-calculus, or standard experimental study design). As generally for graphical model structure learning, all algorithms outlined in this paper are not to be interpreted as expressing causality, but merely as producing a collection of statements about association which certify for causality only in combination with the right experimental set-up.

\newpage 
\section{Predictive inference algorithms}\label{sec:API}
This section will first introduce the proposed predictive conditional independence testing routine (PCIT), which is based on the methodology outlined in Section \ref{sec:depend}, and important subroutines related to performance and error control of the test. After, an algorithm to leverage the independence test into a graphical model structure learning routine that addresses the issues outlined in \ref{sec:indepbased} is presented.

\subsection{Predictive conditional independence testing}\label{section: pseudocode}
Algorithm \ref{alg:indeptest} implements the results from Section \ref{sec:depend} to test if a set of variables $Y$ is independent of another set of variables $X$, given a conditioning set $Z$ (optional, if not provided the algorithm tests for marginal independence). It will later be used as a subroutine in the graphical model structure learning, but can also be used in it's own right for tasks such as determining if a subset $X$ would add additional information when trying to predict $Y$ from $Z$.

\begin{algorithm}[h!]
  \caption{Predictive conditional independent test (PCIT)}
    \label{alg:indeptest}
  \begin{algorithmic}[1]
     \\Split data into training and test set
    \For{all variables y $\in$ Y}
\\ \hspace{0.1in} \textbf{on training data:}
\\ \hspace{0.18in} find optimal functional $f$ for predicting y from Z
\\ \hspace{0.18in} find optimal functional $g$ for predicting y from \{X,Z\}
\\
\\ \hspace{0.1in} \textbf{on test data:}
\\ \hspace{0.18in}calculate and store p-value for test that generalization loss of $g$ is lower than $f$
\\
\EndFor
\\
\If{symmetric test needed}
\\  \hspace{0.1in} exchange X and Y, repeat above process
\EndIf
\\
\Let{p\_values\_adjusted}{Apply FDR control to array of all calculated p-values}
\\
\Return p\_values\_adjusted
\end{algorithmic}
\end{algorithm}
When the test is used as a marginal independence test, the optimal prediction functional for $f$ is the functional elicited by the loss function, that is, the mean of $y$ for continuous outputs, and the class probabilities of $y$ for the discrete case (Appendix \ref{app: elicit}). The link to supervised learning further allows/forces the user to distinguish between two cases. Independence statements are symmetric, if $X$ is independent of $Y$, then $Y$ is independent of $X$, in both the marginal and conditional setting. The same cannot be said in supervised learning, where adding $X$ to predicting $Y$ from $Z$ might result in a significant improvement, but adding $Y$ does not significantly improve the prediction of $X$, given $Z$ (as can be seen in the asymmetry of OLS). So if a user is interested in a one-sided statement, the algorithm can be run for a single direction, for example to evaluate if a new set of variables improves a prediction method, and is thus worth collecting for the whole population. If one is interested in making a general statement about independence, the test is ``symmetrized" be exchanging $X$ and $Y$, and thus testing in both directions, and FDR control be applied to the union of the p-values.\\
It is important to distinguish between the two types of tests and null-hypotheses in this test. On one hand (in the symmetric case) for each variable in $X$ and $Y$, it will be assessed, if adding $X$ to the prediction of $y \in Y$ from $Z$ results in an improvement (and vice versa). The null-hypothesis of this ``prediction-null'' is that no improvement is achieved. After all these p-values are collected, a multiple testing adjustment is applied (Section \ref{sec:fdr}), after which the original null-hypothesis, that $X$ and $Y$ are conditionally independent, is assessed. We reject this ``independence-null'', if any one of the ``prediction-nulls'' can be rejected after adjustment. The p-value of the ``independence-null'' hence reduces to the lowest p-value in all the ``prediction-nulls''.
As such, the null of the independence test is that all the ``prediction-nulls'' are true. False discovery rate control, the chosen multiple testing adjustment, is appropriate since it controls the family-wise error rate (FWER), the probability of making at least one type 1 error, if all the null-hypotheses are true \citep{benjamini1995}.

\subsubsection{False-discovery rate control}\label{sec:fdr}
To account for the multiple testing problem in Algorithm \ref{alg:indeptest}, the \textit{Benjamini - Hochberg - Yekutieli procedure} for false-discovery rate (FDR) control is implemented \citep{Benjamini2001}. In their paper, they state that traditional multiple testing adjustments, such as the Bonferroni method, focus on preserving the FWER. That is, they aim to preserve the probability of making any one type 1 error at the chosen confidence level. As a result, tests are usually very conservative, since in many multiple-testing scenarios the p-values are not independently distributed (under the null), and thus the power of these tests can be significantly reduced.\\

In their 2001 paper, they propose to control the false discovery rate instead, \textit{``[..] the expected proportion of erroneous rejections among all rejections''}, as an alternative to the FWER. The FDR allows for more errors (in absolute terms) when many null-hypothesis are rejected, and less errors when few null-hypotheses are rejected.
\begin{algorithm}
  \begin{algorithmic}[1]
    \Require{Set $\{p_{(i)}\}_{i=1}^m$ s.t. $p_j$: p-value for observing $X_j$ under $H_0^j$
    }
    \\
	Sort the p-values in ascending order, $p_{(1)} \leq ... \leq p_{(m)}$ \\
	Let $q = \alpha / (\sum_{i=1}^m{1 / i})$ for chosen confidence level $\alpha$ \\
	Find the \textbf{k} s.t. $k = max(i:p_{(i)}\leq \frac{i}{m}q)$ \\
	Reject $H_0^j$ for $j \in {1,...,k}$
  \end{algorithmic}
 \caption{The Benjamini-Hochberg-Yekuteli Procedure
 \label{alg:FDRcontrol}}
\end{algorithm}
\\ Algorithm \ref{alg:FDRcontrol} shows the procedure outlined in \cite{Benjamini2001}. They showed, that this procedure always controls the FDR at a level that is proportional to the fraction of true hypotheses. As hinted at before, while this algorithm controls the false-discovery rate, in the special case where all null-hypotheses in the multiple testing task are assumed to be true, it controls the FWER, which then coincides with the FDR. This is especially useful, since both scenarios occur in the graphical model structure learning routine described in Section \ref{sec: gmestimcode}.\\
For the choice of optimal false discovery-rate for an investigation, even more so than in the classical choice of appropriate type 1 error, there is no simple answer for which rate might serve as a good default, and it is highly application dependent. If the goal of the procedure is to gain some insight into the data (without dire consequences for a false-discovery), a user might choose a FDR as high as 20\%, meaning that, in the case of graphical model structure learning, one in five of the discovered links is wrong on average, which might still be justifiable when trying to gain insight into clustering properties of the data. This paper will still set the default rate to 5\%, but deviate willingly from this standard whenever deemed appropriate, as should any user of the test.

\subsubsection{Improving the prediction functionals}\label{sec:ensemble}
In practice, when assessing the individual ``prediction-nulls'' in Algorithm \ref{alg:indeptest}, the power of the test (when holding the loss function constant) depends on the capability of the the prediction method to find a suitable functional $g$ that outperforms the baseline $f$. That means, a general implementation of the independence test needs to include a routine to automatically determine good prediction functionals for $g$ and $f$. The implementations in the pcit package presented in Section \ref{sec:package} support two methods for the automatic selection of an optimal prediction functional. Both methods ensemble over a set of estimators, which are shown in Table \ref{table: ensemble}. The prediction functionals refer to the estimator names in sklearn\footnote{Details can be found here \url{http://scikit-learn.org/stable/modules/classes.html}}. Some are constrained to specific cases (e.g. \textit{BernouilliNB}, which only applies when the classification problem is binary).
\begin{table}
\centering
\begin{tabular}{l | l l}
 & Regression & Classification \\ \hline
 Stage 1 & ElasticNetCV & BernoulliNB \\
  & GradientBoostingRegressor & MultinomialNB \\
  & RandomForestRegressor & GaussianNB \\
  & SVR & SGDClassifier\\
  & & RandomForestClassifier \\
    & & SVC\\\hline
 Stage 2 & LinearRegression&LogisticRegression
\end{tabular}
\caption{Prediction functionals used for Stacking/Multiplexing}
\label{table: ensemble}
\end{table}

\paragraph{Stacking}Stacking refers to a two-stage model, where in the first stage, a set of prediction function is fit on a training set. In the second stage, another prediction function is fit on the outputs of the first stage. If the prediction function in the second stage is a linear regression, this can be viewed as a simple weighted average of the outputs in the first stage. In theory, stacking allows the user to fit one single stacking predictor instead of having to compare many potential prediction functions based on the model diagnostics, as in the second stage, better methods get more weight (in the expectation). As an additional effect, improvement in the prediction accuracy through ensembling of predictors can take place (see e.g. Section 19.5 of \cite{Aggarwal2014}). The used stacking regressor and classifier can be found in the Python package \textit{Mlxtend} \citep{mlxtend}.

\paragraph{Multiplexing}When multiplexing over estimators, the training set is first split into a training and validation set, a common procedure to find optimal hyperparameters. After, the predictors are fit individually on the training set, and each predictors expected generalization loss is estimated on the validation set. One then proceeds by choosing the predictor with the lowest estimate for the empirical generalization loss, and refits it using the whole training data (including the former validation set), and then uses the fitted estimator for prediction.

\subsubsection{Supervised learning for independence testing}
Algorithm \ref{alg:indeptest} shows how to leverage supervised learning methodology into a conditional independence test. This has a major advantage over the methods outlined in Section \ref{sec:background}, as the supervised prediction workflow is of utmost interest to many areas of science and business, and, as a result, a lot of resources are going into development and improvement of the existing methods. By making a link between predictive modelling and independence testing, the power of independence testing will grow in the continuously increasing power of the predictive modelling algorithms.

\subsection{Predictive structure learning of undirected graphical models}\label{sec: gmestimcode}
This section outlines a routine to learn the vertices in a directed graph (the skeleton) for a data set by conducting a range of conditional independence tests with the null hypothesis of conditional independence. Section \ref{sec:indepstatement} outlines the conditional independence statements of a Markov network. In a directed graph, if variables $X_i$ and $X_j$ have no direct edge between them, they are conditionally independent given all other variables in the network.
\begin{algorithm}
  \caption{Undirected graph structure estimation}
    \label{alg:undirstruct}
  \begin{algorithmic}[1]
    \For{any combination $X_i$, $X_j$ s.t. $i \neq j$}
    \Let{$X_-$}{$X \setminus \{X_i,X_j\}$}
    \Let{$p\_val_{i,j}$}{p-value for test $X_i \indep X_j |X_-$}
    \EndFor
    \Let{p\_val\_adj}{Apply FDR control on p\_val matrix}
    \\
\Return p\_val\_adj
\end{algorithmic}
\end{algorithm}

Algorithm \ref{alg:undirstruct} describes the skeleton learning algorithm for an input set $X = [X_1, ..., X_n]$, by considering all possible pairs of variables in the data set, and testing if they are conditionally independent given all other variables. The output p\_val\_adj is a symmetric matrix with entries i,j being the p-value for the hypothesis that in the underlying distribution, $X_i$ and $X_j$ are independent, given all other variables in $X$, and hence in the graph $G$ describing it, there is no link between the vertices for $X_i$ and $X_j$. Ultimately, links should be drawn where the adjusted p-values are below a predefined threshold. There are O($n^2$) function evaluations in the for-loop, where $n$ is the number of variables. Section~\ref{sec:experiments} will provide experimental performance statistics for the algorithm and showcase applications on real data sets.

\section{pcit package}\label{sec:package}
\subsection{Overview}
The Python\footnote{\url{https://www.python.org/}} package implementing the findings and algorithms of this paper can be found on \url{https://github.com/alan-turing-institute/pcit} and is distributed under the name \href{https://pypi.python.org/pypi/PCIT}{pcit} in the Python Package Index. This section will first provide an overview of the structure and important functions of the package. As this implementation can be thought of as a wrapper for scikit-learn (sklearn) estimators, this section will then describe the sklearn interface and how the package interacts with the sklearn estimators. Lastly, simple application-examples are given.

\subsubsection{Use cases}
The package has two main purposes, independence testing and structure learning. While univariate unconditional independence testing is possible, it is not expected to outperform current methodology in the simple tasks. The main use cases are:

\paragraph{Multivariate independence tests,}\hspace{-0.09in} such as for checking whether there is association between two sets of variables. For example, testing association between demographics or customer behaviour; or, for hedging purposes to determine which financial products are independent from the ones already in the portfolio (under the assumption of independent samples which is not always true in this setting).
\paragraph{Conditional independence tests,}\hspace{-0.09in} such as for assessing association while controlling (= conditioning on) other variables. For example, testing an intervention has an effect while controlling for observational variables; or, testing whether new, costly measurements add predictive/associative power over a set of easy-to-obtain measurements.
\paragraph{Graphical model structure learning,}\hspace{-0.09in}such as for finding clusters in the data as part of an exploratory data analysis, for thoroughly investigating associations in the data, or causal relations in the presence of an intervention (both in section \ref{sec:econ}).\\[0.1in]
For these tasks, the PCIT package serves as a readily available tool that works without the need for manual choices or hyperparameter tuning, and scales well in the dimensionality of the data.

\subsubsection{Dependencies}
The package has the following dependencies:
\paragraph{Scipy \citep{Scipy},}\hspace{-0.09in}for the calculation of p-values
\paragraph{Sklearn \citep{scikit-learn},}\hspace{-0.09in}for its estimators (predictors)
\paragraph{Mlxtend \citep{mlxtend},}\hspace{-0.09in}for the implementation of stacking

\subsection{API description}\label{sec:APInew}
The package introduces three main routines, one for automated prediction (MetaEstimator), conditional independence testing (PCIT), and undirected graph structure learning (find\_neighbours). The following section gives on overview, the function signatures can be found in Appendix \ref{app:signatures}.

\subsubsection*{MetaEstimator}
The MetaEstimator class provides a user with a type-independent predictor that automates model selection for given training data, by automatically determining appropriate loss functions and prediction functionals. It is initialized for sensible defaults, which should generally lead to good results, but can be changed for specific tasks (such as the use of more complex base estimators for more powerful, but also more computationally demanding, predictions). The ensembling methods used for the estimator are described in Section \ref{sec:ensemble}. For regression, the square loss is used for training and calculation of the residuals, for classification, the logistic loss serves as the loss function. \\

\subsubsection*{PCIT}
PCIT implements the conditional independence test in Algorithm \ref{alg:indeptest} to test if two samples stem from conditionally (or marginally) independent random variables. The MetaEstimator class is used as a prediction functional, and hence the user can trade off between computational complexity and power by adjusting the chosen MetaEstimator. That is, if speed is important, the used MetaEstimator should be a combination of base estimators and ensembling method that is quick in execution, whereas if computational resources are vast and a more powerful test is needed, the used base-estimators should be highly tuned.\\

\subsubsection*{find\_neighbours}
find\_neighbours implements Algorithm \ref{sec: gmestimcode} to learn the undirected skeleton for an input data set $X$, using the PCIT.

\subsection{Function signatures}

\subsubsection*{PCIT}
Type: function\\[0.1in]
\begin{tabular}{>{\small}l >{\small}l >{\small}l >{\small}l} \hline
& Name &Description (type)& Default \\ \hline
Inputs:  & x & Input data set 1 ([n x p] numpy array) & \\
   & y & Input data set 2 ([n x q] numpy array) & \\
 & z & Conditioning set ([n x r] numpy array) & None (empty set) \\
 & estimator & Estimator object to use for test (MetaEstimator) & MetaEstimator()\\
  & parametric & Parametric or nonparametric test (bool), section \ref{section: signiftest} & False \\
  & confidence & Confidence level for test (float [0,1]) & 0.05 \\
  & symmetric & Conducts symmetric test (bool), section \ref{section: pseudocode} & True \\
 \hline
 Outputs: & p\_values\_adj &  p-values for "prediction nulls" of each $y\in Y$  (list) & \\
 & independent & tuple, first value shows if "independence-null" is rejected (bool) & \\
 & & \ \ second value is p-value of "independence-null"  (float [0,1]) & \\
  & loss\_statistics & RMSE difference for baseline $f$ and altern. $g$, & \\
  & & loss residuals with standard deviation, for each $y \in Y$. & \\
  & & Only applicable if Y continuous & \\
\end{tabular}

\paragraph{Note:}\hspace{-0.09in}The variance of the difference is estimated by assuming 0 covariance between the residuals of baseline $f$ and alternative $g$, which generally leads to more conservative confidence intervals for error residuals (due to the irreducible error, prediction residuals for different methods are generally positively correlated).

\newpage

\subsubsection*{MetaEstimator}\label{app:signatures}
Type: class
\subsubsection*{Methods}

\begin{tabular}{>{\small}l >{\small}l >{\small}l >{\small}l} \hline \hline
\textbf{init} & Name &Description (type)& Default \\ \hline
 Inputs:  & method & Ensembling method ('stacking', 'multiplexing' or None) & 'stacking'  \\
    & estimators & Estimators to ensemble over (2-tuple of lists of sklearn & None (default estim, \\
    && \ estimators [regression estim], [classification estim]) & \ section \ref{sec:ensemble})\\
    & method\_type & Task definition ('regr', 'classif', None) & None (auto selection) \\
    & cutoff & Cutoff for automatic selection of method\_type (integer) & 10\\ \hline \hline

\textbf{get\_estim} & & &\\ \hline
 Inputs:  & y & Dependent variable ([n x 1] numpy array) & \\
  Outputs: & estimators & Appropriate set of estimators (list) & \\ \hline \hline

\textbf{fit} & & &\\ \hline
 Inputs:  & x & Independent variables ([n x p] numpy array) & \\
  & y & Dependent variable ([n x 1] numpy array) & \\
  Outputs: & fitted & Fitted estimator (MetaEstimator) & \\ \hline \hline

\textbf{fit\_baseline} & & & \\ \hline
 Inputs:  & x & Independent variables ([n x p] numpy array) & \\
  & y & Dependent variable ([n x 1] numpy array) & \\
  Outputs: & fitted & Fitted uninformed baseline estimator (MetaEstimator) & \\ \hline \hline

\textbf{predict} & & & \\ \hline
Requires: &\multicolumn{2}{l}{MetaEstimator has been fitted} & \\
 Inputs:  & x & Test set independent variables ([n x p] numpy array) & \\
  Outputs: & predictions & Predictions for test set ([n x 1] numpy array) & \\ \hline \hline

 \textbf{get\_resid} & & & \\ \hline
 Inputs:  & x\_train & Training set independent var. ([n x p] numpy array) & \\
   & y\_train & Training set dependent variables ([n x 1] numpy array) & \\
   & x\_test & Test set independent variables ([n x p] numpy array) & \\
   & y\_test & Test set dependent variables ([n x 1] numpy array) & \\
   & baseline & Should baseline be fitted (boolean) & False \\
  Outputs: & resid & Residuals for prediction strategy ([n x 1] numpy array) & \\ \hline \hline

\end{tabular}

\subsubsection*{find\_neighbours}
Type: function\\
\begin{tabular}{>{\small}l >{\small}l >{\small}l >{\small}l} \hline
& Name &Description (type)& Default \\ \hline
Inputs:  & X & Input data set ([n x p] numpy array) & \\
  & estimator & Estimator object to use for test (MetaEstimator) & MetaEstimator()\\
  & confidence & False-discovery rate (float [0,1]) & 0.05 \\ \hline
 Outputs: & skeleton & Matrix, p-values for each indep test ([p x p] numpy array) &  \\
 & skeleton\_adj  & Learnt graph after applying FDR control ([p x p] numpy array) & \\
\end{tabular}

\subsection{API design}\label{sec:sklearnwf}
The API is designed as a wrapper for Scikit-learn (sklearn), a package in the Python programming language, that aims to provide a user with a consistent, easy-to-use set of tools to analyze data \citep{scikit-learn}. It is one of the most-used tools in today's supervised learning community, which is why it is chosen as the supervised prediction workflow to build on for the predictive conditional independence test. This section will outline the advantages of basing the test on the sklearn package.

\subsubsection{Sklearn interface}\label{sec:sklearn}
Sklearn is built around estimator objects, which implement a consistent set of methods. Estimators provide a fit and, if applicable, a predict method, in order to be able to fit the estimator to training data and then predict on a test set. Additionally, sklearn provides standardized approaches to model selection and hyperparameter-tuning as well as data transformation and ensembling methods (see \cite{buitinck2013} for a more thorough discussion). The methods can easily be combined to create more powerful estimators. Defaults are chosen sensibly so that in most cases, an initial fit of a method to data requires little manual parameter specification from the user's side. While the highly automated and simplified approach of sklearn lowers the bar of entry when aiming to generate knowledge from data, it also comes with a downside. For most statistical applications that exceed fitting and predicting from a data set, such as inference on the parameters and hypotheses about prediction accuracies, the relevant subroutines are missing from the API. Relevant statistics can however be attained by interfacing it with other packages (such as SciPy).

\subsubsection{Wrapper for Sklearn estimators}
As we saw in section \ref{sec:APInew}, the conditional independence test described in Algorithm \ref{alg:indeptest} uses the newly defined \textbf{MetaEstimator} class to automate determining the optimal prediction functional for a given task. It does so, by ensembling over a set of base estimators from sklearn. These are either chosen to be the sensible defaults described in Table \ref{table: ensemble}, or can be passed by the user as a tuple of lists of sklearn base estimators. This is required since regression and classification tasks rely on vastly different prediction functionals, and thus need to be specified separately. As a general rule, the passed regressors need to have a fit and a predict method, whereas the classifiers need to have a fit and a predict\_proba method. Requirements might be more stringent for certain types of data or certain estimators, however specifying an unsuitable estimator class will result in an error message as specified by the respective class, allowing the user to either remove the unsuitable class or proceed with the sensible defaults. As mentioned before, this gives a user a flexible tool to trade off between power and computational complexity. If in need of a fast method, one can use an algorithm that runs in linear time, such as stochastic gradient descent linear regression, whereas if a test with high power is needed, one can pass hyper-tuned estimators to the function, that take longer to run but generalizes better for prediction on unseen data.

\subsection{Examples}
This section will provide some simple examples of how the code is used. For the following it is assumed that data sets X, Y and Z, all of the size [number of samples $\times$ number of dimensions], are loaded as numpy arrays, and have matching numbers of samples (sample indices in X, Y and Z correspond to the same sample). After installing the pcit package, import the relevant objects:

\begin{lstlisting}[language=Python, backgroundcolor = \color{lightgray}, numbers = left]
from pcit import MetaEstimator, StructureEstimation, IndependenceTest
\end{lstlisting}

Testing if $X \indep Y | Z$, using the default values:

\begin{lstlisting}[language=Python, backgroundcolor = \color{lightgray}]
IndependenceTest.PCIT(X, Y, z = Z)
\end{lstlisting}

Testing if $X \indep Y | Z$, with a custom MetaEstimator, multiplexing over a manually chosen set of estimators:

\begin{lstlisting}[language=Python, backgroundcolor = \color{lightgray}]
from sklearn.linear_model import RidgeCV, LassoCV,
                    SGDClassifier, LogisticRegression

regressors = [RidgeCV(), LassoCV()]
classifiers = [SGDClassifier(), LogisticRegression()]

custom_estim = MetaEstimator.MetaEstimator(method = 'multiplexing',
                                estimators = (regressors, classifiers))

IndependenceTest.PCIT(X, Y, z = Z,
                estimator = custom_estim)
\end{lstlisting}

Learning the undirected skeleton of X:

\begin{lstlisting}[language=Python, backgroundcolor = \color{lightgray}]
StructureEstimation.find_neighbours(X)
\end{lstlisting}
Concrete outputs are shown in section \ref{sec:experiments} below.

\section{Experiments}\label{sec:experiments}
This section will first evaluate the performance of the proposed algorithms, and then provide some examples of applications on real world data sets. All performance tests can be found on Github\footnote{\url{https://github.com/alan-turing-institute/pcit/tree/master/tests}}.

\subsection{Performance tests}\label{sec:performance}
This section will report on performance tests for the algorithms derived in Section \ref{sec:API}. First the power of the the predictive conditional independence routine is bench-marked against current state-of-the-art methodology, then various tests on the directed graph structure learning algorithm are conducted.

\subsubsection{Performance of conditional independence test}
In this section the conditional independence routine will be bench-marked against the previous research, namely the kernel based approach for conditional independence testing, which is is shown to be more powerful than other conditional independence testing algorithms in \citep{Zhang2012} (on multivariate Gaussian data). The used code for the kernel test is taken from GitHub\footnote{\url{https://github.com/devinjacobson/prediction/tree/master/correlations/samplecode/KCI-test}}. To conduct a test using data that is drawn from a distribution that more closely resembles real world data, as opposed to the synthetic (Gaussian) data commonly used for performance tests, the UCI Wine Repository data set \citep{lichman2013} is used as follows:
\begin{itemize}
    \item The columns 'Alcohol', 'Malic Acid' and 'Magnesium' are randomly permuted (to make them independent) and will serve as $X$, $Y$ and $noise$ arrays respectively
    \item Vector $Z$ is created by individually sampling vectors $X'$, $Y'$ and $noise'$ of size n with replacement from $X$, $Y$ and the noise vector, and then calculating $$Z_i = \text{log}(X'_i) \times \text{exp}(Y'_i) + u * \sqrt{noise'_i}\mbox{, }i \in \{1,...,n\},$$ where $u$ is the sign, uniformly drawn from $\{-1,1\}$
\end{itemize}

This results in a scenario, where $X \indep Y$, but $X  \not\!\perp\!\!\!\perp Y | Z$, and the signal to noise ratio is high for small sample sizes. The test will be conducted by increasing the sample size from 100 to 5000, and calculating the run times and power for both approaches. For each sample size, the PCIT is run 500 times, and the KCIT is run 200 times, since the KCIT is more computationally demanding than the PCIT. The only exception is $n = 5000$ for the KCIT, which is run 25 times, since the time complexity would be too high to draw more samples. Both methods are run for their default values, without additional manual tuning, and at a confidence level of 5\%. Each time, $\{X',Y',Z\}$ is sampled, and then the conditional independence tests are applied and the run time is recorded. If they reject independence at a 5\% level, the round counts as a success, 1, otherwise 0. The power and run times are then calculated by averaging over the results, and standard errors for the power are attained by realizing that the standard error of the power for a rerun number of B is the standard error of $\frac{X}{B}$, where $X \sim Bin(B,\theta)$, where $\theta$ is the observed power (the sample mean).

\begin{table}
\centering
 \begin{tabular}{l | l | r r r r r r}
  & n & 100 & 200 & 500 & 1000 & 2000 & 5000 \\ \hline
 \multirow{2}{*}{PCIT} & Power & $\underset{(0.006)}{0.020}$ & $\underset{(0.009)}{0.046}$ &$\underset{(0.021)}{0.332}$ &$\underset{(0.021)}{0.672}$ &$\underset{(0.017)}{0.832}$ &$\underset{(0.007)}{0.970}$ \\
 & Time (s) & 0.32 & 0.38 & 0.49 & 0.624 & 1.31 & 4.79 \\ \hline
  \multirow{2}{*}{KCIT} & Power & $\underset{(0.015)}{0.050}$ & $\underset{(0.019)}{0.085}$ &$\underset{(0.027)}{0.185}$ &$\underset{(0.033)}{0.325}$ &$\underset{(0.028)}{0.8}$ & $\underset{(*)}{1}$\\
 & Time (s) & 0.57 & 1.25 & 9.8 & 44 & 383 & 4758 \\ \hline
 & Stand. difference & -1.8 & -1.78 & 4.25 & 8.85 & 0.97 & *\\
 \end{tabular}
 \caption[Power and complexity comparison]{Performance statistics for the newly proposed predictive conditional independence test (PCIT) and the kernel based approach (KCIT). The values in brackets show the estimated standard errors. The last row shows the standardized difference between the power estimates, PCIT - KCIT}.
  \label{table:results}
\end{table}

The results are shown in Table \ref{table:results}. The power at the higher end of the sample sizes seems to be similar (it is important to note that the power of 1 for the KCIT for $n = 5000$ was achieved on 25 resamples only), where as in the range 500 to 1000 samples, the proposed predictive conditional independence test shows a significantly higher power. For small $n$, the KCIT seems to fare significantly better, however both approaches have a very low power, and the PCIT especially shows the power levels below the confidence levels, which might indicate a discrepancy between true type 1 error and expected type 1 error. Important to note is the very high computational complexity of the kernel-based approach for a data set of size 5000, with a run time of approximately 80 minutes per test, while the predictive conditional independence test still has a very low run time of 4.8 seconds. This is to be taken with a grain of salt, since the tests were run in different languages (PCIT in Python, KCIT in MATLAB), but it is apparent that PCIT scales much better than KCIT, while the both converge to a power of 1.

\subsubsection{Performance of structure learning algorithm: Error rates}
This test will show that the graphical model learning algorithm is capable of recovering the true graph with high probability as the sample size increases. No comparison with alternative tests is made, as it would lead to infeasible run times.\\[0.1in]
The data used in the performance tests is generated as follows:
\begin{enumerate}
    \item Sample a random positive definite, sparse, symmetric precision matrix $M$. The size of the entries of this matrix (relative to the diagonal) determine the signal to noise ratio, and are thus thresholded to 0, if below a certain value.
    \item Invert $M$, $M' = inv($M$)$ and use $M'$ as covariance matrix to sample from a multivariate Gaussian distribution. This has the effect, that the zero-entries in $M$ express zero-partial correlations \citep[ch. 17.3]{friedman2001} between respective variables (and hence, lack of an edge between the two nodes in the graph). That is, for a multivariate Gaussian random variable $X$, $X = [X_1, ..., X_p]$, $M_{i,j} = 0 \implies X_i \indep X_j | X \setminus \{X_i, X_j\}$.
    \item Sample $\mathcal{D}$, a data set of size $n$, from the multivariate normal $P = N(0, M')$. The choice of $n$ will allow to evaluate the algorithms performance for an increasing signal to noise ratio.
    \item The undirected graph $\mathcal G$ consistent with the probability distribution $P$ is given by M, where edges occur between variables $X_i$ and $X_j$, if $M_{i,j}$ is non-zero.
\end{enumerate}

Then, Algorithm \ref{alg:undirstruct} will be tested by getting an estimate $\hat{\mathcal{G}}$ of the \textbf{structure of an undirected graph} $\mathcal{G}$ induced by the distribution $P$ from which the data set $\mathcal D$ was drawn. The performance evaluation will be guided by three metrics:
\begin{itemize}
    \item False-discovery rate: The FDR is the fraction of type 1 errors (edges in $\hat{\mathcal{G}}$ that are not in $\mathcal{G}$) over the total number of identified edges in the learned $\hat{\mathcal{G}}$
    \item Power: Number of found edges (links in true graph $\mathcal{G}$ that were found by the structure learning algorithm) over the total number of links in $\mathcal{G}$
    \item Time: Run time needed to estimate $\hat \calG$
\end{itemize}
For the test, the number of random variables is chosen to be 10. This means, that each node in the graph has up to 9 neighbours, and the total number of possible undirected links (before inducing sparsity) is 45. The sparsity parameter is chosen in a way that generally between 10 and 15 of those links exist in the true graph. The size and sparsity of the graph are chosen to produce estimators of the metrics with reasonably low variance, but are otherwise arbitrary. The sample sizes range from approximately 400 to 20000, increasing in steps of 10\%. For each sample size, 10 tests are run to decrease the variance in the estimators. The test is conducted for conditional independence testing using stacking and multiplexing, as well as without using any ensembling method, which, since the data is continuous, results in the usage of Elastic Net regularized regression.\\
If the algorithms work as expected, the FDR is expected to be at or below 5\%. High power-levels indicate a better performance of the algorithm, with respect to this specific task. At the very least, the power level is expected to increase in the number of samples, suggesting that asymptotically the routine will find the correct graph.

\begin{figure}
\centering
    \begin{subfigure}[t]{\textwidth}
        \centering
        \includegraphics[width = 0.7\textwidth]{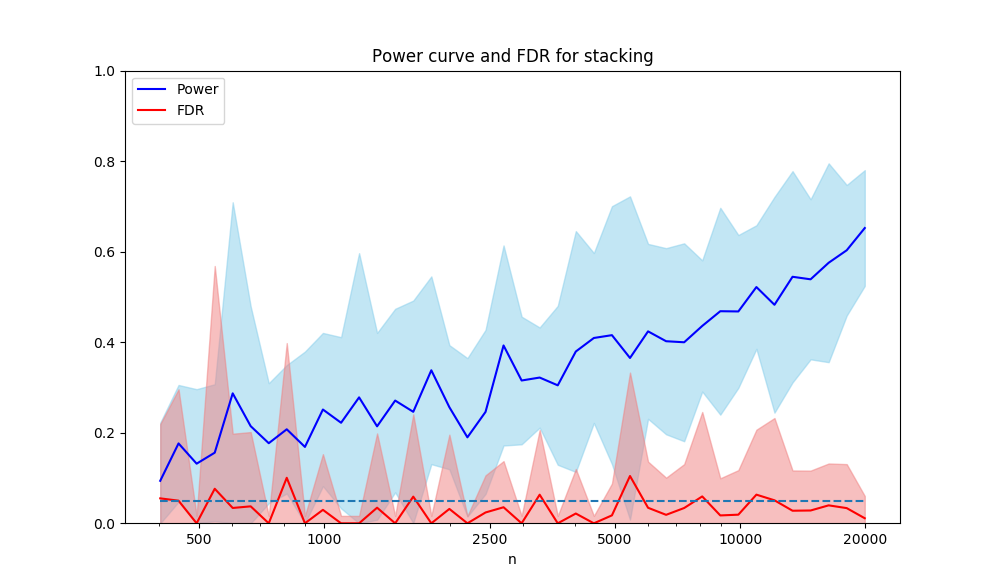}
    \end{subfigure}
    ~
    \begin{subfigure}[t]{\textwidth}
        \centering
        \includegraphics[width = 0.7\textwidth]{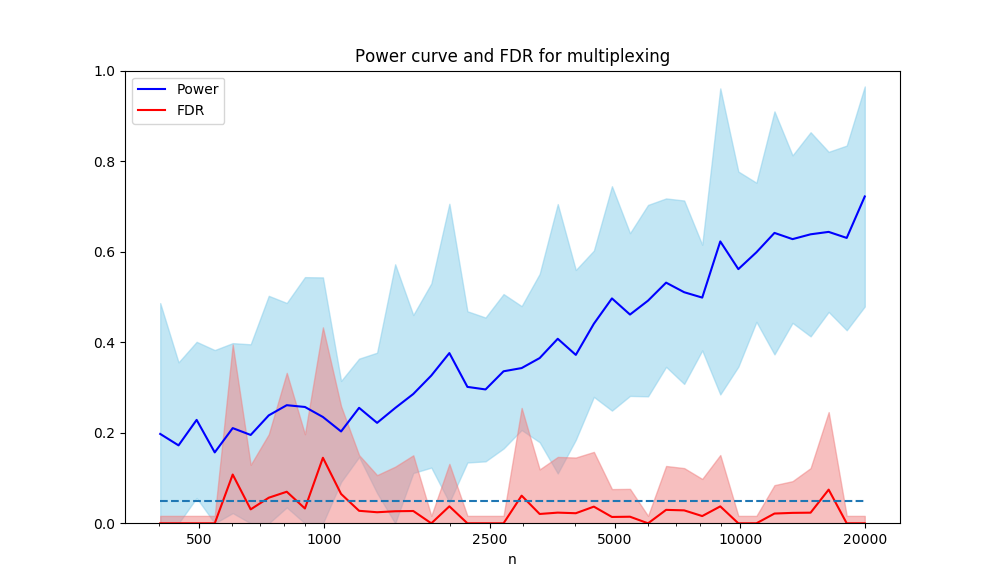}
    \end{subfigure}
    ~
    \begin{subfigure}[t]{\textwidth}
        \centering
        \includegraphics[width = 0.7\textwidth]{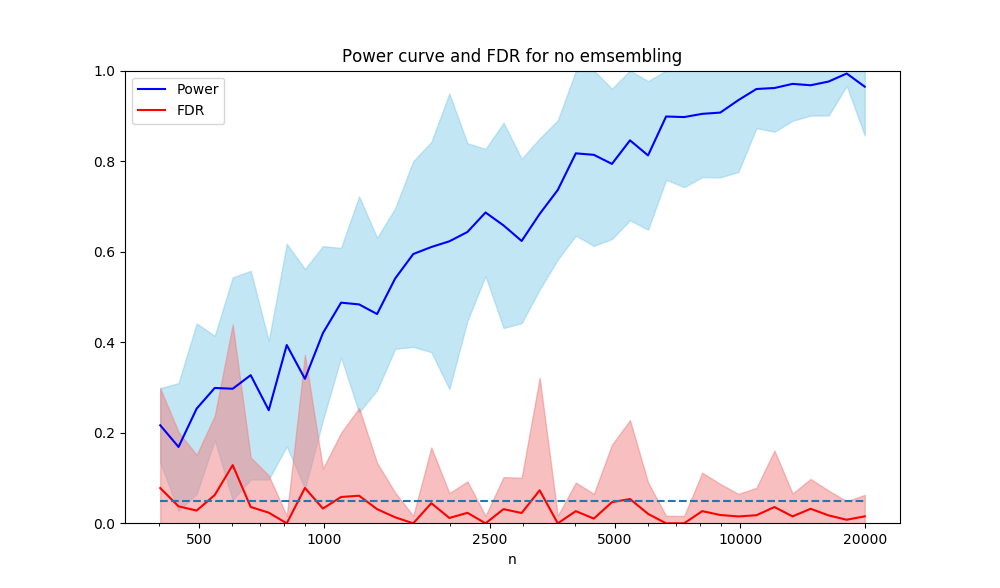}
    \end{subfigure}
\caption[Performance graphs]{Power (blue lines) and FDR (red lines) for increasing sample size for all 3 methods, showing that the performance increases as expected for all 3 methods, when the sample size is increased. The transparent blue and red areas denote the 90\% confidence intervals, the dashed line shows the expected FDR (0.05)}
\label{fig: performance}
\end{figure}

\begin{table}
\centering
\begin{tabular}{l | l l}
 & FDR & Time (sec)\\ \hline
No ensembling & 3.09\% & 30\\
Stacking & 3.03\% & 450 \\
Multiplexing & 2.75\% & 1000
\end{tabular}
\caption[Performance table]{False-discovery rates and run times for a data set of 22000 for all used methods}
\label{tab:diagnost}
\end{table}

Table \ref{tab:diagnost} shows the average FDR and run times for each of the three methods. The average FDR seems to be similar across all three methods, whereas the computational complexities differ by a large amount. No ensembling PCIT runs very quick, about 15 times faster than stacking, which itself only takes about half as long as multiplexing. This is the case, since multiplexing requires the calculation of performance measures for each used estimator. Figure \ref{fig: performance} shows the power and FDR of the algorithm for increasing sample size. The FDR for all 3 methods seem to be slightly higher for small sample sizes than they are for large sample sizes, but they are generally around or below the desired 5\% in the expectation (the variances are quite high, as the number of possible reruns is low due to the computational complexity of the multiplexing method). While it might seem surprising that stacking and multiplexing are outperformed by the no-ensembling method, one has to remember that the ensembling is used to choose the optimal prediction functional automatically from a set of estimators. However, the data is multivariate Gaussian, for which ridge regression is optimal in terms of generalization performance. While stacking and multiplexing are tasked to find this property in a large set of estimators, the estimator used in the no ensembling case is Elastic Net regularized linear regression, a generalization of ridge regression, and hence fares better since there is less variance in finding the optimal estimator. For all three methods, the power increases roughly logarithmically in the sample size, implying that for a test that recovers the truth with high probability, a large data set or a more powerful set of estimators (see Section \ref{sec:econ}) might be needed for that specific task. However, asymptotically, all three tests seem to recover a graph that is close to the truth, in this specific scenario, unless the power starts to plateau before reaching 1 (which there is no indication for). Since the power for the no ensembling case is biased by the fact that it uses an optimal prediction functional, the power curves for stacking and ensembling provide a better estimate for the performance on an unseen data set of an unknown distribution family. As graphical model structure learning is an inherently hard problem (due to issues such as multiple testing, combinatorial explosion, as outlined in Sections \ref{sec:gm}), it is promising that the algorithm finds an increasing fraction of the links while keeping the ratio of false-discoveries constant.

\subsubsection{Performance of structure learning algorithm: Variance of estimated model}
As a second performance metric, this section will assess the consistency of the learned structures on resamples from a data set $\mathcal D$. Assuming that all the observation in $\mathcal D$ are identically distributed, the structure learning method should arrive at the same conclusions on the resamples, less some variance in the process. The Auto MPG Data Set\footnote{\url{https://archive.ics.uci.edu/ml/datasets/auto+mpg}} containing various continuous and discrete car performance-attributes, from the UCI machine learning repository \cite{lichman2013}, is used to conduct the test. The data set contains 398 instances of 8 variables. For the purpose of the experiment, data sets of the same size will be sampled with replacement from the full data set 100 times. On each resample, a graph is learned using the stacking estimator on a 10\% FDR level. Each subsample contains about two thirds of the original data points (with some instances repeated). This is a commonly used procedure to estimate the sampling distribution of a statistics, and which will here allow us to assess the variance in the learned graph structure. Figure \ref{fig: performresamp} shows the results. On average, there are 6 links in the learned structure, hence the FDR advocates about 0.6 type 1 errors per learned model. The green lines are connections that are found less times than expected by the FDR and the blue lines are connections that are found in a large fraction of the models (over one third of the resamples). The concerning links are the ones in between, for which the null of independence is rejected more than occasionally, but not in a reasonably large fraction of the learned graphs. There are only 2 links in the model for which this occurs, so overall, the variance across learned graphs seems to be reasonably small and the learning process is hence consistent.

\begin{figure}
\centering
\includegraphics[width = \textwidth]{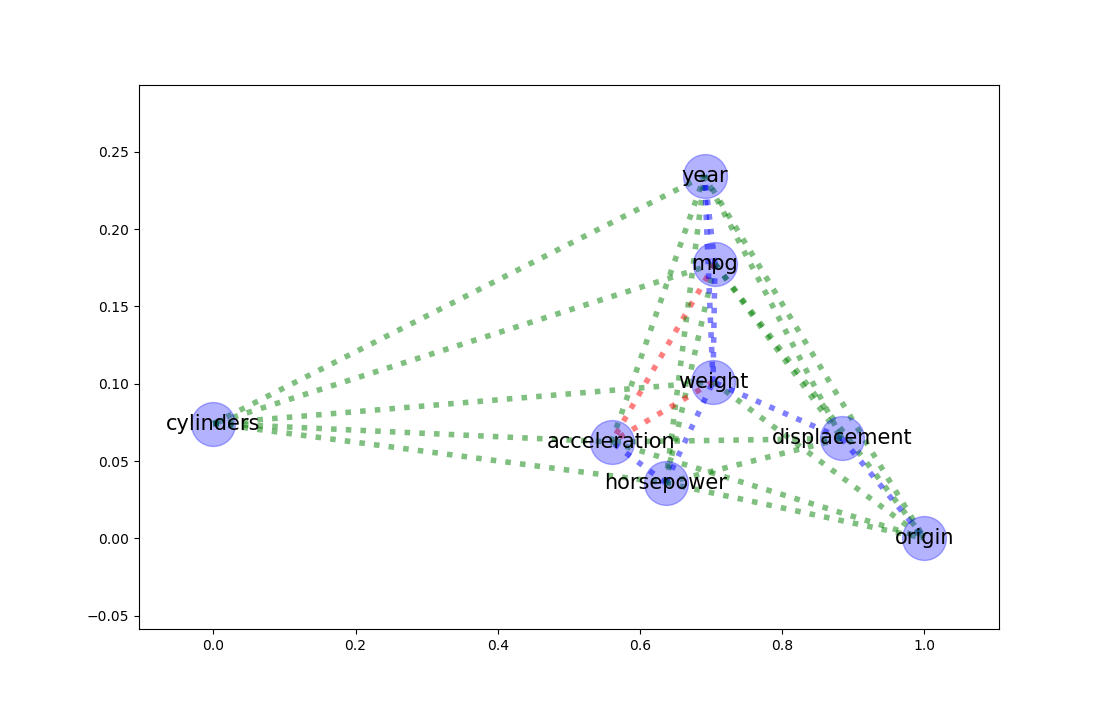}
\caption[Consistency under resamples]{Frequencies of edge occurrence in the learned structure. Green denotes edges that occur in less than 7\% (as advocated by FDR) of the models, blue for edges that occur in more than a third of the model, and red everything in between}
\label{fig: performresamp}
\end{figure}

\subsection{Experiments on real data sets}
In this section, the outputs of the graphical model structure learning routine are shown on a selection of real world data sets. It will outline some possibilities of the user to trade off between power and computational complexity.
\subsubsection{Sklearn data sets: Boston Housing and Iris}\label{sec:sklearndata}
\textbf{Setup}: One receives a data set and is interested in initial data exploration. This involves finding related sets of variables, and variables that lack obvious relationships with other variables.\\[0.1in]
Boston Housing and Iris are two standard data sets from sklearn. The housing data set contains 506 samples of 14 variables, originally collected to build a model for predicting house prices (descriptions of the variables can be found online\footnote{\url{http://scikit-learn.org/stable/modules/classes.html\#module-sklearn.datasets}}). The Iris data set is a data set for predicting flower species based on petal and sepal measurements with 150 observations in 5 dimensions.

\begin{figure*}
    \centering
        \begin{subfigure}[t]{0.5\textwidth}
        \centering
        \includegraphics[width = \textwidth, height = 1.8in]{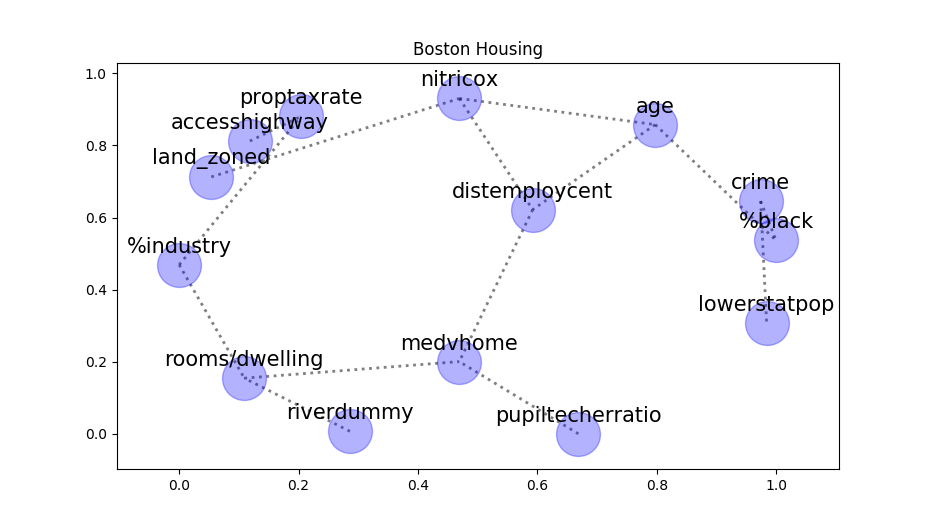}
    \end{subfigure}%
    ~
    \begin{subfigure}[t]{0.5\textwidth}
        \centering
        \includegraphics[width = \textwidth, height = 1.8in]{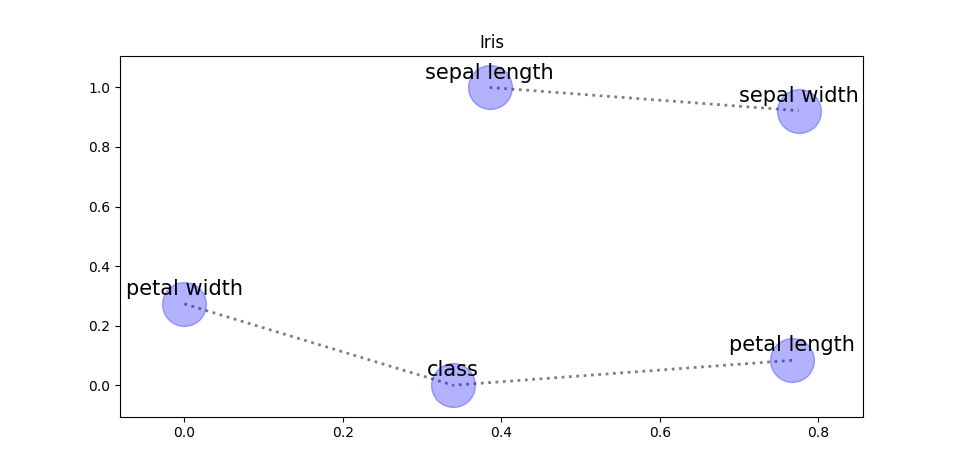}
    \end{subfigure}%
    \caption[Graphs for Sklearn data sets]{Learned graphical model structures for the Boston Housing and the Iris data set}
    \label{fig: sklearndata}
\end{figure*}
Estimation will take place using the default stacking estimator. Since we are interested in initial exploration, and finding interesting groups of variables, a large FDR (20\%) was chosen. Note that, unlike for other (mostly score-based) structure learning algorithms, the outcome of one experiment (the presence of an edge in the model) does not influence other experiments, and hence, false discoveries do not compromise the estimated structure additionally.\\

The results are shown in Figure \ref{fig: sklearndata} (graphs drawn with NetworkX \citep{networkx2017}). For the Boston housing data, seemingly sensible variable groupings occur. In the top left, the variables related to industrialization of a neighbourhood are shown, while on the right, demographic attributes form a cloud of related variables. For the Iris data set, while length and width of both sepal and petal are related, as expected, it seems that petal size has a higher association with class, and, in fact, width and length are independent given the class.\\
Both of these analyses require no parameter tuning (the only non-default chosen for this experiment was the adjusted confidence level) and take very little time (less than 15 seconds). The implementation of algorithm \ref{alg:undirstruct} thus provides a quick, ready to use tool for initial exploratory data analysis.

\subsubsection{Key short-term economic indicators (UK)}\label{sec:econ}
\textbf{Setup}: One is interested in finding the relationships within a set of variables and making local conditional independence statements for sets of variables. The focus is on finding associations that we are confident about.\\[0.1in]
The economic indicator data set contains monthly data of key economic indicators between February 1987 and June 2017 for the UK from the OECD statistics database \citep{oecd2017}. The economics indicators are export and import figures, stock price and industrial production levels, overnight and 3 month Inter-bank interest rates, money supply (M4) and GDP. This rather small data set, around 369 instances of 9 variables, will outline the possibility of a user to trade off between computational complexity and power of the learning algorithm. As for most economic data sets, the signal to noise ratio can be quite low. Figure \ref{fig: econdefault} shows the structure learned by the default approach for confidence (false discovery-rate) level of 5\%, with a run time of 15 seconds. While the ground truth is not known (economic theory aside), it is apparent that many links are missed by the fast default approach. This becomes evident when considering a variable like the GDP, which (by definition) includes exports and imports, and hence some connection between the variables should exist. If there is need for a more powerful approach, a more powerful estimator can be chosen. Figure \ref{fig: econsvm} show the learned structure for an estimator that resamples the data 50 times and learns 50 different Support Vector Machines with automatically tuned hyperparameters. The implementation of this is straightforward, since sklearn provides hyperparameter and bagging wrappers, so an estimator can be tuned easily and passed to the MetaEstimator. The graph shows the many edges that were found. While it is impossible to judge the correctness of the graph, it seems that some reasonable groups of variables are found, such as industrial production, exports and imports (in economic theory, countries seek to balance their trade deficit), or the grouping of stock prices and the main drivers of inflation, money supply and interest rates. Additionally, all the variables are connected in some way, which would be expected from a data set of economic indicators. The computational complexity of this approach was rather high, with a run time of about 20 minutes, however this shows how easily the user can trade off between power and complexity,without having a large effect on the false discovery-rate (as shown in Section \ref{sec:performance}).
\begin{figure*}
    \centering
    \begin{subfigure}[t]{0.5\textwidth}
        \centering
        \includegraphics[width = \textwidth, height = 1.8in]{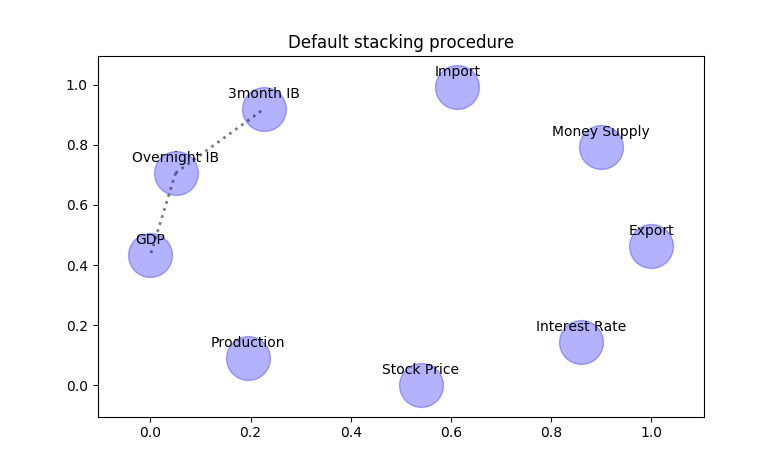}
        \caption{Default Approach}
        \label{fig: econdefault}
    \end{subfigure}%
    ~
    \begin{subfigure}[t]{0.5\textwidth}
        \centering
        \includegraphics[width = \textwidth, height = 1.8in]{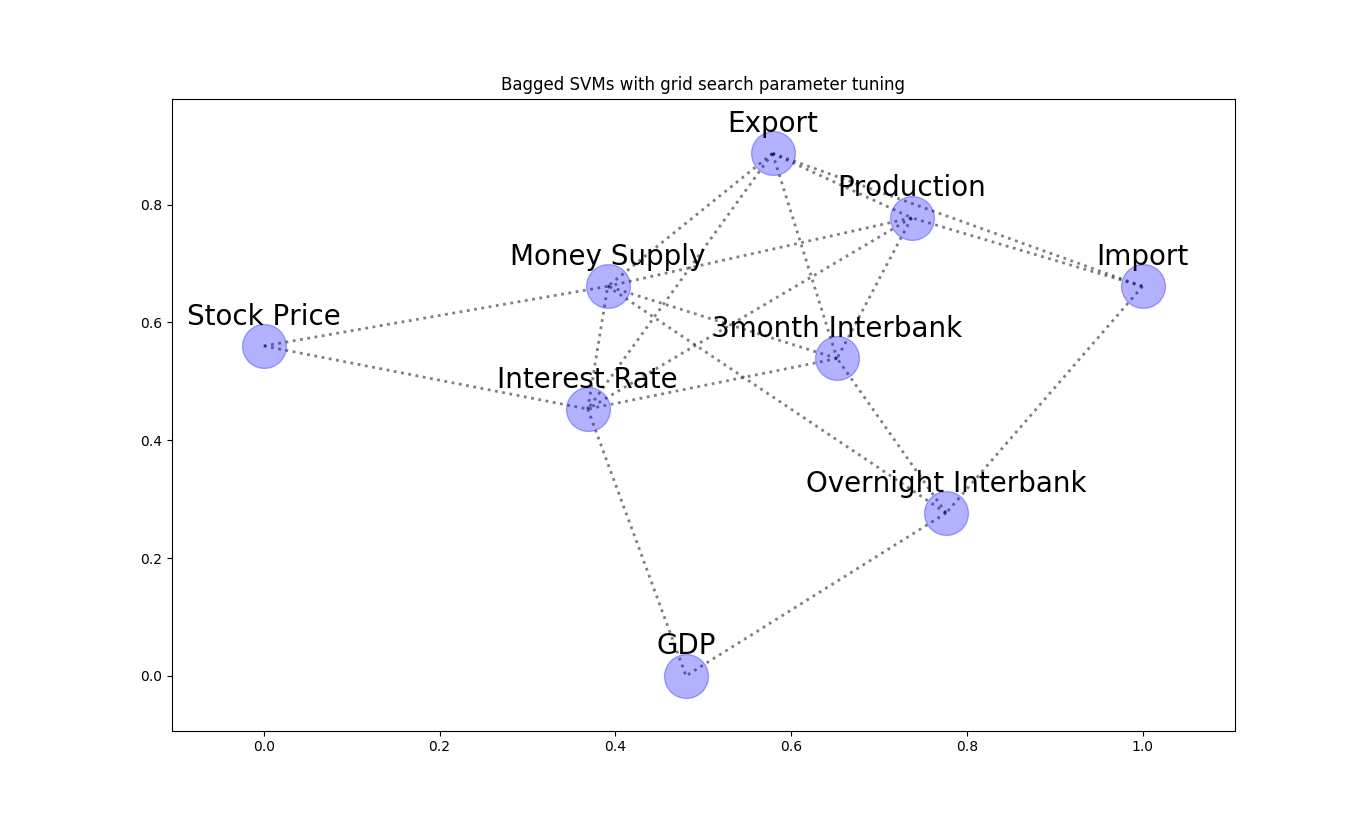}
        \caption{Bagged Support Vector Machines}
        \label{fig: econsvm}
    \end{subfigure}
    \caption[Economic Indicator graphs]{Learned graphs for the Economic Indicator data set for the default approach (left) and a more powerful approach using bagged SVM's (right)}
\end{figure*} 
\section{Discussion}\label{sec:conclusion}
This paper has introduced a novel way for multivariate conditional independence testing based on predictive inference and linked to a supervised learning workflow, which addresses many of the current issues in independence testing:
\begin{itemize}
    \item \textbf{Few subjective choices}: By connecting the classical task of independence testing to supervised learning, and its well-understood hyperparameter tuning capabilities, there is no need for the heuristics and manual tuning choices prevalent in current state-of-the-art conditional independence tests
    \item \textbf{Low computational complexity}: By linking independence testing to one of the most-researched fields in data science, predictive modelling, the high level of power in the state-of-the-art-prediction methods can be directly benefited from, and any efficiency gains in the latter directly benefits the former.
    \item \textbf{High-dimensional problems}: By connecting the problem to the task of supervised learning, the method easily deals with the multivariate case, largely removing any issues concerning dimensionality of the data
\end{itemize}
It is important to note, that some choices remain necessary, as is the case in statistical methodology in general. Since it is not possible to test for an infinite amount of loss functions and predictive models, one has to decide on a subset to conduct this test. The larger the subset, the higher the data requirements to arrive at a respective power level. How the outlined methodology differs from the methods reviewed in section \ref{sec:background} is by outlining a principled way to choose from a set of subjective choices, and by using a subset of all-star predictive modelling algorithms to ensemble over as a default, a test that is expected to be able to deal with most of the usual scenarios, given a reasonable sample size.\\[0.1in]
To validate these claims, the test was bench-marked against current state-of-the-art conditional independence tests, and showed a similar or better performance in regions where the power of the tests exceeded 10\%.\\
Subsequently, the PCIT was applied in a method for learning the structure of an undirected graph best describing the independence structure of an input data set. The combination of the new conditional independence test and the structure learning algorithm address some of the current issues in graphical model structure learning:
\begin{itemize}
    \item \textbf{Low computational complexity}: While current exact algorithms, such as the PC-algorithm, often require a number of tests that is exponential in the number of variables, the proposed algorithm runs in quadratic time in the size of the graph. Additionally, a straightforward power-complexity trade off is provided
    \item \textbf{Exact algorithm}: Unlike many scored-based methods, the algorithm does not make any strong constraints on the underlying distribution and is not subject to local optima
    \item \textbf{False-discovery rate control}: FDR control is added to provide a powerful tool to control the fraction of type 1 errors when the number of variables is increasing
\end{itemize}
Performance tests showed that the false-discovery rate is as advertised and the power of the test increases constantly in the number of samples. Additionally, consistency under perturbations in the data set was demonstrated.\\
The algorithms have been distributed in the pcit package, providing users with an easy-to-use implementation to test for multivariate and conditional independence, as well as to perform graphical model structure learning with little need for manual parameter tuning. The implementations are particularly interesting for users 
\begin{itemize}
    \item assessing the value of collecting additional variables for a prediction task
    \item in need of a conditional independence test for multivariate data
    \item performing exploratory data analysis
    \item looking for a visual way to communicate the structure in their data set
\end{itemize}

There are a few ways in which the current implementation can be generalized to make for a more powerful test. The power of the conditional independence test is directly linked to the power of the supervised learning algorithm to find the optimal prediction functional and the correct loss. Currently, only the necessary minimum of 2 loss functions is implemented, one for regression tasks and one for classification tasks, but this can easily be generalized by including more losses, and checking if the baseline can be beaten with statistical significance in any of them. This would also strengthen the argument when reasoning about variables being independent, when the null hypothesis cannot be rejected. What's more, the current methodology connected single-output prediction with FDR control to make statements about the behaviour between the joint distributions. While this procedure results in a logically sound test, the feasibility of multi-output predictors, predicting several outputs at once, should be assessed, for appropriate multivariate loss functions.\\
On the other side, some extensions of the tests need to be conducted before a definitive conclusion can be made as to its power. In terms of performance evaluation, the power of the proposed routine was assessed for two specific cases only, for multivariate Gaussian data, and a simple conditional independence test. To get a better idea of the general feasibility of the algorithm, more scenarios need to be analyzed and performance tests conducted. Additionally, the power of the test in the context of alternative graphical model structure learning algorithms should be evaluated.\\
\addcontentsline{toc}{section}{Appendices}

\appendix
\section{Best uninformed predictors: classification}
This appendix collects proofs of number of elementary computations to obtain some of the explicit statements about classification losses found in the main corpus.

\subsection{Misclassification loss is a probabilistic loss}\label{app:classifdet}
In this sub-section, we consider the alternative misclassification loss
$$L:\calY'\times\calY\rightarrow [0,1];\quad (p,y)\mapsto 1-p(y),$$
with $\calY$ being a discrete set, and $\calY'$ being the probability simplex of probability mass functions on $\calY$, as considered in Remark~\ref{Rem:class}.
In said remark, it is claimed that rounding a probabilistic prediction to a deterministic one never increases the generalization loss. We first prove this for the unconditional case which is equivalent to constant predictions:

\begin{Lem}\label{Lem:classifdet}
Let $Y$ be an $\calY$-valued random variable with probability mass function $p_Y$.
There is always a deterministic minimizer of the expected generalization loss, i.e., there exists a pmf $p_0:\calY\rightarrow \{0,1\}$ such that
$$p_0 = \underset{p\in\calY'}{\argmin}\; \EE[L(p,Y)].$$
\end{Lem}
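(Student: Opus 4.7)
The plan is to rewrite the expected generalization loss as an explicit linear functional of $p$, and then observe that a linear functional on the probability simplex always attains its extremum at a vertex, where vertices correspond precisely to the deterministic pmfs.

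First I would expand the expectation using the pmf $p_Y$:
\begin{equation*}
\EE[L(p,Y)] \;=\; \EE[1 - p(Y)] \;=\; 1 - \sum_{y\in\calY} p(y)\, p_Y(y).
\end{equation*}
Minimizing this over $p\in\calY'$ is equivalent to maximizing the inner product $\langle p, p_Y\rangle = \sum_y p(y) p_Y(y)$ over the probability simplex $\calY'$.

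Next I would invoke the fact that $p\mapsto \langle p, p_Y\rangle$ is an affine (in particular concave and convex) functional on the convex, compact polytope $\calY'$, so its maximum is attained at an extreme point. The extreme points of $\calY'$ are exactly the Dirac pmfs $\delta_{y^*}$ for $y^*\in\calY$, i.e.\ the maps $p_0:\calY\to\{0,1\}$. Explicitly, picking any $y^*\in\argmax_{y\in\calY} p_Y(y)$ (such a maximizer exists since $\calY$ is discrete; if $\calY$ is infinite we would additionally need $p_Y$ to attain its supremum, which in the finitely-supported case considered in the paper is automatic) and setting $p_0:= \delta_{y^*}$ gives
\begin{equation*}
\langle p_0, p_Y\rangle \;=\; p_Y(y^*) \;=\; \max_{y\in\calY} p_Y(y) \;\ge\; \sum_{y\in\calY} p(y)\, p_Y(y)
\end{equation*}
for every $p\in\calY'$, where the last inequality is just that a convex combination of values is at most the maximum. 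This shows $p_0\in\argmin_p \EE[L(p,Y)]$, as desired.

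The proof is essentially a one-line observation once the loss is written out, so there is no real obstacle; the only mild care needed is to remark that the minimizer need not be unique (ties in $\argmax p_Y$ give multiple deterministic minimizers, and indeed also non-deterministic minimizers obtained by convex-combining them), but the statement only asserts existence of \emph{some} deterministic minimizer, which the explicit $\delta_{y^*}$ provides.
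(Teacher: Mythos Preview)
Your proof is correct and essentially identical to the paper's: both expand $\EE[L(p,Y)] = 1 - \sum_y p(y)p_Y(y)$, pick $y^* \in \argmax_y p_Y(y)$, and use that a convex combination of the $p_Y(y)$ is at most their maximum to conclude $\delta_{y^*}$ is a minimizer. Your added framing via extreme points of the simplex is a nice conceptual gloss, but the actual argument you give coincides with the paper's direct computation.
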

\begin{proof}
Let $p:\calY\rightarrow [0,1]$ be any pmf. For its expected generalization loss, one has, by substituting definitions,
$$\EE[L(p,Y)] = \sum_{y\in \calY} p(y)(1-p_Y(y)) = 1 - \sum_{y\in \calY} p(y) p_Y(y).$$
Let $y_0:= \underset{y\in\calY}{\argmax}\; p_Y(y)$ (if there are multiple maximizers, choose arbitrarily). By construction, it holds that
$p_Y(y_0)\ge \sum_{y\in \calY} p(y) p_Y(y)$.
Thus, for $p_0: y\mapsto [1\;\mbox{if}\;y=y_0\;;0\;\mbox{otherwise}]$, one has
$$\EE[L(p_0,Y)] = 1 - p_Y(y_0) \le 1 - \sum_{y\in \calY} p(y) p_Y(y) = \EE[L(p,Y)].$$
Since $p$ was arbitrary, this proves the claim.
\end{proof}

Note that Lemma~\ref{Lem:classifdet} does not exclude that there are other minimizers of the expected generalization loss (in general there will be an infinity), it only states that a deterministic minimizer may be found.

\begin{Prop}
Let $X,Y,Z$ be a random variables taking values in $\calX,\calY,\calZ$.
Then one can make choices which always predict a deterministic class, for the following prediction functionals as considered in Section~\ref{sec:depend}:
\begin{itemize}
\item[(i)] the best uninformed predictor $\omega^{(L)}_{Y}:\calX\rightarrow \calY'$
\item[(ii)] the best predictor $\omega^{(L)}_{Y|X}:\calX\rightarrow \calY'$
\item[(iii)] the best conditionally uninformed predictor $\omega^{(L)}_{Y|Z}:\calX\times \calZ\rightarrow \calY'$
\item[(iv)] the best conditional predictor $\omega^{(L)}_{Y|X,Z}:\calX\times \calZ\rightarrow \calY'$
\end{itemize}
That is, in all four cases, choices may be made where the image is always a deterministic pmf $p:\calY\rightarrow \{0,1\}$.
\end{Prop}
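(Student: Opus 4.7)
The plan is to reduce all four statements to a single pointwise application of Lemma~\ref{Lem:classifdet}, which already handles the unconditional/constant case. The common structure is: each of the four prediction functionals is, by its definition, of the form $\text{input} \mapsto \mu_L([W])$ for some $\calY$-valued conditional random variable $W$ (namely $W = Y$, $W = Y\mid X = x$, $W = Y\mid Z = z$, and $W = Y \mid X=x, Z=z$ respectively). Thus it suffices to show that $\mu_L([W])$ always admits a deterministic representative, and then to select such a representative pointwise in the free argument(s).

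First I would dispatch (i): the best uninformed predictor $\varpi^{(L)}_Y$ is by definition the constant functional $x \mapsto \mu_L([Y])$, and $\mu_L([Y]) = \argmin_{p\in\calY'}\EE[L(p,Y)]$. By Lemma~\ref{Lem:classifdet} applied to $Y$, this argmin set contains a deterministic pmf $p_0$, so choosing $p_0$ for the constant value of $\varpi^{(L)}_Y$ yields the desired deterministic representative. For (ii), I would note that $\varpi^{(L)}_{Y|X}:x\mapsto \mu_L([Y\mid X = x])$ and apply Lemma~\ref{Lem:classifdet} separately to the conditional random variable $Y\mid X = x$ for each $x\in\calX$: at every $x$, the argmin defining $\mu_L([Y\mid X=x])$ contains a deterministic pmf, so we can pick such a pmf as the value of $\varpi^{(L)}_{Y|X}(x)$. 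Statements (iii) and (iv) are formally identical to (i) and (ii) with the single variable $X$ replaced by the joint $(X,Z)$ (and $(x,z)\mapsto \mu_L([Y\mid Z = z])$ not depending on the first coordinate), so the same two arguments transfer verbatim.

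The main obstacle is the pointwise selection in (ii)--(iv): at each $x$ (or $(x,z)$) the deterministic minimizer given by Lemma~\ref{Lem:classifdet} may not be unique, and a priori an arbitrary selection need not yield a measurable prediction functional. I would handle this by giving a canonical, deterministic tie-breaking rule: fix any total order on the finite set $\calY$, let $y_0(x) := \min\{y\in\calY : p_{Y\mid X=x}(y) = \max_{y'\in\calY} p_{Y\mid X=x}(y')\}$, and define the chosen pmf at $x$ to be the point mass on $y_0(x)$. Since $p_{Y\mid X=x}(y)$ is a measurable function of $x$ for each fixed $y\in\calY$ (as $\calY$ is discrete) and $\calY$ is finite, both the argmax and the tie-broken minimum are measurable in $x$, so this selection yields a measurable (in fact classically well-defined) deterministic prediction functional. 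The same construction with $(x,z)$ in place of $x$, or with $Z=z$ in place of $X=x$, takes care of (iii) and (iv), concluding the proof.
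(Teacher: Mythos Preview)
Your proposal is correct and follows essentially the same route as the paper: reduce each case to a pointwise application of Lemma~\ref{Lem:classifdet}, namely to $Y$, to $Y\mid X=x$, and then to the variants with $Z$ by additional conditioning. The paper's proof is in fact terser than yours---it simply says ``condition on $X$'' for (ii) and ``analogously condition on $Z$'' for (iii)--(iv)---and does not raise or address the measurable-selection issue at all; your canonical tie-breaking rule is a nice extra that makes the pointwise choice explicitly well-defined, but it goes beyond what the paper supplies.
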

\begin{proof}
(i) follows directly from Lemma~\ref{Lem:classifdet} which directly implies that the function $y\mapsto p_0$ is the best constant predictor and hence the best uninformed predictor, where $p_0$ is defined as in the statement (or more constructively in the proof) of Lemma~\ref{Lem:classifdet}, thus $\omega^{(L)}_{Y}:x\mapsto p_0$ is a possible choice.\\
(ii) follows from noting that the Lemma~\ref{Lem:classifdet} and its proof remain still valid when considering the conditional under $X$, i.e., defining a conditional
$$p_0: \calX\rightarrow \calY';\; x\mapsto \underset{p\in\calY'}{\argmin} \EE[L(p,Y)|X=x]$$
and thus $\omega^{(L)}_{Y|X}: x\mapsto p_0(x)$.\\
(iii) and (iv) follow analogously by additional conditioning on $Z$ in the same way.
\end{proof}

\subsection{Logarithmic loss is a proper loss}\label{app:logloss}
In this sub-section, we consider the logarithmic loss (or cross-entropy loss)
$$L:\calY'\times\calY\rightarrow \RR^+;\quad (p,y)\mapsto -\log p(y),$$
with $\calY$ being a discrete set, and $\calY'$ being the probability simplex of probability mass functions on $\calY$, as considered in Remark~\ref{Rem:class}.

\begin{Prop}
The expected generalization log-loss is minimized by the true distribution. I.e., let $Y$ be random variable taking values in $\calY$, with probability mass function $p_Y$. Then,
$$p_Y = \underset{y\in\calY'}{\argmin}\;\EE[L(y,Y)].$$
\end{Prop}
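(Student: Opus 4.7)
The plan is to recognize that the expected log-loss has the form of a cross-entropy between the true pmf $p_Y$ and the candidate pmf $p$, and to deduce minimization at $p = p_Y$ from the non-negativity of the Kullback--Leibler divergence (i.e.\ Gibbs' inequality).

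Concretely, I would first expand the expectation over $Y$ as
$$\EE[L(p,Y)] = -\sum_{y\in\calY} p_Y(y)\log p(y),$$
which is well-defined in $[0,+\infty]$ (adopting the standard convention $0\log 0 = 0$). Next I would form the gap
$$\EE[L(p,Y)] - \EE[L(p_Y,Y)] = \sum_{y\in\calY} p_Y(y)\log\frac{p_Y(y)}{p(y)},$$
the right-hand side being the KL divergence $D_{\mathrm{KL}}(p_Y\,\|\,p)$.

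The key step is then to invoke (or briefly re-prove) Gibbs' inequality: $D_{\mathrm{KL}}(p_Y\,\|\,p)\ge 0$, with equality iff $p(y)=p_Y(y)$ for every $y$ in the support of $p_Y$. The standard proof uses the concavity of $\log$: by Jensen's inequality applied to the probability measure $p_Y$ on $\calY$,
$$\sum_{y}p_Y(y)\log\frac{p(y)}{p_Y(y)}\;\le\;\log\sum_{y}p_Y(y)\cdot\frac{p(y)}{p_Y(y)}\;=\;\log\sum_{y}p(y)\;\le\;\log 1\;=\;0,$$
with strict inequality in the Jensen step unless $p(y)/p_Y(y)$ is constant on the support of $p_Y$ (which together with both being pmfs forces $p=p_Y$). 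Multiplying by $-1$ gives the desired inequality
$$\EE[L(p,Y)]\;\ge\;\EE[L(p_Y,Y)],$$
with equality iff $p = p_Y$, establishing that $p_Y$ is the unique minimizer in $\calY'$.

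The only mild obstacle is bookkeeping around points where $p_Y(y)=0$ or $p(y)=0$: the first case is handled by the $0\log 0=0$ convention, and the second gives $\EE[L(p,Y)]=+\infty$ whenever $p$ assigns zero mass to some $y$ with $p_Y(y)>0$, which is trivially worse than $p_Y$ (which has finite expected loss provided $-\sum_y p_Y(y)\log p_Y(y)<\infty$; if the entropy is infinite, both sides are $+\infty$ on the relevant support and the same comparison via KL still holds). No other subtleties arise.
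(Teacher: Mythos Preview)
Your proposal is correct and follows essentially the same route as the paper: both expand $\EE[L(p,Y)]$ as the cross-entropy $-\sum_y p_Y(y)\log p(y)$ and invoke Gibbs' inequality (equivalently, non-negativity of $D_{\mathrm{KL}}(p_Y\,\|\,p)$) to conclude that $p=p_Y$ is the minimizer. Your version is simply more detailed, supplying a Jensen-based proof of Gibbs' inequality and handling the $p(y)=0$ and $p_Y(y)=0$ edge cases that the paper leaves implicit.
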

\begin{proof}
Let $p\in \calY$ be arbitrary.
Substituting definitions, it holds that
\begin{align*}
\EE[L(p,Y)] &= -\sum_{y\in \calY} p_Y(y)\log p(y)\\
&\ge -\sum_{y\in \calY} p(y)\log p(y)\\
& = \EE[L(p_Y,Y)],
\end{align*}
where the inequality in the middle is Gibbs' inequality.
\end{proof}

\subsection{Brier loss is a proper loss}\label{app:brier}
In this sub-section, we consider the Brier loss (or squared classification loss)
$$L:\calY'\times\calY\rightarrow \RR^+;\quad (p,y)\mapsto (1-p(y))^2 + \sum_{y'\neq y}p(y')^2,$$
with $\calY$ being a discrete set, and $\calY'$ being the probability simplex of probability mass functions on $\calY$, as considered in Remark~\ref{Rem:class}.

\begin{Prop}
The expected Brier loss is minimized by the true distribution. I.e., let $Y$ be random variable taking values in $\calY$, with probability mass function $p_Y$. Then,
$$p_Y = \underset{y\in\calY'}{\argmin}\;\EE[L(y,Y)].$$
\end{Prop}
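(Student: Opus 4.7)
The plan is to rewrite the Brier loss as a squared Euclidean distance between pmfs in $\RR^{\#\calY}$. Specifically, for any $p\in\calY'$ and $y\in\calY$, I would observe that
$$L(p,y) = (1-p(y))^2 + \sum_{y'\neq y} p(y')^2 = \sum_{y'\in\calY} \bigl(p(y') - \mathbbm{1}[y'=y]\bigr)^2 = \|p - e_y\|_2^2,$$
where $e_y\in\calY'$ denotes the pmf putting all mass on $y$. This reformulation makes the problem amount to a component-wise squared-loss minimization.

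Next I would take total expectations. By linearity and interchange of the finite sum with the expectation,
$$\EE[L(p,Y)] = \sum_{y'\in\calY} \EE\bigl[\bigl(p(y') - \mathbbm{1}[y'=Y]\bigr)^2\bigr].$$
Each summand depends only on the single real number $p(y')$, and is a squared-error-type functional of the random variable $\mathbbm{1}[y'=Y]$. By Lemma~\ref{Lem:elicited}~(i) (squared loss elicits the mean), each term is minimized, over $p(y')\in\RR$, at $p(y') = \EE\bigl[\mathbbm{1}[y'=Y]\bigr] = p_Y(y')$.

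The final step is to reconcile the unconstrained minimization with the pmf constraint. The candidate minimizer $p=p_Y$ happens to satisfy $p_Y(y')\ge 0$ for all $y'$ and $\sum_{y'} p_Y(y')=1$, i.e.~it lies in $\calY'$. Since the unconstrained minimizer of a separable sum of strictly convex functions lies inside the feasible set $\calY'$, it is also the constrained minimizer, establishing
$$p_Y = \underset{p\in\calY'}{\argmin}\;\EE[L(p,Y)].$$
Strict convexity of the squared loss in each coordinate moreover gives uniqueness, which is what the faithfulness argument in the main text requires.

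I do not anticipate a serious obstacle here: the only subtlety is that one must check that the unconstrained coordinate-wise minimum coincides with a valid pmf rather than requiring a Lagrange-multiplier argument. This is immediate because the true law $p_Y$ is by assumption an element of $\calY'$.
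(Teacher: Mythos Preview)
Your proof is correct. Both your argument and the paper's reduce to the identity
\[
\EE[L(p,Y)] = \text{const}(Y) + \sum_{y'\in\calY}\bigl(p(y')-p_Y(y')\bigr)^2,
\]
so the approaches are essentially the same. The paper reaches this by expanding $L(p,y)=1-2p(y)+\sum_{y'}p(y')^2$ and completing the square directly, whereas you first recognise $L(p,y)=\|p-e_y\|_2^2$ and then invoke Lemma~\ref{Lem:elicited}~(i) coordinate-wise. Your packaging is slightly more structural and reuses an existing result in the paper; it also makes explicit the point that the unconstrained minimiser already lies in the simplex $\calY'$, which the paper leaves implicit. Neither approach offers a real advantage in generality or brevity here.
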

\begin{proof}
Let $p\in \calY$ be arbitrary.
By applying the binomial rule, observe that
$$L(p,y) = 1 - 2p(y) + \sum_{y'\in \calY}p(y')^2$$
Substituting definitions, it holds that
\begin{align*}
\EE[L(p,Y)] &= \sum_{y\in\calY}\left( p_Y(y) - 2p_Y(y)p(y) + p_Y(y)\sum_{y'\in \calY}p(y')^2\right)\\
&= 1 - 2 \sum_{y\in \calY} p_Y(y)p(y) + \sum_{y\in \calY}p(y)^2\\
&= 1 - \sum_{y\in \calY}p_Y(y)^2 + \sum_{y\in \calY}p_Y(y)^2 - 2 \sum_{y\in \calY} p_Y(y)p(y) + \sum_{y\in \calY}p(y)^2\\
& = 1 - \sum_{y\in \calY}p_Y(y)^2 + \sum_{y\in \calY}(p(y) - p_Y(y))^2.
\end{align*}
Note that the first two terms (the $1$ and the sum over $p_Y(y)$) do not depend on $p$, while the last term is non-negative and minimized with a value of zero if and only if $p=p_Y$.\\
Since $p$ was arbitrary, this proves the claim.
\end{proof}

\section{Elicited statistics for regression losses}
This appendix collects explicit proofs of the elicited statistics for squared loss, and Q-loss (and the absolute loss which is a special case).

\subsection{Squared loss elicits the mean}\label{app:sqloss}
In this sub-section, we consider the univariate regression case $\calY = \RR$, and the squared loss
$$L:\calY\times\calY\rightarrow \RR;\; (y, y^*) \mapsto (y - y^*)^2.$$

\begin{Prop}
The squared loss elicits the mean. I.e., let $Y$ be random variable taking values in $\calY$. Then,
$$\EE[Y] = \underset{y\in\calY}{\argmin}\;\EE[L(y,Y)].$$
\end{Prop}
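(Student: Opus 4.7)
The plan is to carry out the classical bias-variance style decomposition of the expected squared loss, which makes the dependence on the candidate $y$ completely explicit and from which the minimizer reads off immediately. The key identity I would establish is
\[
\EE\left[(y-Y)^2\right] = (y - \EE[Y])^2 + \Var(Y)
\]
for every $y\in\RR$, under the implicit assumption that $Y$ has finite second moment (otherwise the expected loss is $+\infty$ for every $y$ and the statement is vacuous; alternatively one should restrict attention to $Y$ with $\EE[Y^2]<\infty$, which is the only case where $L$ is non-trivial in the sense of Definition~\ref{Def:convex}).

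To obtain the identity, I would write $y - Y = (y - \EE[Y]) + (\EE[Y] - Y)$, square, and take expectations. The cross term expands to $2(y-\EE[Y])\cdot\EE[\EE[Y]-Y]$ which vanishes by linearity of expectation since $\EE[\EE[Y]-Y] = 0$. The remaining terms are $(y-\EE[Y])^2$, which is a constant in the expectation, and $\EE[(\EE[Y]-Y)^2] = \Var(Y)$.

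Once this decomposition is in hand, the conclusion is immediate: $\Var(Y)$ does not depend on $y$, and $(y-\EE[Y])^2\ge 0$ with equality if and only if $y = \EE[Y]$. Hence $\EE[L(y,Y)]$ is uniquely minimized at $y = \EE[Y]$, which by Definition~\ref{Def:elicit} means $\mu_L([Y]) = \EE[Y]$.

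I do not expect any real obstacle here; the argument is a one-line completion of the square, and the only substantive point to flag is the finite-second-moment caveat so that the quantities under consideration are well-defined. (An alternative two-line proof via setting $\tfrac{d}{dy}\EE[(y-Y)^2]=2(y-\EE[Y])$ equal to zero and checking the positive second derivative would also work, but the decomposition above has the virtue of yielding strict convexity of the eliciting functional as a by-product.)
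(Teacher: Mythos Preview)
Your proposal is correct and follows essentially the same approach as the paper: both derive the bias--variance decomposition $\EE[(y-Y)^2] = (y-\EE[Y])^2 + \Var(Y)$ and then read off the minimizer. The only cosmetic difference is that the paper expands $(y-Y)^2$ directly and then completes the square by adding and subtracting $\EE[Y]^2$, whereas you insert $\EE[Y]$ before squaring; your added remark about the finite second moment assumption is a sensible caveat that the paper omits.
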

\begin{proof}
Substituting definitions, it holds that
\begin{align*}
\EE[L(y,Y)] &= \EE[(y-Y)^2]\\
&= \EE[y^2-2yY + Y^2]\\
&= y^2-2y\EE[Y] + \EE[Y^2]\\
&= y^2-2y\EE[Y] + \EE[Y^2] - \EE[Y^2] + \EE[Y^2]\\
& =(\EE[Y] - y)^2 + \Var(Y),
\end{align*}
which is the well-known derivation of the bias-variance decomposition.\\
The first term, $(\EE[Y] - y)^2$, is minimized whenever $\EE[Y] = y$, and the second term, $\Var(Y)$, does not depend on $y$. Thus, the sum of both terms is minimized (in $y$ while fixing $Y$) for the choice $y=\EE[Y]$.
\end{proof}

\subsection{Quantile loss elicits the quantile}\label{app:Qloss}
In this sub-section, we consider the univariate regression case $\calY = \RR$, and the Q-loss (or quantile loss)
$$L:\calY\times\calY\rightarrow \RR;\;(y,y_*)\mapsto \alpha\cdot m(y_*,y) + (1-\alpha)\cdot m(y,y_*),$$
with $m(x,z)=\min(x-z,0).$

\begin{Prop}
The Q-loss elicits the $\alpha$-quantile. I.e., let $Y$ be random variable taking values in $\calY$ with cdf $F: \RR \rightarrow [0,1]$. Then,
$$F^{-1}_Y(\alpha) = \underset{y\in\calY}{\argmin}\;\EE[L(y,Y)].$$
\end{Prop}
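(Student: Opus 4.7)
The plan is to prove that $\argmin_y \varphi(y) = F^{-1}_Y(\alpha)$ for $\varphi(y) := \EE[L(y, Y)]$ by computing the (sub)differential of $\varphi$ and reading off its zero set. The key observation is that $L(\cdot, y_*)$ is a nonnegatively weighted sum of two hinge-type piecewise-linear functions and is therefore convex in its first argument; taking expectation over $Y$ preserves convexity, so $\varphi$ is convex and its minimizers are exactly the points at which $0$ lies in its subdifferential.

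First I would rewrite $\varphi(y)$ in terms of the CDF $F_Y$ of $Y$. Splitting the integrand by the sign of $Y - y$ using the definition of $m$, one obtains (up to an additive constant independent of $y$)
\[
\varphi(y) = \alpha\,\EE[(Y - y)_+] + (1-\alpha)\,\EE[(y - Y)_+],
\]
and by Fubini's theorem these expectations can be represented as $\int_y^\infty (1 - F_Y(t))\,dt$ and $\int_{-\infty}^y F_Y(t)\,dt$ respectively. This representation makes convexity of $\varphi$ transparent (each integral is a convex function of $y$) and is well suited to differentiation.

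Next I would differentiate these integral representations to compute the right- and left-derivatives of $\varphi$, obtaining
\[
\varphi'_+(y) = F_Y(y) - \alpha, \qquad \varphi'_-(y) = F_Y(y^-) - \alpha.
\]
Since $\varphi$ is convex, $y^*$ is a global minimizer if and only if $\varphi'_-(y^*) \le 0 \le \varphi'_+(y^*)$, i.e.\ $F_Y(y^{*-}) \le \alpha \le F_Y(y^*)$. Under the convention (stated in the lemma) that $F_Y^{-1}(\alpha)$ returns the full set of $y^*$ satisfying precisely this sandwich, we conclude $\argmin_y \varphi(y) = F_Y^{-1}(\alpha)$, which is the desired elicitation statement.

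The main obstacle will be the careful handling of CDFs that are neither strictly increasing nor continuous: at jump discontinuities of $F_Y$ one genuinely needs the distinction between $F_Y(y^-)$ and $F_Y(y)$ in the subdifferential, and on flat plateaus of $F_Y$ the subdifferential vanishes on an entire interval, producing the set-valued minimizer matched by the set-valued inverse CDF. The bookkeeping for these two regimes, and the verification that the sandwich inequality exactly matches the convention adopted for $F_Y^{-1}$, is the delicate step; the optimization argument itself is immediate from convexity once the subdifferential formula is in place.
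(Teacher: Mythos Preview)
Your approach is correct and shares the same core idea as the paper's: compute the expected loss as a function of the prediction, differentiate, and read off the first-order optimality condition $F_Y(y)=\alpha$. The paper carries this out via ordinary calculus, first assuming $Y$ is absolutely continuous so a density $p$ exists, computing $\partial \EE[L]/\partial y^* = F(y^*)-\alpha$ and checking the second derivative, then sketching the extension to general $Y$ by a remark about finite differences and jump discontinuities.

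Your route via convex analysis is somewhat different in execution and cleaner for the general case. By passing to the integral representations $\int_{-\infty}^y F_Y$ and $\int_y^\infty(1-F_Y)$ you never need a density, and the one-sided derivatives $\varphi'_{\pm}(y)=F_Y(y^{(\pm)})-\alpha$ together with the convex optimality criterion $\varphi'_-\le 0\le \varphi'_+$ deliver the sandwich $F_Y(y^-)\le\alpha\le F_Y(y)$ in one stroke, which is exactly the set-valued inverse cdf under the paper's convention. The paper's argument is marginally more elementary (pure calculus, no subdifferentials) but leaves the non-absolutely-continuous case as a heuristic; your argument buys a uniform, rigorous treatment of both jumps and plateaus of $F_Y$ at the modest cost of invoking one-sided derivatives of a convex function.
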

\begin{proof}
We first assume that $Y$ is absolutely continuous, i.e., $Y$ has a probability density function $p:\RR\rightarrow \RR^+$ and $F$ is bijective.
One then computes
\begin{align*}
\mathbb{E}_Y[L_\alpha(y^*,Y)] & = \int_{-\infty}^{y^*} (1 - \alpha)(y^* - y)p(y)dy - \int_{y^*}^\infty (\alpha)(y^* - y)p(y)dy
\\ & = y^*(F(y^*) - \alpha) + \alpha \mathbb{E}[Y] - \int_{-\infty}^{y^*}{ y p(y) dy}
\\ \frac{\partial \mathbb{E}_Y[L_\alpha(y^*,Y)]}{\partial y^*} & = y^* p(y^*) + F(y^*) - \alpha - y^* p(y^*) = F(y^*) - \alpha \stackrel{!}{=} 0
\\ & \implies P(y^*) = \alpha
\\ \frac{\partial ^2 \mathbb{E}_Y[L_\alpha(y^*,Y)]}{\partial (y^*)^2} & = p(y^*) \geq 0
\end{align*}
Hence, the first order condition is a minimum, minimized by the $\alpha$-quantile of $Y$, and thus the quantile loss elicits the quantile.\\
For general $Y$, note that $F$ always exists, and thus when $p$ appears inside integrals, the integrals well-defined. The partial derivatives may not always be defined but is the same as the sign of sufficiently small finite differences, thus the proof logic follows through for the general case. In case of jump discontinuities of $F$, any monotone inverse $F^{-1}$ may be chosen for the statement.
\end{proof}

\bibliographystyle{plainnat}
\bibliography{pcit}

\begin{thebibliography}{43}
\providecommand{\natexlab}[1]{#1}
\providecommand{\url}[1]{\texttt{#1}}
\expandafter\ifx\csname urlstyle\endcsname\relax
  \providecommand{\doi}[1]{doi: #1}\else
  \providecommand{\doi}{doi: \begingroup \urlstyle{rm}\Url}\fi

\bibitem[Aggarwal(2014)]{Aggarwal2014}
C.C. Aggarwal.
\newblock \emph{Data Classification: Algorithms and Applications}.
\newblock Chapman \& Hall/CRC, 1st edition, 2014.

\bibitem[Barber(2012)]{barber2012}
D.~Barber.
\newblock \emph{Bayesian reasoning and machine learning}.
\newblock Cambridge University Press, 2012.

\bibitem[Baringhaus and Franz(2004)]{Baringhaus2004}
L.~Baringhaus and C.~Franz.
\newblock On a new multivariate two-sample test.
\newblock \emph{Journal of Multivariate Analysis}, 88:\penalty0 190--206, 2004.

\bibitem[Belloni et~al.(2017)Belloni, Chen, Chernozhukov,
  et~al.]{belloni2017quantile}
Alexandre Belloni, Mingli Chen, Victor Chernozhukov, et~al.
\newblock Quantile graphical models: prediction and conditional independence
  with applications to systemic risk.
\newblock Technical report, Centre for Microdata Methods and Practice,
  Institute for Fiscal Studies, 2017.

\bibitem[Benjamini and Hochberg(1995)]{benjamini1995}
Y.~Benjamini and Y.~Hochberg.
\newblock Controlling the false discovery rate: A practical and powerful
  approach to multiple testing.
\newblock \emph{Journal of the Royal Statistical Society. Series B
  (Methodological)}, 57:\penalty0 289--300, 1995.

\bibitem[Benjamini and Yekutieli(2001)]{Benjamini2001}
Y.~Benjamini and D.~Yekutieli.
\newblock The control of the false discovery rate in multiple testing under
  dependency.
\newblock \emph{The Annals of Statistics}, 29:\penalty0 1165--1188, 2001.

\bibitem[{Bergsma}(2011)]{bergsma2011}
W.~{Bergsma}.
\newblock {Nonparametric testing of conditional independence by means of the
  partial copula}.
\newblock \emph{ArXiv e-prints: 1101.4607}, 2011.

\bibitem[Bergsma(2004)]{bergsma2004}
W.P. Bergsma.
\newblock \emph{Testing conditional independence for continuous random
  variables}.
\newblock Eurandom, 2004.

\bibitem[Buitinck et~al.(2013)Buitinck, Louppe, Blondel, Pedregosa, Mueller,
  Grisel, Niculae, Prettenhofer, Gramfort, Grobler, et~al.]{buitinck2013}
L.~Buitinck, G.~Louppe, M.~Blondel, F.~Pedregosa, A.~Mueller, O.~Grisel,
  V.~Niculae, P.~Prettenhofer, A.~Gramfort, J.~Grobler, et~al.
\newblock {API} design for machine learning software: experiences from the
  scikit-learn project.
\newblock \emph{arXiv preprint arXiv:1309.0238}, 2013.

\bibitem[Cherubini et~al.(2004)Cherubini, Luciano, and
  Vecchiato]{cherubini2004}
U.~Cherubini, E.~Luciano, and W.~Vecchiato.
\newblock \emph{Copula methods in finance}.
\newblock John Wiley \& Sons, 2004.

\bibitem[Chwialkowski et~al.(2015)Chwialkowski, Ramdas, Sejdinovic, and
  Gretton]{Chwialkowski2015}
K.P. Chwialkowski, A.~Ramdas, D.~Sejdinovic, and A.~Gretton.
\newblock Fast two-sample testing with analytic representations of probability
  measures.
\newblock In \emph{Advances in Neural Information Processing Systems 28}, pages
  1981--1989. Nips, 2015.

\bibitem[Corder and Foreman(2009)]{Corder2009}
G.W. Corder and D.I. Foreman.
\newblock \emph{Nonparametric Statistics for Non-Statisticians: A Step-by-Step
  Approach}.
\newblock John Wiley \& Sons, Inc., 2009.

\bibitem[Dionisio et~al.(2006)Dionisio, Menezes, and Mendes]{Dionisio2006}
A.~Dionisio, R~Menezes, and D.A. Mendes.
\newblock Entropy-based independence test.
\newblock \emph{Nonlinear Dynamics}, 44:\penalty0 351--357, 2006.

\bibitem[Fern{\'a}ndez et~al.(2008)Fern{\'a}ndez, Gamero, and
  Garc{\'i}a]{Alba2008}
V.A. Fern{\'a}ndez, M.D.J. Gamero, and J.M. Garc{\'i}a.
\newblock A test for the two-sample problem based on empirical characteristic
  functions.
\newblock \emph{Computational Statistics \& Data Analysis}, 52:\penalty0
  3730--3748, 2008.

\bibitem[Fragoso and Neto(2015)]{Fragoso2015}
T.M. Fragoso and F.L. Neto.
\newblock Bayesian model averaging: A systematic review and conceptual
  classification.
\newblock \emph{arXiv preprint arXiv:1509.08864}, 2015.

\bibitem[Friedman et~al.(2001)Friedman, Hastie, and Tibshirani]{friedman2001}
J.~Friedman, T.~Hastie, and R.~Tibshirani.
\newblock \emph{The elements of statistical learning}, volume~1.
\newblock Springer series in statistics New York, 2001.

\bibitem[Genest and R{\'e}millard(2004)]{genest2004}
C.~Genest and B.~R{\'e}millard.
\newblock Test of independence and randomness based on the empirical copula
  process.
\newblock \emph{Test}, 13:\penalty0 335--369, 2004.

\bibitem[Gneiting and Raftery(2007)]{gneiting2007strictly}
T.~Gneiting and A.E. Raftery.
\newblock Strictly proper scoring rules, prediction, and estimation.
\newblock \emph{Journal of the American Statistical Association}, 102:\penalty0
  359--378, 2007.

\bibitem[Gressmann et~al.(2018)Gressmann, Kir{\'a}ly, Mateen, and
  Oberhauser]{gressmann2018probabilistic}
Frithjof Gressmann, Franz~J Kir{\'a}ly, Bilal Mateen, and Harald Oberhauser.
\newblock Probabilistic supervised learning.
\newblock \emph{arXiv preprint arXiv:1801.00753}, 2018.

\bibitem[Gretton and Gy{\u{A}}{\'s}rfi(2010)]{Gretton2010}
A.~Gretton and L.~Gy{\u{A}}{\'s}rfi.
\newblock Consistent nonparametric tests of independence.
\newblock \emph{Journal of Machine Learning Research}, 11:\penalty0 1391--1423,
  2010.

\bibitem[Gretton et~al.(2005)Gretton, Bousquet, Smola, and
  Scholkopf]{Gretton2005}
A.~Gretton, O.~Bousquet, A.~Smola, and B.~Scholkopf.
\newblock Measuring statistical dependence with hilbert-schmidt norms.
\newblock In \emph{ALT}, volume~16, pages 63--78. Springer, 2005.

\bibitem[Gretton et~al.(2008)Gretton, Fukumizu, Teo, Song, Sch{\"o}lkopf, and
  Smola]{Gretton2008}
A.~Gretton, K.~Fukumizu, C.H. Teo, L.~Song, B.~Sch{\"o}lkopf, and A.~Smola.
\newblock A kernel statistical test of independence.
\newblock In \emph{Advances in neural information processing systems}, pages
  585--592, 2008.

\bibitem[Gretton et~al.(2009)Gretton, Fukumizu, Harchaoui, and
  Sriperumbudur]{Gretton2009}
A.~Gretton, K.~Fukumizu, Z.~Harchaoui, and B.K. Sriperumbudur.
\newblock A fast, consistent kernel two-sample test.
\newblock \emph{Advances in Neural Information Processing Systems 22}, pages
  673--681, 2009.

\bibitem[Gretton et~al.(2012{\natexlab{a}})Gretton, Borgwardt, Rasch,
  Sch{\"o}lkopf, and Smola]{Gretton2012}
A~Gretton, K.M. Borgwardt, M.J. Rasch, B.~Sch{\"o}lkopf, and A.~Smola.
\newblock A kernel two-sample test.
\newblock \emph{Journal of Machine Learning Research}, 13:\penalty0 723--773,
  2012{\natexlab{a}}.

\bibitem[Gretton et~al.(2012{\natexlab{b}})Gretton, Sejdinovic, Strathmann,
  Balakrishnan, Pontil, Fukumizu, and Sriperumbudur]{GrettonNips2012}
A.~Gretton, D.~Sejdinovic, H.~Strathmann, S.~Balakrishnan, M.~Pontil,
  K.~Fukumizu, and B.K. Sriperumbudur.
\newblock Optimal kernel choice for large-scale two-sample tests.
\newblock In \emph{Advances in Neural Information Processing Systems 25}, pages
  1205--1213. Nips, 2012{\natexlab{b}}.

\bibitem[Guha and Mallick(2016)]{guha2016quantile}
Nilabja Guha and Bani~K Mallick.
\newblock Quantile graphical models: Bayesian approaches.
\newblock \emph{arXiv preprint arXiv:1611.02480}, 2016.

\bibitem[Hagberg et~al.(2008)Hagberg, Schult, and Swart]{networkx2017}
A.A. Hagberg, D.A. Schult, and P.J. Swart.
\newblock Exploring network structure, dynamics, and function using {NetworkX}.
\newblock In \emph{Proceedings of the 7th Python in Science Conference
  (SciPy2008)}, pages 11--15, 2008.

\bibitem[Jones et~al.(2001)Jones, Oliphant, and Peterson]{Scipy}
E.~Jones, T.~Oliphant, and P.~Peterson.
\newblock {SciPy}: Open source scientific tools for {Python}, 2001.
\newblock URL \url{http://www.scipy.org/}.

\bibitem[Koller and Friedman(2009)]{koller2009}
D.~Koller and N.~Friedman.
\newblock \emph{Probabilistic graphical models: principles and techniques}.
\newblock MIT press, 2009.

\bibitem[Lichman(2013)]{lichman2013}
M.~Lichman.
\newblock {UCI} machine learning repository, 2013.
\newblock URL \url{http://archive.ics.uci.edu/ml}.

\bibitem[Lopez-Paz and Oquab(2016{\natexlab{a}})]{Lopez2017}
D.~Lopez-Paz and M.~Oquab.
\newblock Revisiting classifier two-sample tests.
\newblock \emph{arXiv preprint arXiv:1610.06545}, 2016{\natexlab{a}}.

\bibitem[Lopez-Paz and Oquab(2016{\natexlab{b}})]{lopez2016revisiting}
D.~Lopez-Paz and M.~Oquab.
\newblock Revisiting classifier two-sample tests.
\newblock \emph{arXiv preprint arXiv:1610.06545}, 2016{\natexlab{b}}.

\bibitem[Nadeau(2003)]{nadeau2003}
Y.~Nadeau, C.and~Bengio.
\newblock Inference for the generalization error.
\newblock \emph{Machine Learning}, 52:\penalty0 239--281, 2003.

\bibitem[{OECD}(2017)]{oecd2017}
{OECD}.
\newblock {OECD} statistics: Key short-term economic indicators.
\newblock \url{http://stats.oecd.org/}, 2017.
\newblock Accessed: 2017-08-06.

\bibitem[Pedregosa et~al.(2011{\natexlab{a}})Pedregosa, Varoquaux, Gramfort,
  Michel, Thirion, Grisel, Blondel, Prettenhofer, Weiss, Dubourg, Vanderplas,
  Passos, Cournapeau, Brucher, Perrot, and Duchesnay]{scikit-learn}
F.~Pedregosa, G.~Varoquaux, A.~Gramfort, V.~Michel, B.~Thirion, O.~Grisel,
  M.~Blondel, P.~Prettenhofer, R.~Weiss, V.~Dubourg, J.~Vanderplas, A.~Passos,
  D.~Cournapeau, M.~Brucher, M.~Perrot, and E.~Duchesnay.
\newblock Scikit-learn: Machine learning in {P}ython.
\newblock \emph{Journal of Machine Learning Research}, 12:\penalty0 2825--2830,
  2011{\natexlab{a}}.

\bibitem[Pedregosa et~al.(2011{\natexlab{b}})Pedregosa, Varoquaux, Gramfort,
  Michel, Thirion, Grisel, Blondel, Prettenhofer, Weiss, Dubourg,
  et~al.]{pedregosa2011scikit}
Fabian Pedregosa, Ga{\"e}l Varoquaux, Alexandre Gramfort, Vincent Michel,
  Bertrand Thirion, Olivier Grisel, Mathieu Blondel, Peter Prettenhofer, Ron
  Weiss, Vincent Dubourg, et~al.
\newblock Scikit-learn: Machine learning in {P}ython.
\newblock \emph{Journal of Machine Learning Research}, 12\penalty0
  (Oct):\penalty0 2825--2830, 2011{\natexlab{b}}.

\bibitem[Raschka(2016)]{mlxtend}
S.~Raschka.
\newblock Mlxtend, 2016.
\newblock URL \url{http://dx.doi.org/10.5281/zenodo.594432}.

\bibitem[R{\'e}millard and Scaillet(2009)]{remillard2009}
B.~R{\'e}millard and O.~Scaillet.
\newblock Testing for equality between two copulas.
\newblock \emph{Journal of Multivariate Analysis}, 100:\penalty0 377--386,
  2009.

\bibitem[Schweizer and Wolff(1981)]{Schweizer1981}
B.~Schweizer and E.~F. Wolff.
\newblock On nonparametric measures of dependence for random variables.
\newblock \emph{The Annals of Statistics}, 9:\penalty0 879--885, 1981.

\bibitem[Sen et~al.(2017)Sen, Suresh, Shanmugam, Dimakis, and
  Shakkottai]{sen2017model}
Rajat Sen, Ananda~Theertha Suresh, Karthikeyan Shanmugam, Alexandros~G Dimakis,
  and Sanjay Shakkottai.
\newblock Model-powered conditional independence test.
\newblock In \emph{Advances in Neural Information Processing Systems}, pages
  2955--2965, 2017.

\bibitem[Sriperumbudur et~al.(2009)Sriperumbudur, Fukumizu, Gretton, Lanckriet,
  and Sch{\"o}lkopf]{sriperumbudur2009kernel}
Bharath~K Sriperumbudur, Kenji Fukumizu, Arthur Gretton, Gert~RG Lanckriet, and
  Bernhard Sch{\"o}lkopf.
\newblock Kernel choice and classifiability for {RKHS} embeddings of
  probability distributions.
\newblock In \emph{NIPS}, pages 1750--1758, 2009.

\bibitem[Zaremba et~al.(2013)Zaremba, Gretton, and Blaschko]{Zaremba2013}
W.~Zaremba, A.~Gretton, and M.~Blaschko.
\newblock B-test: A non-parametric, low variance kernel two-sample test.
\newblock \emph{Advances in Neural Information Processing Systems 26}, pages
  755--763, 2013.

\bibitem[Zhang et~al.(2012)Zhang, Peters, Janzing, and
  Sch{\"o}lkopf]{Zhang2012}
K.~Zhang, J.~Peters, D.~Janzing, and B.~Sch{\"o}lkopf.
\newblock Kernel-based conditional independence test and application in causal
  discovery.
\newblock \emph{arXiv preprint arXiv:1202.3775}, 2012.

\end{thebibliography}

\end{document}